\renewcommand{\tilde}{\widetilde}
\renewcommand{\hat}{\widehat}
\def \B {\mathbb{B}}
\def \B {\mathcal{B}}
\def \D {\mathcal{D}}
\def \E {\mathbb{E}}
\def \H {\mathcal{H}}
\def \I {\mathcal{I}}
\def \O {\mathcal{O}}
\def \R {\mathbb{R}}
\def \T {\top}
\def \X {\mathcal{X}}
\def \Y {\mathcal{Y}}
\def \a {\mathbf{a}}
\def \c {\mathbf{c}}
\def \e {\mathbf{e}}
\def \m {\mathbf{m}}
\def \p {\mathbf{p}}
\def \q {\mathbf{q}}
\def \u {\mathbf{u}}
\def \x {\mathbf{x}}
\def \y {\mathbf{y}}
\def \xh {\hat{\x}}
\def \xb {\bar{\x}}
\def \Rcal {\mathcal{R}}
\def \ellb {\boldsymbol{\ell}}
\def \base {\mathtt{base}\mbox{-}\mathtt{regret}}
\def \meta {\mathtt{meta}\mbox{-}\mathtt{regret}}
\def \epsilon {\varepsilon}
\let\norm\undefined 
\DeclarePairedDelimiter\norm{\lVert}{\rVert}
\DeclarePairedDelimiter\abs{\lvert}{\rvert}
\newcommand\inner[2]{\langle #1, #2 \rangle}
\newcommand \term[1]{\mathtt{term}~(\mathtt{#1})}
\def \p {\boldsymbol{p}}
\def \m {\boldsymbol{m}}
\DeclareMathOperator*{\poly}{poly}
\DeclareMathOperator*{\argmin}{arg\,min}
\newenvironment{proof}{\par\noindent{\textbf{Proof}\ }}{\hfill\BlackBox\\[2mm]}
\newtheorem{myThm}{Theorem}
\newtheorem{myLemma}{Lemma}
\theoremstyle{definition}
\newtheorem{myAssum}{Assumption}
\newtheorem{myRemark}{Remark}
\newtheorem{myInstance}{Instance}
\def\endenv{\hfill\raisebox{1pt}{$\P$}\smallskip}
\definecolor{wine_red}{RGB}{228,48,64}
\definecolor{DSgray}{cmyk}{0,1,0,0}
\newcommand \Div[2]{\D_{\Rcal}(#1, #2)}
\def \Rcal {\psi}
\def \q {\boldsymbol{q}}
\def \cb {\boldsymbol{c}}
\def \psib {\boldsymbol{\psi}}
\def \Psib {\boldsymbol{\Psi}}
\begin{document}

\title{Adaptivity and Non-stationarity: Problem-dependent Dynamic Regret for Online Convex Optimization}

\author{\name Peng Zhao \email zhaop@lamda.nju.edu.cn \\
       \name Yu-Jie Zhang \email zhangyj@lamda.nju.edu.cn \\
       \name Lijun Zhang \email zhanglj@lamda.nju.edu.cn \\
       \name Zhi-Hua Zhou \email zhouzh@lamda.nju.edu.cn \\
       \addr National Key Laboratory for Novel Software Technology\\
       Nanjing University, Nanjing 210023, China}

\editor{Csaba Szepesvari}

\maketitle

\begin{abstract} 
We investigate online convex optimization in non-stationary environments and choose \emph{dynamic regret} as the performance measure, defined as the difference between cumulative loss incurred by the online algorithm and that of any feasible comparator sequence. Let $T$ be the time horizon and $P_T$ be the path length that essentially reflects the non-stationarity of environments, the state-of-the-art dynamic regret is $\O(\sqrt{T(1+P_T)})$. Although this bound is proved to be minimax optimal for convex functions, in this paper, we demonstrate that it is possible to further enhance the guarantee for some easy problem instances, particularly when online functions are smooth. Specifically, we introduce novel online algorithms that can exploit smoothness and replace the dependence on $T$ in dynamic regret with \mbox{\emph{problem-dependent}} quantities: the variation in gradients of loss functions, the cumulative loss of the comparator sequence, and the minimum of these two terms. These quantities are at most $\O(T)$ while could be much smaller in benign environments. Therefore, our results are adaptive to the intrinsic difficulty of the problem, since the bounds are tighter than existing results for easy problems and meanwhile safeguard the same rate in the worst case. Notably, our proposed algorithms can achieve favorable dynamic regret with only \emph{one} gradient per iteration, sharing the same gradient query complexity as the static regret minimization methods. To accomplish this, we introduce the \emph{collaborative online ensemble} framework. The proposed framework employs a two-layer online ensemble to handle non-stationarity, and uses optimistic online learning and further introduces crucial correction terms to enable effective collaboration within the meta-base two layers, thereby attaining adaptivity. We believe the framework can be useful for broader problems.
\end{abstract}

\begin{keywords}
Online Learning, Online Convex Optimization, Dynamic Regret, Problem-dependent Bounds, Gradient Variation, Optimistic Online Mirror Descent, \mbox{Online Ensemble}
\end{keywords}


\section{Introduction}
In many real-world applications, data are inherently accumulated over time, and thus it is of great importance to develop a learning system that updates in an online fashion. Online Convex Optimization (OCO)~\citep{book'16:Hazan-OCO,arxiv'19:online-learning-modern-intro} is a powerful paradigm for learning in such scenarios, which can be regarded as an iterative game between a player and an adversary. At iteration $t$, the player chooses a decision vector $\x_t$ from a convex set $\X \subseteq \R^d$. Subsequently, the adversary discloses a convex function $f_t: \X \mapsto \R$, and the player incurs a loss denoted by $f_t(\x_t)$. The standard performance measure is the (static) \emph{regret}~\citep{ICML'03:zinkvich},
\begin{equation}
  \label{eq:static-regret}
  \mbox{S-Regret}_T = \sum_{t=1}^T f_t(\x_t) - \min_{\x\in \mathcal{X}} \sum_{t=1}^T f_t(\x),
\end{equation}
which is the difference between cumulative loss incurred by the online algorithm and that of the best decision in hindsight. The rationale behind such a metric is that the best fixed decision in hindsight is reasonably good over all the iterations. However, this might be too optimistic and may not hold in non-stationary environments, where data are evolving and the optimal decision is drifting over time. To address this limitation, \emph{dynamic regret} is proposed to compete with changing comparators $\u_1,\dots,\u_T\in\mathcal{X}$,
\begin{equation}
  \label{eq:universal-dynamic-regret}
      \mbox{D-Regret}_T(\u_1,\dots,\u_T) = \sum_{t=1}^T f_t(\x_t) -  \sum_{t=1}^T f_t(\u_t),
\end{equation}
which draws considerable attention recently~\citep{NIPS'18:Zhang-Ader,NIPS'20:sword,ICML'20:Ashok,JMLR'21:BCO,COLT'21:baby-exp-concave,NeurIPS:2021:Zhang:A,NeurIPS'22:efficient,JMLR'23:memory}. The measure is also called the \emph{universal} dynamic regret (or \emph{general} dynamic regret), in the sense that it gives a universal guarantee that holds against \emph{any} comparator sequence. Note that the static regret~\eqref{eq:static-regret} can be viewed as its special form by choosing comparators as the fixed best decision in hindsight. Moreover, a variant appeared frequently in the literature is called the \emph{worst-case dynamic regret}~\citep{OR'15:dynamic-function-VT,AISTATS'15:dynamic-optimistic,CDC'16:dynamic-sc,ICML'16:Yang-smooth,NIPS'16:Wei-non-stationary-expert,NIPS'17:zhang-dynamic-sc-smooth,NIPS'19:Wangyuxiang,AAAI'20:Jianjun,AISTATS'20:restart,AISTATS:2020:Zhang,UAI'20:simple,L4DC'21:sc_smooth}, defined as 
\begin{equation}
    \label{eq:worst-case-dynamic-regret}
    \mbox{D-Regret}_T(\x^*_1,\ldots,\x^*_T) = \sum_{t=1}^T f_t(\x_t) -  \sum_{t=1}^T f_t(\x^*_t),
\end{equation}
which specializes the general form~\eqref{eq:universal-dynamic-regret} with comparators $\u_t = \x_t^* \in \argmin_{\x\in \mathcal{X}} f_t(\x)$. Therefore, universal dynamic regret is very general and can include the static regret~\eqref{eq:static-regret} and the worst-case dynamic regret~\eqref{eq:worst-case-dynamic-regret} as special cases by different instantiations of comparators. We further remark that the worst-case dynamic regret is often too pessimistic, whereas the universal one is more adaptive to non-stationary environments. Actually, changes of online functions usually come from two sources: sampling randomness and environmental non-stationarity, with the latter being the primary concern in non-stationary online learning. Optimizing the worst-case dynamic regret can be problematic in certain scenarios. For instance, consider a stochastic optimization task where $f_t$'s are independently randomly sampled from the same distribution. Then, minimizing the worst-case dynamic regret is not suitable and can lead to overfitting~\citep{NIPS'18:Zhang-Ader}, as the minimizer of online function may significantly deviate from the minimizer of the expected function due to the sampling randomness. By contrast, since universal dynamic regret can accommodate any feasible comparator sequence, it can automatically  adapt to underlying distribution shifts.

There are many studies on the worst-case dynamic regret~\citep{OR'15:dynamic-function-VT,AISTATS'15:dynamic-optimistic,CDC'16:dynamic-sc,ICML'16:Yang-smooth,NIPS'17:zhang-dynamic-sc-smooth,ICML'18:zhang-dynamic-adaptive,NIPS'19:Wangyuxiang,UAI'20:simple,L4DC'21:sc_smooth}, but only few results are known for the universal dynamic regret.~\citet{ICML'03:zinkvich} shows that online gradient descent (OGD) with a step size $\eta>0$ achieves an $\O(({1+P_T})/{\eta}+\eta T)$ universal dynamic regret, where  
\begin{equation}
  \label{eq:path-length}
  P_T = \sum_{t=2}^{T} \norm{\u_{t} - \u_{t-1}}_2
\end{equation}
is the path length of comparators $\u_1,\ldots,\u_T$ and thus reflects the non-stationarity of environments. If the path length $P_T$ were known, one could choose the optimal step size $\eta_* = \Theta(\sqrt{(1+P_T)/T})$ and attain an $\O(\sqrt{T(1+P_T)})$ dynamic regret. However, this path length quantity is hard to know since the universal dynamic regret aims to provide guarantees against any feasible comparator sequence. The step size $\eta = \Theta (1/\sqrt{T})$ commonly used in static regret would lead to an inferior $\O(\sqrt{T}(1+P_T))$ bound, which exhibits a large gap from the favorable bound with an oracle step size tuning.~\citet{NIPS'18:Zhang-Ader} resolve the issue by proposing a novel online algorithm to search the optimal step size $\eta_*$, attaining an $\O(\sqrt{T(1+P_T)})$ universal dynamic regret, and they also establish an $\Omega(\sqrt{T(1+P_T)})$ lower bound to show the minimax optimality.

Although the rate is minimax optimal for convex functions, we would like to design algorithms with \emph{problem-dependent} regret guarantees beyond the worst-case analysis~\citep{roughgarden_2021}. Specifically, we aim to enhance the guarantee for some easy problem instances, particularly when the online functions are smooth, by replacing the dependence on $T$ by certain problem-dependent quantities that are $\O(T)$ in the worst case while could be much smaller in benign environments. For static regret mininimization, existing studies can attain such results like small-loss bounds~\citep{NIPS'10:smooth} and gradient-variation bounds~\citep{COLT'12:variation-Yang}. Thus, a natural question arises \emph{whether it is possible to achieve similar problem-dependent guarantees for universal dynamic regret?}

\paragraph{Our results.} In this paper, extending
our preliminary conference version~\citep{NIPS'20:sword}, we provide an affirmative answer by designing online algorithms with problem-dependent dynamic regret bounds. Specifically, we focus on the following two problem-dependent quantities: the gradient variation of online functions $V_T$, and the cumulative loss of the comparator sequence $F_T$, defined as
\begin{equation}
  \label{eq:gradient-variation}
  V_T = \sum_{t=2}^{T} \sup_{\x\in \X} \norm{\nabla f_t(\x) - \nabla f_{t-1}(\x)}_2^2, \text{  and  }~F_T = \sum_{t=1}^{T} f_t(\u_t).
\end{equation}
The two problem-dependent quantities are both at most $\O(T)$ under standard assumptions of online learning, while could be much smaller in easier problem instances. We propose two novel online algorithms called \textsf{Sword} and \textsf{Sword++} (``Sword'' is short for \underline{S}moothness-a\underline{w}are \underline{o}nline lea\underline{r}ning with \underline{d}ynamic regret) that are suitable for different feedback models. Our algorithms are online ensemble methods~\citep{book'12:ensemble-zhou,thesis:zhao2021-eng}, which admit a two-layer structure with a meta-algorithm running over a group of base-learners. We prove that they enjoy an $\O(\sqrt{(1+P_T + \min\{V_T,F_T\})(1+P_T)})$ dynamic regret, achieving gradient-variation and small-loss bounds simultaneously. Compared to the $\O(\sqrt{T(1+P_T)})$ minimax rate, our result replaces the dependence on $T$ by the problem-dependent quantity $P_T + \min\{V_T,F_T\}$. Our bounds become much tighter when the problem is easy, such as when both $P_T$ and $V_T$ (or $F_T$) exhibit sublinear growth in $T$. Meanwhile, our regret bounds can safeguard the same guarantee in the worst case. Hence, our results are adaptive to the intrinsic difficulty of problem instances as well as the non-stationarity of environments. 

Our first algorithm, \textsf{Sword}, achieves the favorable problem-dependent guarantees under the \emph{multi-gradient feedback model}, where the player can query gradient information multiple times at each round. This algorithm is conceptually simple, yet it requires a gradient query complexity of $\O(\log T)$ at each round. Our second algorithm, \textsf{Sword++}, improves upon this by necessitating only \emph{one} gradient per iteration, despite using a two-layer online ensemble structure. Therefore, Sword++ is not only computationally efficient but also more attractive due to its reduced feedback requirements --- it is particularly suitable for the \emph{one-gradient feedback model}, in which the player receives only the gradient $\nabla f_t(\x_t)$ after submitting the decision $\x_t$. Therefore, Sword++ has the potential to be extended to more constrained bandit feedback models.

\paragraph{Technical contributions.} Note that existing studies have demonstrated that the worst-case dynamic regret can benefit from smoothness~\citep{ICML'16:Yang-smooth,NIPS'17:zhang-dynamic-sc-smooth,L4DC'21:sc_smooth}. However, their analyses do not apply to our concerned universal dynamic regret, because we cannot exploit the optimality condition of comparators $\u_1,\ldots,\u_T$, in stark contrast with the worst-case dynamic regret analysis. To address this, we propose an adaptive \emph{online ensemble} method to hedge non-stationarity while extracting adaptivity. Our method incorporates a meta-base two-layer ensemble to hedge the non-stationarity and employs optimistic online learning for adaptive reuse of historical gradient information. Two crucial novel ingredients are designed to achieve favorable problem-dependent guarantees.
\begin{itemize}
  \item We introduce optimistic online mirror descent (\textsc{Optimistic OMD}) as a unified building block for the algorithm design of dynamic regret minimization at both meta and base levels.\footnote{For the meta-algorithm, we only care about its static regret, which is essentially a special case of the universal dynamic regret. When the meta-algorithm implements Hedge-style updates with changing learning rates, it aligns more with optimistic FTRL (Follow-The-Regularized-Leader) instead of optimistic OMD. However, we prioritize discussing optimistic OMD (hence using a fixed learning rate), intentionally to better illustrate the core ideas of our regret analysis and algorithm design.} We present generic and completely modular analysis for the dynamic regret of Optimistic OMD, where the \emph{negative term} is essential especially for achieving problem-dependent dynamic regret guarantees.
  \item We implement an adaptive online ensemble method that combines optimistic online learning for attaining adaptivity with a meta-base structure to hedge non-stationarity. A key innovation is the emphasis on \emph{collaboration} within the online ensemble. In our \emph{collaborative online ensemble} framework, we introduce a novel decision-deviation correction term in algorithm design and simultaneously exploit the negative term in regret analysis, facilitating effective collaboration within two layers, which is crucial for achieving desired problem-dependent bounds with only one gradient per iteration.
\end{itemize}
We emphasize that these ingredients are particularly important for achieving gradient-variation dynamic regret, which we will demonstrate to be more fundamental than the small-loss bound. In particular, our proposed Sword++ algorithm effectively utilizes negative terms and introduces correction terms to ensure effective collaboration within the two layers. The overall framework of collaborative online ensemble is summarized in Section~\ref{sec:framework-collaborative-OE}, and we believe that the proposed framework has the potential for broader online learning problems.

\paragraph{Organization.} The rest is structured as follows. Section~\ref{sec:related-work} briefly reviews  related works. In Section~\ref{sec:problem-setup-OMD}, we introduce the problem setup and the optimistic online mirror descent framework, where a general dynamic regret analysis is provided. Section~\ref{sec:gradient-variation} establishes the gradient-variation dynamic regret bounds under two different gradient feedback models. Section~\ref{sec:framework-collaborative-OE} illustrates our proposed collaborative online ensemble framework, which is very general and useful for attaining problem-dependent dynamic regret. Section~\ref{sec:implications} provides some additional results regarding implications, significance and a lower bound. The major proofs are presented in Section~\ref{sec:appendix-analysis}. Furthermore, Section~\ref{sec:experiemnts} reports the experiments. Finally, we conclude the paper in Section~\ref{sec:conclusion}. Omitted proofs are provided in the appendix.

\section{Related Work}
\label{sec:related-work}
In this section, we present a brief overview of both static and dynamic regret minimization in the context of online convex optimization. Additionally, we provide more discussions on the subsequent studies after the preprint of our manuscript is publicly available.

\subsection{Static Regret}
\label{sec:related-work-static-regret}
Static regret has been extensively studied in online convex optimization. Let $T$ be the time horizon and $d$ be the dimension, there exist online algorithms with static regret bounded by $\O(\sqrt{T})$, $\O(d\log T)$, and $\O(\log T)$ for convex, exponentially concave, and strongly convex functions, respectively~\citep{ICML'03:zinkvich,journals/ml/HazanAK07}. These results are proved to be minimax optimal~\citep{conf/colt/AbernethyBRT08}. More results can be found in the seminal books~\citep{book'12:Shai-OCO,book'16:Hazan-OCO} and references therein.

In addition to exploiting the convexity of functions, there are studies improving static regret by incorporating smoothness, whose main proposal is to replace the dependence on $T$ by problem-dependent quantities. Such problem-dependent bounds enjoy many benign properties, in particular, they can safeguard the worst-case minimax rate yet can be much tighter in easier problem instances. There are two representative problem-dependent bounds --- small-loss bound~\citep{NIPS'10:smooth} and gradient-variation bound~\citep{COLT'12:variation-Yang}. 

Small-loss bounds are first introduced in the context of prediction with expert advice~\citep{journals/iandc/LittlestoneW94,JCSS'97:boosting}, which replace the dependence on $T$ by cumulative loss of the best expert. Later,~\citet{NIPS'10:smooth} show that in the online convex optimization setting, OGD with a certain step size scheme can achieve an $\O(\sqrt{1 + F^*_T})$ small-loss regret bound when the online convex functions are smooth and non-negative, where $F^*_T$ is the cumulative loss of the best decision in hindsight, namely, $F^*_T = \sum_{t=1}^{T} f_t(\x^*)$ with $\x^*$ chosen as the offline minimizer. The key technical ingredient is to exploit the self-bounding property of smooth functions. Gradient-variation bounds are introduced by~\citet{COLT'12:variation-Yang}, rooting in the development of second-order bounds for prediction with expert advice~\citep{COLT'05:second-order-Hedge} and online convex optimization~\citep{COLT'08:Hazan-variation}. For convex and smooth functions, \citet{COLT'12:variation-Yang} establish an $\O(\sqrt{1 + V_T})$ gradient-variation regret bound, where $V_T = \sum_{t=2}^{T} \sup_{\x\in \X} \norm{\nabla f_t(\x) - \nabla f_{t-1}(\x)}_2^2$ measures the cumulative gradient variation. Gradient-variation bounds are particularly favored in slowly changing environments where online functions evolve gradually. Furthermore, the techniques developed for gradient-variation regret bounds have a profound connection to many other learning problems, including repeated games~\citep{NIPS'15:fast-rate-game} and stochastic optimization~\citep{NIPS'22:SEA-model}.

In addition, problem-dependent static regret bounds are also studied in the bandit online learning setting, including gradient-variation bounds for two-point bandit convex optimization~\citep{COLT'13:Chiang}, as well as small-loss bounds for multi-armed bandits~\citep{ALT'06:Green,COLT'18:adaptive-bandits,NIPS'20:unbiased-bandits}, linear bandits~\citep{NIPS'20:unbiased-bandits}, semi-bandits~\citep{COLT'15:Neu-small-bandits}, graph bandits~\citep{COLT'18:Lykouris-small-loss,COLT'20:small-graph}, and contextual bandits~\citep{ICML'18:Maga,NIPS'21:dylan-first-order}, etc.

Finally, we mention that problem-dependent regret minimization falls under the wider umbrella of \emph{adaptive online convex optimization}~\citep{COLT'10:adaptiveOCO,COLT'10:AdaGrad}, with more recent explorations discussed in~\citep{JMLR'17:adaptive-online,TCS'20:unified-framework,NIPS'20:Ashok-matrixRegret} and the monograph~\citep{arxiv'19:online-learning-modern-intro}. However, in addition to developing problem-dependent bounds, this field also covers \emph{data-dependent} bounds. A caveat is that these data-dependent bounds (or called ``\emph{algorithm-dependent} bounds'') might be influenced not only by the complexity of the problem instance but also by the dynamics of the algorithm itself. This can be sometimes undesired, particularly when the data-dependent quantity is not appropriated defined, potentially leading to a misleading representation of the learning problem's difficulty.
For instance, if the regret upper bound depends on a data-dependent quantity like $\sum_{t=2}^T \norm{\nabla f_t(\x_t) - \nabla f_{t-1}(\x_{t-1})}_2^2$ (rather than the problem-dependent quantity $\sum_{t=2}^T \sup_{\x \in \X}\norm{\nabla f_t(\x) - \nabla f_{t-1}(\x)}_2^2$), the regret bound becomes affected by the algorithm's decision sequence $\x_1,\ldots,\x_T$. This will misleadingly lead to large bounds in scenarios where the function sequence is constant ($f_1=\ldots=f_T = f$) but the decision sequence is unstable.

\subsection{Dynamic Regret}
\label{sec:related-work-dynamic-regret}
Dynamic regret enforces the player to compete with time-varying comparators and thus is favored in online learning in open and non-stationary environments~\citep{book/mit/sugiyama2012machine,TKDE'21:DFOP,NSR'22:Zhou-OpenML}. The notion of dynamic regret is sometimes referred to as tracking regret/switching regret/shifting regret in the prediction with expert advice setting~\citep{journals/ml/HerbsterW98,JMLR'01:Herbster,JMLR'02:bousquet-dynamic,conf/nips/Cesa-BianchiGLS12,ICML'16:GyorgyS-shiftregret}. It is known that in the worst case, a sublinear dynamic regret is not attainable unless imposing certain regularities on the comparator sequence or the function sequence~\citep{OR'15:dynamic-function-VT,AISTATS'15:dynamic-optimistic}. This paper focuses on the most common regularity called the \emph{path length} $P_T = \sum_{t=2}^{T} \norm{\u_{t-1} - \u_{t}}_2$ introduced by~\citet{ICML'03:zinkvich}, which measures fluctuation of the comparators. We simply focus on the Euclidean norm throughout this paper, and it is straightforward to extend the notions and results to general primal-dual norms. Other regularities include the squared path length $S_T = \sum_{t=2}^{T} \norm{\u_{t-1} - \u_{t}}_2^2$ introduced by~\citet{NIPS'17:zhang-dynamic-sc-smooth}, and the function variation introduced by~\citet{OR'15:dynamic-function-VT} that measures the cumulative variation with respect to the function value and is defined as $V^f_T = \sum_{t=2}^{T} \sup_{\x\in \X} \abs{f_{t-1}(\x) - f_t(\x)}$.

There are two kinds of dynamic regret notions in the previous studies. The universal dynamic regret, as defined in~\eqref{eq:universal-dynamic-regret}, aims to compare with any feasible comparator sequence, while the worst-case dynamic regret defined in~\eqref{eq:worst-case-dynamic-regret} specifies the comparator sequence to be the sequence of minimizers of online functions. In the following, we present the related works respectively. Notice that we will use notations  $P_T$ and $S_T$ for path length and squared path length of the comparator sequence $\{\u_t\}_{t=1,\ldots,T}$, while adopt the notations $P_T^*$ and $S_T^*$ for that of the sequence $\{\x_t^*\}_{t=1,\ldots,T}$ where $\x^*_t$ is one of minimizers of the online function $f_t$, namely, $P_T^* = \sum_{t=2}^{T} \norm{\x^*_{t-1} - \x^*_{t}}_2$ and $S_T^* = \sum_{t=2}^{T} \norm{\x^*_{t-1} - \x^*_{t}}^2_2$.

\paragraph{Universal dynamic regret.} The pioneering work of~\citet{ICML'03:zinkvich} demonstrates that online gradient descent (OGD) enjoys an $\O(\sqrt{T}(1+P_T))$ universal dynamic regret, which holds against any feasible comparator sequence. Nevertheless, the result is far from the $\Omega(\sqrt{T(1+P_T)})$ lower bound established by~\citet{NIPS'18:Zhang-Ader}, who further close the gap by proposing a novel online algorithm that attains an optimal rate of $\O(\sqrt{T(1+P_T)})$ for convex functions~\citep{NIPS'18:Zhang-Ader}. Our work further exploits the easiness of the problem instances and achieves problem-dependent regret guarantees, hence better than the minimax rate. \citet{JMLR'21:BCO} study the universal dynamic regret for bandit convex optimization under both one-point and two-point feedback models. Concurrent to our conference version paper~\citep{NIPS'20:sword}, \citet{ICML'20:Ashok} proposes a novel online algorithm that achieves the same order of minimax optimal dynamic regret for convex functions as~\citep{NIPS'18:Zhang-Ader}, yet without relying on using a meta-algorithm hedging over a group of base learners. Instead, their method employs the combination strategy developed in parameter-free online learning~\citep{COLT'18:black-box-reduction,COLT'19:easy-combine}. Note that it may be possible to modify the algorithm of~\citet{ICML'20:Ashok} to achieve small-loss bounds; however, attaining gradient-variation bounds would be generally challenging, especially under the one-gradient feedback model. More specifically, it is not hard to modify their framework to incorporate optimistic online learning, but one usually needs to exploit additional negative terms to convert the optimistic quantity $\norm{\nabla f_t(\x_t) - \nabla f_{t-1}(\x_{t-1})}_2^2$ to gradient variation $\sup_{\x \in \X}\norm{\nabla f_t(\x) - \nabla f_{t-1}(\x)}_2^2$, to eliminate the difference between decisions $\x_t$ and $\x_{t-1}$. Our algorithms, based on the collaborative online ensemble framework, involve a careful exploitation of negative terms in the regret analysis of both meta and base algorithms, alongside introducing correction terms in the algorithm design. However, as far as we can see, with only one gradient feedback per round, it is challenging for the framework of~\citet{ICML'20:Ashok} to achieve the gradient-variation bound due to the lack of negative terms in their regret analysis.

\paragraph{Worst-case dynamic regret.} There are many efforts devoted to studying the worst-case dynamic regret. \citet{ICML'16:Yang-smooth} prove that OGD enjoys an $\O(\sqrt{T(1 + P_T^*)})$ worst-case dynamic regret for convex functions when the path length $P_T^*$ is known. For strongly convex and smooth functions,~\citet{CDC'16:dynamic-sc} show that an $\O(P_T^*)$ dynamic regret is achievable, and~\citet{NIPS'17:zhang-dynamic-sc-smooth} further propose the online multiple gradient descent algorithm with an $\O(\min\{P_T^*,S_T^*\})$ guarantee. \citet{ICML'16:Yang-smooth} show that $\O(P_T^*)$ rate is attainable for convex and smooth functions, provided that all the minimizers $\x_t^*$'s lie in the interior of the domain $\X$. The above results mainly use the (squared) path length as the non-stationarity measure, which measures the cumulative variation of the comparator sequence. In another line of research, researchers use the variation with respect to the function values as the measure.~\citet{OR'15:dynamic-function-VT} show that OGD with a restarting strategy attains an $\O(T^{2/3}{V_T^{f 1/3})}$ regret for convex functions when the function variation $V^f_T$ is available, which is improved to $\O(T^{1/3}{V_T^{f 2/3}})$ for $1$-dim square loss~\citep{NIPS'19:Wangyuxiang}. \citet{OR'19:V_T-pq} extend the results of~\citet{OR'15:dynamic-function-VT} to more general function-variation measures capable of capturing local temporal and spatial changes. To take advantage of variations in both comparator sequences and function values, \citet{L4DC'21:sc_smooth} provide an improved analysis for online multiple gradient descent and prove an $\O(\min\{P_T^*,S_T^*, V_T^f\})$ worst-case dynamic regret for strongly convex and smooth functions. For convex and smooth functions, it is also demonstrated that a simple greedy strategy, i.e., $\x_{t+1} = \x_t^* \in \argmin_{\x \in \X} f_t(\x)$, can effectively optimize the worst-case dynamic regret, as shown in~\citep[Section 4.2]{L4DC'21:sc_smooth}.

\subsection{More Discussions}

\paragraph{Subsequent works for dynamic regret minimization.} There are many developments for dynamic regret minimization after our work became publicly available~\citep{JMLR:sword++}, and we briefly mention a few here. For exp-concave or strongly convex online functions, optimal dynamic regret can be obtained by algorithms minimizing strongly adaptive regre~\citep{COLT'21:baby-exp-concave,AISTATS'22:sc-proper}. Dynamic regret of decision-theoretic online learning is substantially explored, including online non-stochastic control~\citep{AISTATS'22:scream,AISTATS'22:adaptive-control,NeurIPS'22:LQR-dynamic}, online MDPs~\citep{NIPS'20:dynamic-fei,ICML'22:mdp,NeurIPS'23:linearMDP}, and online games~\citep{ICML'22:TVgame,ICML'23:SMontoneGame,ICLR'23:meta-games}. Furthermore, related techniques have been applied to online label shift~\citep{NeurIPS'22:label_shift} and online covariate shift~\citep{NeurIPS'23:covariate_shift}. The efficiency issue regarding the projection complexity of two-layer online ensemble is considered in~\citep{NeurIPS'22:efficient}.

\paragraph{Subsequent works employing the collaborative online ensemble.} A pivotal technique in our paper is the collaborative online ensemble framework, which effectively facilitates the \emph{collaboration} between meta and base layers by incorporating correction terms in the algorithm design and exploiting negative terms in the regret analysis. We have found this collaboration  crucial for a variety of problems involving deploying a two-layer structure. We mention two particular examples raised in the literature after our result became available, including game theory~\citep{ICML'22:TVgame,ICML'23:SMontoneGame} and an intermediate model for bridging stochastic and adversarial optimization~\citep{ICML'23:OMD4SEA,arxiv'23:OMD4SEA-journal}.
\begin{itemize}
	\item \citet{ICML'22:TVgame} investigate time-varying zero-sum games, introducing individual regret, dynamic NE-regret, and duality gap as the joint performance measures to guide algorithmic design. To handle multiple performance requirements, they deploy a two-layer algorithm for each player, demonstrating that the overall algorithm enjoys favorable regret guarantees. A vital component in their algorithm is to facilitate  collaborations between the meta and base layers. This is again achieved by injecting correction terms in base level and exploiting negative terms in regret analysis, as well as leveraging the unique structure of the zero-sum games. These results are further generalized to strongly monotone games~\citep{ICML'23:SMontoneGame}. 
	\item \citet{ICML'23:OMD4SEA} investigate the Stochastically Extended Adversarial (SEA) model, initially proposed in~\citep{NIPS'22:SEA-model}, serving as an intermediate model to bridge stochastic and adversarial convex optimization. They enhance the theoretical guarantees of~\citep{NIPS'22:SEA-model} by a careful analysis using  optimistic online mirror descent. Furthermore, they generalize the results by considering dynamic regret minimization for the SEA model, accommodating potential distribution shifts. Consequently, they implement a two-layer algorithm similar to Sword++, ensuring a favorable regret. Note that in the SEA model, due to the dependence issue of the random variables, it is necessary to use the collaborative online ensemble like \textsf{Sword++}. As highlighted in~\citep[Remark 10]{arxiv'23:OMD4SEA-journal}, deploying an algorithm similar to \textsf{Sword} to the SEA model will fail to yield desired regret bounds, primarily due to the dependence issue introduced by employing intermediate decisions in the meta-base structure.
\end{itemize}

\section{Problem Setup and Algorithmic Framework}
\label{sec:problem-setup-OMD}
In this section, we first formally state the problem setup, then introduce the foundational algorithmic framework for dynamic regret minimization, and finally list several assumptions that might be used in the theoretical analysis.

\subsection{Problem Setup}
\label{sec:problem}
Online Convex Optimization (OCO) can be modeled as an iterated game between the player and the environments. At iteration $t \in [T]$, the player first chooses the decision $\x_t$ from a convex feasible set $\X \subseteq \R^d$, then the environments reveal the loss function $f_t: \X \mapsto \R$ and the player suffers the loss $f_t(\x_t)$ and observes a certain information about the function $f_t(\cdot)$.\footnote{One may also understand this by defining $f_t$ over the entire $\R^d$ space while  constraining the decisions to the feasible domain $\X \subseteq \R^d$.} According to the revealed information, the online learning problems are typically classified into \emph{full-information} online learning and \emph{partial-information} online learning (or sometimes called \emph{bandit} online learning). In this paper, we focus on the full-information one, which can be further categorized into the following two setups:
\begin{itemize}
    \item[(i)] \textbf{multi-gradient feedback}: the player can access the entire gradient function $\nabla f_t(\cdot)$ and thus can evaluate the gradient multiple times at each round;
    \item[(ii)] \textbf{one-gradient feedback}: the player can observe the gradient information $\nabla f_t(\x_t)$ after submitting the decision $\x_t$ at each round.
\end{itemize}
In Section~\ref{sec:solution1}, we develop the \textsf{Sword} algorithm, which achieves the gradient-variation dynamic regret under the multi-gradient feedback model. In Section~\ref{sec:solution2}, we present an improved algorithm called \textsf{Sword++} that can achieve the same dynamic regret guarantee (up to constants) under the more challenging one-gradient feedback model. 

To handle non-stationary environments, we focus on the \emph{dynamic regret} measure, which compares the online algorithm to a sequence of time-varying comparators $\u_1,\ldots,\u_T \in \X$, as defined in~\eqref{eq:universal-dynamic-regret}. An upper bound of dynamic regret should be a function of comparators, and typically the bound depends on some regularities that measure the fluctuation of the comparator sequence, such as the path length $P_T = \sum_{t=2}^{T} \norm{\u_t - \u_{t-1}}_2$. Throughout the paper, we focus on the Euclidean norm for simplicity, and it is straightforward to extend our results to general primal-dual norms.

In addition to the regret measure, we further consider the \emph{gradient query complexity}. Note that algorithms designed for the multi-gradient feedback model may query the gradients for multiple times at each round. However, most algorithms designed for the static regret minimization only require \emph{one gradient per iteration}, namely, using $\nabla f_t(\x_t)$ for the next update only. Therefore, it is more desirable to achieve the favorable regret guarantees under the one-gradient feedback model. In other words, our aim is to develop first-order methods for dynamic regret minimization that require only one gradient query per iteration.

\subsection{Optimistic Online Mirror Descent}
\label{sec:OMD-framework}
We employ the algorithmic framework of Optimistic Online Mirror Descent (\textsc{Optimistic OMD})~\citep{COLT'12:variation-Yang,conf/colt/RakhlinS13} as a general building block for designing algorithms for non-stationary online learning. Optimistic OMD is an algorithmic realization of optimistic online learning. Compared to the standard online learning setup, the player receives an additional element at each round: an optimistic vector $M_t \in \R^d$. This vector acts as a predictive hint or an optimistic estimate of the upcoming gradient, thereby called ``optimistic vector'' or simply ``optimism''. Optimistic OMD starts from the initial point $\xh_1 \in \X$ and performs the following two-step updates at each round:
\begin{equation}
    \label{eq:optimistic-OMD-fix}
    \begin{split}
        \x_{t} = {} & \argmin_{\x \in \X}~\big\{\eta_{t} \inner{M_{t}}{\x} + \D_{\psi}(\x,\xh_{t})\big\}, \vspace{2mm}\\
    \xh_{t+1} = {} & \argmin_{\x \in \X}~\big\{\eta_t \inner{\nabla f_{t}(\x_{t})}{\x} + \D_{\psi}(\x,\xh_{t})\big\},
    \end{split}
\end{equation}
which firstly updates by the optimistic vector $M_t$ and then updates by the received gradient $\nabla f_t(\x_t)$. In above, $\eta_t > 0$ is a (potentially) time-varying step size, and $\Div{\cdot}{\cdot}$ denotes the Bregman divergence associated with the regularizer $\psi$ defined as $\Div{\x}{\y} = \psi(\x) - \psi(\y) - \inner{\nabla \psi(\y)}{\x -\y}$. We may assume the regularizer to be $\sigma$-strongly convex with respect to the norm $\|\cdot\|$, i.e., $\psi(\y) \geq \psi(\x) + \inner{\nabla \psi(\x)}{\y - \x} + \frac{\sigma}{2} \norm{\y-\x}^2$ holds for any $\x,\y \in \X$. We have the following general result regarding dynamic regret of optimistic OMD.
\begin{myThm}
\label{thm:dynamic-regret-OMD-generic} 
Suppose that the regularizer $\psi: \X \mapsto \R$ is 1-strongly convex with respect to the norm $\|\cdot\|$, and let $\|\cdot\|_*$ be the dual norm of $\|\cdot\|$. The dynamic regret of \textsc{Optimistic OMD} whose update rule is specified in~\eqref{eq:optimistic-OMD-fix} is bounded as follows: 
\begin{equation}
\label{eq:dynamic-regret-OMD-generic}
\begin{split}
\sum_{t=1}^{T} f_t(\x_t) - \sum_{t=1}^{T} f_t(\u_t) \leq \sum_{t=1}^{T}\eta_t \norm{\nabla f_t(\x_t) - M_t}_*^2 {} & +  \sum_{t=1}^{T} \frac{1}{\eta_t} \Big( \Div{\u_t}{\xh_t} - \Div{\u_t}{\xh_{t+1}}\Big) \\
{} & -  \sum_{t=1}^{T} \frac{1}{\eta_t}\Big( \Div{\xh_{t+1}}{\x_t} + \Div{\x_t}{\xh_{t}} \Big),
\end{split}
\end{equation}
which holds for any comparator sequence $\u_1,\ldots,\u_T \in \X$.
\end{myThm}

\begin{myRemark}
\label{remark:OMD}
The dynamic regret upper bound in Theorem~\ref{thm:dynamic-regret-OMD-generic} consists of three terms:
\begin{enumerate}
\item[(i)] the first term $\sum_{t=1}^{T} \eta_t \norm{\nabla f_t(\x_t) - M_t}_*^2$ is the \emph{adaptivity} term that measures the deviation between the gradient and optimistic vector;
\item[(ii)] the second term can be restructured as $\sum_{t=2}^{T} \big(\frac{1}{\eta_t} \Div{\u_t}{\xh_t} - \frac{1}{\eta_{t-1}}\Div{\u_{t-1}}{\xh_{t}}\big)$, hence  reflecting the \emph{non-stationarity} of environments;
\item[(iii)] the last one $- \sum_{t=1}^{T} \frac{1}{\eta_t}\big( \Div{\xh_{t+1}}{\x_t} + \Div{\x_t}{\xh_{t}} \big)$ is the \emph{negative} term, which can be greatly useful for problem-dependent bounds, particularly the gradient-variation one.
\end{enumerate}
Moreover, we emphasize that the above regret guarantee is very general due to the flexibility in choosing the regularizer $\psi$ and comparators $\u_1,\ldots,\u_T$ as well as the setting of optimistic vectors $M_1,\ldots,M_T$. For example, by choosing the negative-entropy regularizer and competing with the best fixed prediction, the result recovers the static regret bound of Optimistic Hedge~\citep{NIPS'15:fast-rate-game}; by choosing the Euclidean regularizer and setting the optimistic vectors as all zero vectors as well as competing with time-varying compactors, it recovers the dynamic regret bound of Online Gradient Descent~\citep{ICML'03:zinkvich}. The versatility of this optimistic OMD framework motivates us to use it as a unified building block for both algorithm design and theoretical analysis.
\endenv \end{myRemark}

\subsection{Assumptions}
\label{sec:assumption}
In this part, we list several common assumptions that might be used in the theorems.
\begin{myAssum}
\label{assumption:bounded-gradient}
The norm of the gradients of online functions over the domain $\X$ is bounded by $G$, i.e., $\norm{\nabla f_t(\x)}_2 \leq G$, for all $\x \in \X$ and $t \in [T]$.
\end{myAssum}

\begin{myAssum}
\label{assumption:bounded-domain}
The domain $\X \subseteq \R^d$ contains the origin $\mathbf{0}$, and the diameter of the domain $\X$ is at most $D$, i.e., $\norm{\x -\x'}_2 \leq D$ for any $\x, \x' \in \X$.
\end{myAssum}

\begin{myAssum}
\label{assumption:smoothness}
All the online functions are $L$-smooth, i.e., $\norm{\nabla f_t(\x)-\nabla f_t(\x')}_2 \leq L \norm{\x-\x'}_2$  for any $\x, \x' \in \R^d$ and $t \in [T]$.
\end{myAssum}

\begin{myAssum}
\label{assumption:non-negative}
All the online functions are non-negative over $\R^d$.
\end{myAssum}

We have the following remarks regarding the assumptions. The general dynamic regret analysis of Optimistic OMD (Theorem~\ref{thm:dynamic-regret-OMD-generic}) does not require the smoothness assumption. Nevertheless, this assumption is crucial for achieving problem-dependent dynamic regret bounds. In fact, smoothness has been demonstrated to be essential even in the static regret analysis for first-order methods to achieve gradient-variation bounds, as evidenced in Lemma~9 of~\citet{COLT'12:variation-Yang} and Theorem~1 of~\citet{ML'14:variation-Yang}. Therefore, throughout the paper we focus on the problem-dependent dynamic regret of convex and smooth functions. Note that Assumption~\ref{assumption:non-negative} requires non-negativity outside the domain $\X$, which is a precondition for establishing the self-bounding property for smooth functions, see Lemma~3.1 of~\citet{NIPS'10:smooth} and Lemma 13.2 of~\citet{lecture-note:Ashok}. 

Finally, we mention that following previous studies~\citep{ALT'12:closer-adaptive-regret,COLT'15:Luo-AdaNormalHedge}, we treat double logarithmic factors in $T$ as a constant. More concretely, our usage of the $\O(\cdot)$-notation emphasizes the dependence on the time horizon $T$ while hiding the $\log \log T$ factors, and also highlights the dependence on path length $P_T$, as well as the problem-dependent gradient-variation quantity $V_T$ and small-loss quantity $F_T$.

\section{Gradient-Variation Dynamic Regret}
\label{sec:gradient-variation}
Our paper aims to develop online algorithms that can \emph{simultaneously} achieve problem-dependent dynamic regret bounds, scaling with two quantities: the gradient-variation term $V_T$ and the small-loss term $F_T$, as defined in~\eqref{eq:gradient-variation}. As we will demonstrate in the next section, the gradient-variation bound is more fundamental than the small-loss bound. Consequently, we start by focusing on the gradient-variation dynamic regret in this section. In Section~\ref{sec:implications}, we will then present the small-loss bound and the best-of-both-worlds bound (i.e., achieving gradient-variation and small-loss bounds simultaneously) as direct implications.

\subsection{A Gentle Start}
In the study of static regret,~\citet{COLT'12:variation-Yang} propose the online extra-gradient descent (OEGD) algorithm and prove that the algorithm enjoys gradient-variation static regret. Specifically, OEGD starts from $\xh_1 \in \X$ and then updates by
\begin{equation}
  \label{alg:OEGD}
  \x_{t} = \Pi_{\X}\left[\xh_{t} - \eta \nabla f_{t-1}(\x_{t-1})\right],~~ \xh_{t+1} = \Pi_{\X}\left[\xh_{t}-\eta\nabla f_{t}(\x_{t})\right],
\end{equation}
where we define $f_0(\x_0) = \mathbf{0}$ and $\Pi_{\X}[\cdot]$ the Euclidean projection onto the nearest point in $\X$.  We consider a fixed step size $\eta>0$ for simplicity. For convex and smooth functions,~\citet{COLT'12:variation-Yang} prove that OEGD enjoys an $\O(\sqrt{1 + V_T})$ gradient-variation static regret. 

Actually, OEGD can be viewed as a specialization of  Optimistic OMD~\eqref{eq:optimistic-OMD-fix} presented in Section~\ref{sec:OMD-framework}, by choosing the regularizer $\Rcal(\x) = \frac{1}{2}\norm{\x}_2^2$ and the optimistic vector $M_{t} = \nabla f_{t-1}(\x_{t-1})$ as well as a fixed step size $\eta > 0$. Therefore,  Theorem~\ref{thm:dynamic-regret-OMD-generic} directly implies the following dynamic regret upper bound for OEGD, with proof in Appendix~\ref{sec:appendix-proof-OMD}.
\begin{myLemma}
\label{lemma:OEGD-variation}
Under Assumptions~\ref{assumption:bounded-gradient},~\ref{assumption:bounded-domain}, and~\ref{assumption:smoothness}, by choosing $\eta \leq \frac{1}{4L}$, dynamic regret of OEGD (namely, \textsc{Optimistic OMD} with $\Rcal(\x) = \frac{1}{2}\norm{\x}_2^2$ and $M_t = \nabla f_{t-1}(\x_{t-1})$) satisfies
\begin{equation}
\label{eq:variation-regret-smooth}
\sum_{t=1}^{T} f_t(\x_t) - \sum_{t=1}^{T} f_t(\u_t) \leq \eta (G^2 + 2V_T) + \frac{1}{2\eta}(D^2 + 2DP_T)
\end{equation}
for any comparator sequence $\u_1,\ldots,\u_T \in \X$. 
\end{myLemma}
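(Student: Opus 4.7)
The plan is to specialize the generic bound of Theorem~\ref{thm:dynamic-regret-OMD-generic} to the concrete choices $\psi(\x)=\frac{1}{2}\norm{\x}_2^2$, $\eta_t\equiv\eta$, and $M_t=\nabla f_{t-1}(\x_{t-1})$ (with $M_1=\mathbf{0}$). Under these choices the Bregman divergence becomes $\Div{\x}{\y}=\frac{1}{2}\norm{\x-\y}_2^2$, the first-order updates in~\eqref{eq:optimistic-OMD-fix} reduce to the two Euclidean projections defining OEGD in~\eqref{alg:OEGD}, and Theorem~\ref{thm:dynamic-regret-OMD-generic} yields three pieces to control: an adaptivity term $\eta\sum_t\norm{\nabla f_t(\x_t)-\nabla f_{t-1}(\x_{t-1})}_2^2$, a comparator-drift term $\frac{1}{2\eta}\sum_t(\norm{\u_t-\xh_t}_2^2-\norm{\u_t-\xh_{t+1}}_2^2)$, and a negative term $-\frac{1}{2\eta}\sum_t(\norm{\xh_{t+1}-\x_t}_2^2+\norm{\x_t-\xh_t}_2^2)$.

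For the adaptivity term, I would insert $\nabla f_{t-1}(\x_t)$ and use the inequality $\norm{a+b}_2^2\le 2\norm{a}_2^2+2\norm{b}_2^2$ together with $L$-smoothness to get
\begin{equation*}
\norm{\nabla f_t(\x_t)-\nabla f_{t-1}(\x_{t-1})}_2^2 \le 2\sup_{\x\in\X}\norm{\nabla f_t(\x)-\nabla f_{t-1}(\x)}_2^2 + 2L^2\norm{\x_t-\x_{t-1}}_2^2.
\end{equation*}
Summing over $t$ (handling the $t=1$ term separately by Assumption~\ref{assumption:bounded-gradient}, which contributes $\eta G^2$) yields $\eta(G^2+2V_T)$ plus a stray $2L^2\eta\sum_t\norm{\x_t-\x_{t-1}}_2^2$. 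I would then split this stray piece by the triangle inequality,
\begin{equation*}
\norm{\x_t-\x_{t-1}}_2^2 \le 2\norm{\x_t-\xh_t}_2^2 + 2\norm{\xh_t-\x_{t-1}}_2^2,
\end{equation*}
so the leftover is at most $4L^2\eta\sum_t\norm{\x_t-\xh_t}_2^2+4L^2\eta\sum_t\norm{\xh_t-\x_{t-1}}_2^2$, each summand of which is exactly one of the two pieces appearing in the negative term (after shifting the index by one in the second sum). The condition $\eta\le\frac{1}{4L}$ makes $4L^2\eta\le L\le\frac{1}{2\eta}$, so the negative term fully absorbs the stray quantities. This cancellation is the technically delicate step: getting the constants and the index shift right so that every $\norm{\x_t-\xh_t}_2^2$ and $\norm{\xh_{t+1}-\x_t}_2^2$ produced by smoothness is dominated by the same quantity coming from $-\Div{\xh_{t+1}}{\x_t}-\Div{\x_t}{\xh_t}$.

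For the comparator-drift term I would use the standard quasi-telescoping trick for Euclidean divergences: rewrite
\begin{equation*}
\sum_{t=1}^T\bigl(\norm{\u_t-\xh_t}_2^2-\norm{\u_t-\xh_{t+1}}_2^2\bigr) \le \norm{\u_1-\xh_1}_2^2 + \sum_{t=2}^T\bigl(\norm{\u_t-\xh_t}_2^2-\norm{\u_{t-1}-\xh_t}_2^2\bigr),
\end{equation*}
then factor each difference of squares as $(\norm{\u_t-\xh_t}_2-\norm{\u_{t-1}-\xh_t}_2)(\norm{\u_t-\xh_t}_2+\norm{\u_{t-1}-\xh_t}_2)$ and apply the reverse triangle inequality together with Assumption~\ref{assumption:bounded-domain}. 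This bounds each summand by $2D\norm{\u_t-\u_{t-1}}_2$ and the initial term by $D^2$, delivering $\frac{1}{2\eta}(D^2+2DP_T)$. Combining this with $\eta(G^2+2V_T)$ gives exactly~\eqref{eq:variation-regret-smooth}, and since Theorem~\ref{thm:dynamic-regret-OMD-generic} holds against any comparator sequence, so does the conclusion. The main obstacle, as noted, is the bookkeeping in the smoothness/negative-term cancellation; the rest is a direct instantiation.
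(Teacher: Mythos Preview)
Your proposal is correct and follows essentially the same approach as the paper: specialize Theorem~\ref{thm:dynamic-regret-OMD-generic} with the Euclidean regularizer, bound the comparator-drift term via the quasi-telescoping argument to get $\frac{1}{2\eta}(D^2+2DP_T)$, decompose the adaptivity term using the pivot $\nabla f_{t-1}(\x_t)$ and smoothness, and absorb the resulting $2\eta L^2\sum_t\norm{\x_t-\x_{t-1}}_2^2$ into the negative term under $\eta\le\frac{1}{4L}$. The only cosmetic difference is that the paper first collapses the negative term via $a^2+b^2\ge\frac{1}{2}(a+b)^2$ into $-\frac{1}{4\eta}\sum_{t\ge2}\norm{\x_t-\x_{t-1}}_2^2$ and then cancels, whereas you split $\norm{\x_t-\x_{t-1}}_2^2$ back into the two pieces and match them against $\norm{\x_t-\xh_t}_2^2$ and $\norm{\xh_t-\x_{t-1}}_2^2$ individually; the bookkeeping and constants come out the same either way.
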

Lemma~\ref{lemma:OEGD-variation} immediately implies a static regret bound. By choosing comparators as the best decision in hindsight $\u_1 = \ldots = \u_T \in \argmin_{\x \in \X} \sum_{t=1}^{T} f_t(\x)$, we have $P_T = 0$ and thereby obtain the existing result~\citep[Theorem 11]{COLT'12:variation-Yang}: $\sum_{t=1}^{T} f_t(\x_t) - \min_{\x \in \X} \sum_{t=1}^{T} f_t(\x) \leq \eta (G^2 + 2V_T) + \frac{D^2}{2\eta} = \O(\sqrt{1+V_T})$ when setting the step size $\eta = \min\{\sqrt{D^2/(G^2 + 2V_T)}, 1/(4L)\}$. Note that the requirement of $V_T$ in tuning can be removed by  doubling trick~\citep{JACM'97:doubling-trick} or self-confident tuning~\citep{JCSS'02:Auer-self-confident}. 

However, it is more complicated when competing with a sequence of time-varying comparators. Lemma~\ref{lemma:OEGD-variation} suggests that it is crucial to tune the step size to balance non-stationarity (path length $P_T$) and adaptivity (gradient-variation $V_T$) for achieving a tight dynamic regret bound. Ideally, the optimal tuning is $\eta^* = \sqrt{(D^2 + 2DP_T)/(2G^2 + 2V_T)}$, but this requires the prior information of $P_T$ and $V_T$ that are generally unavailable. We note that $V_T$ is empirically observable in the sense that at round $t \in [T]$ one can observe its internal estimate $V_t = \sum_{s=2}^t \sup_{\x \in \X}\norm{\nabla f_s(\x) - \nabla f_{s-1}(\x)}_2^2$. By contrast, $P_T = \sum_{t=2}^T \norm{\u_t - \u_{t-1}}_2$ cannot be known or approximated during the learning process. The ideal best comparator sequence, which tightens the upper bound of cumulative loss, satisfies the condition $\sum_{t=1}^T f_t(\x_t) \leq \min_{\u_1,\ldots,\u_T} \{\sum_{t=1}^T f_t(\u_t) + \mathcal{R}_T(P_T,V_T)\}$, where $\mathcal{R}_T(P_T,V_T)$ denotes the dynamic regret upper bound. Universal dynamic regret does not specifically target this optimal comparator sequence but aims to adapt to all feasible comparators, making the choice of $\u_1,\ldots,\u_T$  arbitrarily and entirely unknown within the feasible domain. 

In summary, while the self-confident tuning can be used to remove the dependence on the unknown gradient variation $V_T$, it cannot address the unknown path length $P_T$. In fact, this is the fundamental problem of non-stationary online learning --- how to deal with uncertainty due to unknown environmental non-stationarity, captured by path length of comparators in dynamic regret minimization.

To simultaneously handle the uncertainty arising from adaptivity and non-stationarity, in addition to using optimistic online learning to reuse the historical gradients, we design an adaptive \emph{online ensemble} method~\citep{book'12:ensemble-zhou} that can hedge the non-stationarity while extracting the adaptivity. Our approach deploys a two-layer meta-base structure, in which multiple base-learners are maintained simultaneously and a meta-algorithm is used to track the best one. More concretely, inspired by the recent advance in learning with multiple learning rates~\citep{NIPS'16:MetaGrad,JMLR'21:metagrad}, we first construct a pool of candidate step sizes to discretize possible range of the optimal step size, and then initialize multiple base-learners denoted by $\B_1,\ldots,\B_N$. Each base-learner $\B_i$ returns her own prediction $\x_{t,i}$ by running the base-algorithm with a step size $\eta_i$ from the pool. Finally, those predictions from base-learners are combined by a meta-algorithm to produce the final output $\x_t =\sum_{i=1}^{N} p_{t,i} \x_{t,i}$, where $\p_{t} \in \Delta_N$ is the weight from the meta-algorithm.

Due to the meta-base structure of the above procedures, we can naturally decompose dynamic regret into the following two parts:
\begin{equation}
  \label{eq:meta-base-decompose}
  \mbox{D-Regret}_T = \sum_{t=1}^T f_t(\x_t) - \sum_{t=1}^{T} f_t(\u_t) = \underbrace{\sum_{t=1}^T f_t(\x_t) - f_t(\x_{t,i})}_{\meta} + \underbrace{\sum_{t=1}^T f_t(\x_{t,i}) - f_t(\u_t)}_{\base},
\end{equation}
where $\{\x_t\}_{t=1,\ldots,T}$ denotes the final output sequence, and $\{\x_{t,i}\}_{t=1,\ldots,T}$ is the prediction sequence of base-learner $\B_i$. Notably, the decomposition holds for any base-learner's index $i \in [N]$. The first part is the difference between cumulative loss of the final output sequence and that of the prediction sequence of base-learner $\B_i$, which is introduced by the meta-algorithm and thus named as \emph{meta-regret}; the second part is the dynamic regret of base-learner $\B_i$ and therefore called \emph{base-regret}. As a result, we need to make the meta-regret and base-regret scaling with $V_T$ to achieve the desired gradient-variation dynamic regret.

In the following, we present two solutions. The first solution, developed in our conference paper~\citep{NIPS'20:sword}, is conceptually simpler but requires $N = \O(\log T)$ gradient queries at each round, making it suitable only for the multi-gradient feedback model. The second solution is an improved algorithm based on a refined analysis of the problem's structure, which attains the same dynamic regret guarantee with only one gradient per iteration and hence suits for the more challenging one-gradient feedback model. Recall that the definitions of the multi/one-gradient feedback models are presented in Section~\ref{sec:problem}.
\subsection{Multi-Gradient Feedback: Sword}
\label{sec:solution1}
Our approach, \textsf{Sword}, implements a meta-base online ensemble structure, in which multiple base-learners are initiated simultaneously (denoted by $\B_1,\ldots,\B_N$) and the intermediate predictions of all the base-learners are combined by a meta-algorithm to produce the final output. Below, we describe the specific settings of the base-algorithm and meta-algorithm.

For the base-algorithm, we simply employ the OEGD algorithm~\citep{COLT'12:variation-Yang}, where the base-learner $\B_i$ shall update her local decision $\{\x_{t,i}\}_{t=1,\ldots,T}$ by 
\begin{equation}
  \label{eq:OEGD-actual-loss}
  \x_{t,i} = \Pi_{\X}\left[\xh_{t,i} - \eta_{i} \nabla f_{t-1}(\x_{t-1,i})\right],~~ \xh_{t+1,i} = \Pi_{\X}\left[\xh_{t,i}-\eta_{i} \nabla f_{t}(\x_{t,i})\right],
\end{equation}
where $\eta_i > 0$ is the associated step size from the step size pool $\H = \{\eta_1,\ldots,\eta_N\}$ and the number of base-learner is chosen as $N = \O(\log T)$. Lemma~\ref{lemma:OEGD-variation} ensures an upper bound of base-regret scaling with the gradient variation, i.e., $\sum_{t=1}^{T} f_t(\x_{t,i}) - \sum_{t=1}^{T} f_t(\u_t) \leq \O(\eta_i (1+V_T) + P_T/\eta_i)$, whenever the step size satisfies $\eta_i \leq 1/(4L)$. The caveat is that each base-learner requires her own gradient direction for the update, so we need the gradient information of $\{\nabla f_t(\x_{t,i})\}_{i=1,\ldots,N}$ at round $t \in [T]$. Notably, the gradient query complexity is $N = \O(\log T)$ per round, which means the method developed in this part only suits for the multi-gradient feedback model. In Section~\ref{sec:solution2}, we will further design an improved algorithm applicable for the one-gradient feedback model.

The main difficulty lies in the design and analysis of an appropriate meta-algorithm. In order to be compatible to the gradient-variation base-regret, the meta-algorithm is required to incur a problem-dependent meta-regret of order $\O(\sqrt{V_T\log N})$. However, the meta-algorithms used in existing studies~\citep{NIPS'16:MetaGrad,NIPS'18:Zhang-Ader} cannot satisfy the requirements. For example, the vanilla Hedge suffers an $\O(\sqrt{T\log N})$ meta-regret, which is problem-independent and thus not suitable for us. To this end, we introduce the optimistic Hedge~\citep{NIPS'15:fast-rate-game} by exploiting the optimistic online learning, and further design a carefully designed optimism  specifically for our problem. 

Consider the problem of prediction with expert advice. At the beginning of iteration $(t+1)$, in addition to the loss vector $\ellb_t \in \R^N$ returned by the experts, the player can receive an optimism $\m_{t+1} \in \R^N$. Optimistic Hedge updates the weight vector $\p_{t+1} \in \Delta_{N}$ by
\begin{equation}
\label{eq:OptimisticHedge}
p_{t+1,i} \propto \exp\left(-\epsilon\Big(\sum_{s=1}^{t} \ell_{s,i} + m_{t+1,i}\Big)\right), \quad \forall i\in[N].
\end{equation}
Here, $\epsilon > 0$ is the learning rate of the meta-algorithm and we consider a fixed learning rate for simplicity.\footnote{We adopt the terminology ``learning rate'' for the meta-algorithm of our approach following the convention in the prediction with expert advice, and use ``step size'' for the general online convex optimization.} The optimism $\m_{t+1} \in \R^N$ can be interpreted as an optimistic guess of the loss of round $(t+1)$, and we thus incorporate it into the cumulative loss for update. It is well known that Optimistic Hedge can be regarded as an instance of Optimistic OMD with the negative-entropy regularizer, as mentioned in Remark~\ref{remark:OMD}. Therefore, the general result of Theorem~\ref{thm:dynamic-regret-OMD-generic} implies the following static regret bound of Optimistic Hedge, and the proof can be found in Appendix~\ref{sec:appendix-proof-OMD}. Notably, the negative term shown in~\eqref{eq:regret-optimistic-Hedge} will be of great importance in the algorithm design and regret analysis.
\begin{myLemma}
\label{lemma:OptimisticHedge}
The regret of Optimistic Hedge with a fixed learning rate $\epsilon > 0 $ to any expert $i \in [N]$ is at most
\begin{equation}
  \label{eq:regret-optimistic-Hedge}
  \sum_{t=1}^{T} \inner{\p_t}{\ellb_t} - \sum_{t=1}^{T} \ell_{t,i} \leq \epsilon \sum_{t=1}^{T} \norm{\ellb_t - \m_t}_{\infty}^2 + \frac{\ln N}{\epsilon} - \frac{1}{4\epsilon}\sum_{t=2}^{T} \norm{\p_{t} - \p_{t-1}}_1^2.
\end{equation}
Let $D_T = \sum_{t=1}^{T} \norm{\ellb_t - \m_{t}}_{\infty}^2$ measure the deviation between optimism and gradient. With a proper learning rate tuning scheme, Optimistic Hedge enjoys an $\O(\sqrt{D_T \log N})$ meta-regret.
\end{myLemma}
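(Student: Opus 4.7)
The plan is to realize Optimistic Hedge as a concrete instance of Optimistic Mirror Descent (Theorem~\ref{thm:dynamic-regret-OMD-generic}) and then carefully convert the per-round negative Bregman terms into the consecutive-distribution fluctuation $\sum_{t=2}^{T}\norm{\p_t-\p_{t-1}}_1^2$. Specifically, I instantiate Theorem~\ref{thm:dynamic-regret-OMD-generic} on the simplex $\splx$ with linear losses $f_t(\p)=\inner{\p}{\ellb_t}$, negative-entropy regularizer $\psi(\p)=\sum_{i=1}^N p_i\ln p_i$, fixed step size $\eta_t=\epsilon$, and a static comparator $\u_t=\e_i$ for all $t$. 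The update~\eqref{eq:OptimisticHedge} is exactly the two-step OMD recursion~\eqref{eq:optimistic-OMD-fix} under this regularizer, and by Pinsker's inequality $\psi$ is $1$-strongly convex with respect to $\norm{\cdot}_1$ (whose dual norm is $\norm{\cdot}_\infty$), so Theorem~\ref{thm:dynamic-regret-OMD-generic} applies.

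Next I would simplify each of the three terms of the OMD bound. Since linear losses give equality $f_t(\p_t)-f_t(\u_t)=\inner{\p_t-\u_t}{\ellb_t}$, the gradients are just $\ellb_t$, and the adaptivity term collapses to $\epsilon\sum_{t=1}^{T}\norm{\ellb_t-\m_t}_\infty^2$. Because the comparator is fixed and $\xh_1$ is uniform, the middle Bregman term telescopes to $\frac{1}{\epsilon}\Div{\e_i}{\xh_1}\le\frac{\ln N}{\epsilon}$ (KL of a point mass against uniform). The third term is the crucial one: applying Pinsker twice gives
\begin{equation*}
-\sum_{t=1}^{T}\frac{1}{\epsilon}\bigl(\Div{\xh_{t+1}}{\p_t}+\Div{\p_t}{\xh_t}\bigr)\le -\frac{1}{2\epsilon}\sum_{t=1}^{T}\bigl(\norm{\xh_{t+1}-\p_t}_1^2+\norm{\p_t-\xh_t}_1^2\bigr).
\end{equation*}
The last step is to relate the right-hand side to $\sum_{t=2}^{T}\norm{\p_t-\p_{t-1}}_1^2$ via the triangle inequality $\norm{\p_t-\p_{t-1}}_1\le\norm{\p_t-\xh_t}_1+\norm{\xh_t-\p_{t-1}}_1$ combined with $(a+b)^2\le 2(a^2+b^2)$; summing and re-indexing yields
\begin{equation*}
\sum_{t=2}^{T}\norm{\p_t-\p_{t-1}}_1^2\le 2\sum_{t=1}^{T}\bigl(\norm{\p_t-\xh_t}_1^2+\norm{\xh_{t+1}-\p_t}_1^2\bigr),
\end{equation*}
which plugs into the displayed bound to give the claimed $-\frac{1}{4\epsilon}\sum_{t=2}^{T}\norm{\p_t-\p_{t-1}}_1^2$ term. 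Adding the three contributions yields~\eqref{eq:regret-optimistic-Hedge}.

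For the second assertion, I would simply drop the non-positive third term and optimize $\epsilon$ in $\epsilon D_T+\frac{\ln N}{\epsilon}$, giving optimal $\epsilon^\star=\sqrt{\ln N/D_T}$ and the $\O(\sqrt{D_T\ln N})$ guarantee. Since $D_T$ is unknown in advance, I would invoke a standard self-confident tuning or a doubling schedule on an empirical upper bound of $D_t$ to remove this dependence, at the cost of only constant factors.

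The only genuinely delicate step is the triangle-inequality conversion of the two negative Bregman terms into the single fluctuation term $\norm{\p_t-\p_{t-1}}_1^2$; every other piece is a direct substitution into Theorem~\ref{thm:dynamic-regret-OMD-generic}. Preserving this negative term (rather than discarding it as in vanilla analyses) is precisely what later enables cancellation against the meta-algorithm's stability requirement when combining with gradient-variation base regret, which is why I keep the inequality tight through Pinsker rather than bounding crudely.
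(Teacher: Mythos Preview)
Your proposal is correct and follows essentially the same route as the paper's own proof: instantiate Theorem~\ref{thm:dynamic-regret-OMD-generic} with the negative-entropy regularizer on the simplex and a fixed comparator $\e_i$, telescope the KL terms to $\frac{\ln N}{\epsilon}$, apply Pinsker's inequality to convert the negative Bregman terms to $\ell_1$-distances, and then use the triangle inequality with $(a+b)^2\le 2(a^2+b^2)$ and a re-indexing to obtain the $-\tfrac{1}{4\epsilon}\sum_{t=2}^{T}\norm{\p_t-\p_{t-1}}_1^2$ term. Your write-up is in fact slightly more explicit than the paper's about the triangle-inequality step, but the argument is identical.
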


The framework of optimistic online learning is very powerful for designing adaptive methods, in that the adaptivity quantity $D_T = \sum_{t=1}^{T} \norm{\ellb_t - \m_{t}}_{\infty}^2$ is very general and can be specialized flexibly with different configurations of feedback loss $\ellb_t$ and optimism $\m_t$. To achieve the desired gradient-variation dynamic regret, we need to investigate the online ensemble structure carefully. To this end, we specialize Optimistic Hedge in the following way to make the meta-regret compatible with the desired gradient-variation quantity.
\begin{itemize}
    \item The feedback loss $\ellb_t \in \R^N$ is set as the linearized surrogate loss:
    \begin{equation}
      \label{eq:Sword-loss}
      \ell_{t,i} = \inner{\nabla f_t(\x_t)}{\x_{t,i}},~ \forall t \in [T] \mbox{ and } \forall i \in [N].
    \end{equation}
    \item The optimism $\m_t \in \R^N$ is set with a careful design: $\m_1 = \mathbf{0}$ and
    \begin{equation}
      \label{eq:Sword-optimism}
      m_{t,i} = \inner{\nabla f_{t-1}(\bar{\x}_{t})}{\x_{t,i}},~ \forall t \in [T] \mbox{ and } \forall i \in [N],  \text{   where } \bar{\x}_{t} = \sum_{i=1}^{N} p_{t-1,i} \x_{t,i}. 
    \end{equation}  
\end{itemize}
We will explain the motivation of such designs in Remark~\ref{remark:1}. Note that this optimism is legitimate as the instrumental variable $\bar{\x}_{t}$ only uses the information of $\p_{t-1}$ and local decisions $\{\x_{t,i}\}_{i=1,\ldots,N}$ at time $t$. The meta-algorithm updates the weight $\p_{t+1} \in \R^N $ by
\begin{equation}
\label{eq:VariationHedge}
p_{t+1,i} \propto \exp\left(-\epsilon\Big(\sum_{s=1}^{t} \inner{\nabla f_s(\x_s)}{\x_{s,i}} + \inner{\nabla f_{t}(\bar{\x}_{t+1})}{\x_{t+1,i}}\Big)\right), \quad \forall i\in[N].
\end{equation}
Algorithm~\ref{alg:Sword-meta} summarizes detailed procedures of the meta-algorithm, which in conjunction with the base-algorithm of Algorithm~\ref{alg:Sword-base} yields the Sword algorithm. We also discuss the gradient query complexity of the overall algorithm. In each round $t \in [T]$, at the base level, the algorithm computes the gradient information $\nabla f_t(\x_{t,i})$ for all $i \in [N]$. At the meta level, it additionally requires the gradients $\nabla f_t(\x_{t})$ and $\nabla f_{t}(\bar{\x}_{t+1})$. Consequently, the total gradient query complexity per round is $N+2 = \O(\log T)$.

\begin{figure}[!t]
\begin{minipage}{0.49\textwidth}
\begin{algorithm}[H]
   \caption{\textsf{Sword}: meta-algorithm}
   \label{alg:Sword-meta}
\begin{algorithmic}[1]
  \REQUIRE{step size pool $\H$; learning rate $\epsilon$}
  \STATE{Initialization: $\forall i\in [N], p_{0,i} = 1/N$}
    \FOR{$t=1$ {\bfseries to} $T$}
      \STATE Receive $\x_{t+1,i}$ from base-learner $\B_i$
      \STATE Update weight $p_{t+1,i}$ by~\eqref{eq:VariationHedge}
      \STATE Predict $\x_{t+1} = \sum_{i=1}^{N} p_{t+1,i} \x_{t+1,i}$
    \ENDFOR
\end{algorithmic}
\end{algorithm}
\end{minipage}
\hspace{2mm}
\begin{minipage}{0.49\textwidth}
\begin{algorithm}[H]
   \caption{\textsf{Sword}: base-algorithm}
   \label{alg:Sword-base}
\begin{algorithmic}[1]
  \REQUIRE{step size $\eta_i \in \H$}
  \STATE{Let $\hat{\x}_{1,i}, \x_{1,i}$ be any point in $\mathcal{X}$}
    \FOR{$t=1$ {\bfseries to} $T$}
      \STATE $\xh_{t+1,i} = \Pi_{\mathcal{X}}\big[\xh_{t,i} - \eta_i \nabla f_{t}(\x_{t,i})\big]$ 
      \STATE $\x_{t+1,i} = \Pi_{\mathcal{X}}\big[\xh_{t+1,i} - \eta_i \nabla f_{t}(\x_{t,i})\big]$
      \STATE Send $\x_{t+1,i}$ to the meta-algorithm     
    \ENDFOR
\end{algorithmic}
\end{algorithm}
\end{minipage}
\end{figure}

\begin{myRemark}[design of optimism]
\label{remark:1}
The design of optimism in~\eqref{eq:Sword-optimism}, particularly the construction of the instrumental variable $\xb_{t}$, is crucial and is the most challenging part in this method.  Our design carefully leverages the structure of two-layer online ensemble methods, specifically, the goal of designing optimism is to approximate the current gradient $\nabla f_t(\x_t)$ (which is unknown) via the available knowledge till round $t$. We propose to use $\nabla f_{t-1}(\bar{\x}_{t})$ as the approximation, and the difference of online functions delivers the gradient-variation term $\sup_{\x \in \X} \norm{f_t(\x) - f_{t-1}(\x)}_2^2$, while the difference between $\x_t$ and $\bar{\x}_t$ can be upper bounded by the decision variation of the meta-algorithm, 
\begin{equation}
  \label{eq:Sword-swicthing-cost}
  \norm{\x_t - \xb_t}_2^2 = \left\Vert \sum_{i=1}^{N} (p_{t,i} - p_{t-1,i})\x_{t,i}\right\Vert_2^2 \leq \Big(\sum_{i=1}^{N} \abs{p_{t,i} - p_{t-1,i}} \norm{\x_{t,i}}_2\Big)^2 \leq D^2 \norm{\p_t - \p_{t-1}}_1^2,
\end{equation}
which can be eliminated by the \emph{negative term} in the regret bound of Optimistic Hedge as shown in~\eqref{eq:regret-optimistic-Hedge}, providing with a suitable setting for the learning rate of the meta-algorithm. As such, the adaptivity quantity $D_T$ can be converted to the desired gradient variation $V_T$ plus the decision variation of the meta-algorithm, concretely,
\begin{equation}
\label{eq:Sword-adaptivity-expand}
\begin{split}
   \norm{\ellb_t - \m_t}_{\infty}^2 \overset{\eqref{eq:Sword-optimism}}{=} {} &  \max\nolimits_{i\in[N]} \inner{\nabla f_t(\x_t) - \nabla f_{t-1}(\xb_t)}{\x_{t,i}}^2\\
   \leq {} & D^2 \norm{\nabla f_t(\x_t) - \nabla f_{t-1}(\xb_t)}_2^2 \\
   \leq {} & 2 D^2 (\norm{\nabla f_t(\x_t) - \nabla f_{t-1}(\x_t)}_2^2 + \norm{\nabla f_{t-1}(\x_t) - \nabla f_{t-1}(\xb_t)}_2^2) \\
   \leq {} & 2 D^2 \sup\nolimits_{\x \in \X}\norm{\nabla f_t(\x) - \nabla f_{t-1}(\x)}_2^2 + 2 D^2 L^2\norm{\x_t - \xb_t}_2^2\\
   \leq {} & 2 D^2 \sup\nolimits_{\x \in \X}\norm{\nabla f_t(\x) - \nabla f_{t-1}(\x)}_2^2 + 2 D^4 L^2\norm{\p_t - \p_{t-1}}_1^2,
\end{split}
\end{equation}
where the derivation makes use of the boundedness of the feasible domain, triangle inequality, and the smoothness of online functions. The last term will be canceled by the negative term in the meta-regret, then we obtain the desired gradient-variation regret guarantee.
\endenv \end{myRemark}

The following theorem shows that the meta-regret is at most $\O(\sqrt{(1+V_T) \log N})$, which is nicely compatible to the attained base-regret. The proof can be found in Section~\ref{sec:proof-Sword-meta}.
\begin{myThm}
\label{thm:variation-meta-regret}
Under Assumptions~\ref{assumption:bounded-gradient},~\ref{assumption:bounded-domain}, and~\ref{assumption:smoothness}, by setting the learning rate of the meta-algorithm~\eqref{eq:VariationHedge} optimally as $\epsilon = \min\{1/(4D^2L),\sqrt{(\ln N)/(2D^2(G^2 + V_T))}\}$, the meta-regret of Sword (Algorithm~\ref{alg:Sword-meta}) is at most 
\begin{equation*}
  \sum_{t=1}^T f_t(\x_t) - \sum_{t=1}^Tf_t(\x_{t,i}) \leq 2D\sqrt{2(G^2 + V_T)\ln N} + 8D^2L\ln N = \O\Big( \sqrt{(1+V_T) \log N} \Big).
\end{equation*}
\end{myThm}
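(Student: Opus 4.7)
The plan is to reduce the meta-regret to the regret of Optimistic Hedge on the linearized surrogate losses $\ell_{t,i} = \inner{\nabla f_t(\x_t)}{\x_{t,i}}$, invoke Lemma~\ref{lemma:OptimisticHedge}, and then use the smoothness-based decomposition previewed in Remark~\ref{remark:1} to trade the deviation $\norm{\ellb_t - \m_t}_\infty^2$ for the gradient variation $V_T$ plus a weight-deviation term that is then absorbed by the negative term of the Optimistic Hedge guarantee.

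First, I would linearize. Since $\x_t = \sum_{i=1}^{N} p_{t,i}\x_{t,i}$, convexity of $f_t$ gives
\[
f_t(\x_t) - f_t(\x_{t,i}) \leq \inner{\nabla f_t(\x_t)}{\x_t - \x_{t,i}} = \inner{\p_t}{\ellb_t} - \ell_{t,i},
\]
so the meta-regret against base-learner $\B_i$ is upper bounded by the static regret of Optimistic Hedge on the loss sequence $\{\ellb_t\}$ with optimism $\{\m_t\}$ defined in~\eqref{eq:Sword-loss} and~\eqref{eq:Sword-optimism}. Applying Lemma~\ref{lemma:OptimisticHedge} yields
\[
\sum_{t=1}^{T} \big(f_t(\x_t) - f_t(\x_{t,i})\big) \leq \epsilon \sum_{t=1}^{T}\norm{\ellb_t - \m_t}_\infty^2 + \frac{\ln N}{\epsilon} - \frac{1}{4\epsilon}\sum_{t=2}^{T}\norm{\p_t - \p_{t-1}}_1^2.
\]

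Second, I bound the deviation term. For $t\geq 2$ the chain of inequalities displayed in Remark~\ref{remark:1} (using boundedness of $\X$, the triangle inequality, $L$-smoothness, and the inequality $\norm{\x_t - \xb_t}_2^2 \leq D^2\norm{\p_t - \p_{t-1}}_1^2$ from~\eqref{eq:Sword-swicthing-cost}) gives
\[
\norm{\ellb_t - \m_t}_\infty^2 \leq 2D^2 \sup_{\x\in\X}\norm{\nabla f_t(\x) - \nabla f_{t-1}(\x)}_2^2 + 2D^4 L^2 \norm{\p_t - \p_{t-1}}_1^2,
\]
and for $t=1$, using $\m_1 = \mathbf{0}$ and Assumptions~\ref{assumption:bounded-gradient}--\ref{assumption:bounded-domain}, one has $\norm{\ellb_1 - \m_1}_\infty^2 \leq D^2 G^2$. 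Summing over $t$, the deviation is at most $2D^2(G^2 + V_T) + 2D^4 L^2 \sum_{t=2}^{T}\norm{\p_t - \p_{t-1}}_1^2$.

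Third, I cancel the weight-deviation term. The stipulation $\epsilon \leq 1/(4D^2 L)$ gives $2\epsilon D^4 L^2 \leq 1/(4\epsilon)$, so multiplying the previous display by $\epsilon$ and combining with the negative term leaves
\[
\sum_{t=1}^{T} \big(f_t(\x_t) - f_t(\x_{t,i})\big) \leq 2\epsilon D^2(G^2 + V_T) + \frac{\ln N}{\epsilon}.
\]
Finally, setting $\epsilon = \min\{1/(4D^2 L),\, \sqrt{\ln N/(2D^2(G^2+V_T))}\}$ and considering the two cases (small/large $V_T$ relative to the smoothness cap) yields the stated bound $2D\sqrt{2(G^2 + V_T)\ln N} + 8D^2 L \ln N$.

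The main obstacle is not the algebra but rather the tight matching of constants that makes the absorption step work: the whole design pays off here, since the role of the instrumental variable $\xb_t$ in the optimism~\eqref{eq:Sword-optimism} is precisely to make the residual after smoothness-splitting scale as $\norm{\p_t - \p_{t-1}}_1^2$ (rather than, say, $\norm{\p_t - \p_{t-1}}_1$), so that the negative term from Optimistic Hedge can dominate it under a learning-rate cap that does not depend on any unknown quantities. Once the cap $\epsilon \leq 1/(4D^2 L)$ is in place, the remainder of the argument is a standard step-size tuning.
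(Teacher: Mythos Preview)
Your proof is correct and follows essentially the same route as the paper: linearize via convexity to reduce to the Optimistic Hedge regret, apply Lemma~\ref{lemma:OptimisticHedge}, bound $\norm{\ellb_t-\m_t}_\infty^2$ via smoothness and the switching-cost inequality~\eqref{eq:Sword-swicthing-cost}, absorb the resulting $\norm{\p_t-\p_{t-1}}_1^2$ term using the cap $\epsilon\leq 1/(4D^2L)$, and then tune $\epsilon$ (the paper invokes Lemma~\ref{lemma:inequality} for this last step, which is exactly your two-case analysis). The only cosmetic difference is that the paper postpones the convexity step $f_t(\x_t)-f_t(\x_{t,i})\leq \inner{\nabla f_t(\x_t)}{\x_t-\x_{t,i}}$ to the end rather than invoking it first.
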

Note that the optimal learning rate tuning of the meta-algorithm requires the knowledge of gradient variation $V_T = \sum_{t=2}^T \sup_{\x \in \X}\norm{\nabla f_t(\x) - \nabla f_{t-1}(\x)}_2^2$. The undesired demand can be removed by the self-confident tuning method~\citep{JCSS'02:Auer-self-confident}, which employs a time-varying learning rate scheme for the meta-algorithm's update based on internal estimates, roughly, $p_{t+1,i} \propto \exp\big(-\epsilon_t(\sum_{s=1}^{t} \ell_{s,i} + m_{t+1,i})\big), \forall i\in[N]$ with $\epsilon_t = \O(1/\sqrt{1+V_{t}})$ with an internal estimate $V_{t} = \sum_{s=2}^{t} \sup_{\x \in \X}\norm{\nabla f_s(\x) - \nabla f_{s-1}(\x)}_2^2$. Besides, notice that this $V_{t}$ is actually not easy to calculate due to the computation of instantaneous variation $\sup_{\x \in \X} \norm{\nabla f_t(\x) - \nabla f_{t-1}(\x)}_2^2$, which is the difference of convex functions programming and is not easy to solve even with the explicit form of functions $f_t$ and $f_{t-1}$. Fortunately, we can use an alternative twisted quantity $\bar{V}_T = \sum_{t=2}^{T} \norm{\nabla f_t(\x_t) - \nabla f_{t-1}(\x_{t-1})}_2^2$ for the learning rate configuration and also achieve the same regret bound via a refined analysis. Then, it suffices to perform the self-confident tuning over $\bar{V}_T$ by monitoring the corresponding internal estimate $\bar{V}_t = \sum_{s=2}^{t} \norm{\nabla f_s(\x_s) - \nabla f_{s-1}(\x_{s-1})}_2^2$, which avoids the burdensome calculations of inner optimization problems and thereby significantly streamlines the computational efforts paid for the adaptive learning rate tuning. A caveat of this Optimistic Hedge update when implemented time-varying learning rates is that it essentially is a special case of Optimistic FTRL rather than Optimistic OMD. For a more thorough technical discussion, see Remark~\ref{remark:optimisticHedge-LR} in Appendix~\ref{appendix:adaptive-LR}.

So far, the obtained base-regret bound (Lemma~\ref{lemma:OEGD-variation}) and meta-regret bound (Theorem~\ref{thm:variation-meta-regret}) are both adaptive to the gradient variation, and we can simply combine them to achieve the final gradient-variation dynamic regret as stated in Theorem~\ref{thm:dynamic-var}, providing with an appropriate candidate step size pool. The proof is provided in Section~\ref{sec:proof-Sword-dynamic-regret}.   
\begin{myThm}
\label{thm:dynamic-var}
Under Assumptions~\ref{assumption:bounded-gradient},~\ref{assumption:bounded-domain}, and~\ref{assumption:smoothness}, set the pool of candidate step sizes $\H$ as 
\begin{equation}
  \label{eq:step-size-pool-variation}
  \H = \left\{\eta_i = \min\bigg\{\frac{1}{4L}, 2^{i-1}\sqrt{\frac{D^2}{8G^2T}}\bigg\} \mid  i \in [N]\right\},
\end{equation}
where $N = \lceil 2^{-1} \log_2(G^2T/(2D^2L^2))\rceil + 1$ is the number of candidate step sizes; and set the learning rate of the meta-algorithm as $\epsilon = \min\{1/(4D^2L),\sqrt{(\ln N)/(2D^2(G^2 + V_T))}\}$.
Then, Sword enjoys the following dynamic regret against any comparators $\u_1,\ldots,\u_T \in \X$,
\begin{align*}
\label{eq:dynamic-regret-variation}
\sum_{t=1}^T f_t(\x_{t}) - \sum_{t=1}^T f_t(\u_t) \leq \O \Big(\sqrt{(1 + P_T + V_T)(1 + P_T)}\Big).
\end{align*}
\end{myThm}
\begin{myRemark}
Compared with the existing $\O(\sqrt{T(1+P_T)})$ dynamic regret~\citep{NIPS'18:Zhang-Ader}, our result is more adaptive in the sense that it replaces $T$ by the \emph{problem-dependent} quantity $P_T + V_T$. Therefore, the bound will be much tighter in easy problems, for example when both $V_T$ and $P_T$ are $o(T)$. Meanwhile, it safeguards the same minimax rate, since both quantities are at most $\O(T)$. Furthermore, because the \emph{universal} dynamic regret studied in this paper holds against any comparator sequence, it specializes the static regret by setting all comparators as the best fixed decision in hindsight, i.e., $\u_1=\ldots=\u_T=\x^* \in \argmin_{\x \in \X} \sum_{t=1}^{T} f_t(\x)$. Under such a circumstance, the path length $P_T = \sum_{t=2}^{T} \norm{\u_{t-1} - \u_t}_2$ becomes zero, so the regret bound in Theorem~\ref{thm:dynamic-var} implies an $\O (\sqrt{1 + V_T})$ gradient-variation static regret bound, recovering the result of~\citet{COLT'12:variation-Yang}.
\endenv \end{myRemark}
\subsection{One-Gradient Feedback: Sword++}
\label{sec:solution2}
So far, we have designed an online algorithm (Sword) with the gradient-variation dynamic regret. While it achieves a favorable regret guarantee, one caveat is that Sword runs $N = \O(\log T)$ base-learners simultaneously and each base-learner requires her own gradient direction for the update. Consequently, the overall algorithm necessitates $\O(\log T)$ gradient queries at each iteration, making it time-consuming and only applicable to the multi-gradient feedback model. In contrast,  algorithms designed for static regret minimization typically work well under the more challenging one-gradient feedback model, namely, they only require the gradient information $\nabla f_t(\x_t)$ for updates. Given this, it is natural to ask whether it is possible to design online algorithms that can achieve the same dynamic regret guarantee as Sword \emph{using only one gradient query per iteration}, making them applicable to the one-gradient feedback online learning.

We resolve the question affirmatively. The new algorithm, called \textsf{Sword++}, also implements an online ensemble structure. Compared to Sword presented in Section~\ref{sec:solution1}, the key novel ingredient is the framework of \emph{collaborative online ensemble}. We carefully introduce correction terms to the online loss and optimism, forming a biased surrogate loss and a surrogate optimism, which are then fed to the meta-algorithm. By further exploiting the negative terms in the meta and base levels, the overall algorithm ensures effective \mbox{\emph{collaboration}} within the meta and base two layers, thereby achieving the favorable gradient-variation dynamic regret with only one gradient query per iteration.

In the following, we describe the details of Sword++. The algorithm maintains multiple base-learners denoted by $\B_1,\ldots,\B_N$, which are performed with different step sizes and then combined by a meta-algorithm to track the best one. An exponential step size grid is adopted as the schedule, denoted by $\H = \{\eta_i = c \cdot 2^{i} \mid i \in [N]\}$ with $N = \O(\log T)$ for some constant $c > 0$ (usually scaling with $\poly(1/T)$), whose specific setting will be given later.

\paragraph{Base-algorithm.} Instead of performing updates over the original loss $f_t$ as shown in~\eqref{eq:OEGD-actual-loss}, all the base-learners of Sword++ update over the \emph{linearized surrogate loss} $g_t:\X \mapsto \R$ defined $g_t(\x) = \inner{\nabla f_t(\x_t)}{\x}$, and moreover, the optimism is chosen as $M_t = \nabla g_{t-1}(\x_{t-1,i})$ for each base-learner $\B_i$ with $i \in [N]$. By definition, we have $\nabla g_t(\x_{t,i}) = \nabla f_t(\x_t)$, so each base-learner $\B_i$ essentially performs the following update at each iteration:
\begin{equation}
    \label{eq:variation-OGD-base}
    \x_{t,i} = \Pi_{\X}\left[\xh_{t,i} - \eta_i \nabla f_{t-1}(\x_{t-1})\right],~~ \xh_{t+1,i} = \Pi_{\X}\left[\xh_{t,i} - \eta_i \nabla f_{t}(\x_{t})\right].
\end{equation}
Using above update steps, we no longer need to evaluate the gradient $\nabla f_t(\x_{t,i})$ over the local decisions for every base-learner, as was done by Sword (see its update rule in~\eqref{eq:OEGD-actual-loss}). Instead, a single call of $\nabla f_t(\x_t)$ is sufficient at each round for the update in Sword++. 

We note that although the linearized trick has previously been employed in the meta-base structure for achieving the minimax dynamic regret $\O(\sqrt{T(1+P_T)})$ with one gradient per iteration~\citep{NIPS'18:Zhang-Ader}, this modification alone is far from enough to obtain a problem-dependent dynamic regret. To see this, we can check the regret of the base-learner updated with the surrogate loss $g_t(\x)$. A similar argument of Lemma~\ref{lemma:OEGD-variation} shows that the base-regret over the linearized loss $(\#) \triangleq \sum_{t=1}^T g_t(\x_{t,i}) - \sum_{t=1}^T g_t(\u_t)$ satisfies
\begin{align}
\label{eq:sword++-challenge}
(\#) \leq \eta_i(G^2+2V_T) + \frac{D^2+2DP_T}{2\eta_i} + 2\eta_i L^2\sum_{t=2}^T\norm{\x_t-\x_{t-1}}^2_2 - \frac{1}{4\eta_i}\sum_{t=2}^T\norm{\x_{t,i}-\x_{t-1,i}}^2_2.
\end{align}
In the analysis of Sword, because the gradient $\nabla f_t(\x_{t,i})$ is evaluated on every base-learner's own decision $\x_{t,i}$, the additional positive term (the third one) is $2\eta_iL^2\sum_{t=2}^T\norm{\x_{t,i}-\x_{t-1,i}}_2^2$, which can be cancelled by the negative term $-\sum_{t=2}^T\norm{\x_{t,i}-\x_{t-1,i}}_2^2/(4\eta_i)$ whenever the step size is set appropriately. However, when the base-learner updates her decision over the surrogate loss, the additional positive term becomes $2\eta_iL^2 \norm{\x_t-\x_{t-1}}_2^2$, which \emph{cannot} be handled by the negative term of any base-learner. Thus, more advanced mechanisms are required to achieve problem-dependent dynamic regret under the one-gradient query model. 

To tackle the difficulty, our primary idea is to facilitate \emph{collaboration} between the meta and base levels. Specifically, we aim to leverage negative terms from both levels to handle the positive term. However, it turns out that the positive term cannot be entirely offset by the combined negative terms from meta and base levels. To address this issue, we introduce correction terms to the feedback loss and optimism in the meta-algorithm. This generates a new negative term that, together with the negative term from the meta level, effectively cancels out the positive term. Nevertheless, another new positive term emerges due to the injected correction, which we ensure can be managed by the negative term from the base level. As a result, the overall undesired positive term is finally addressed.

The above forms the main idea of our proposed \emph{collaborative online ensemble} framework. The term ``collaboration'' refers to the interplay between meta and base layers. Indeed, on their own, neither the base level nor the meta level can achieve a gradient-variation base/meta regret; each incurs some additional positive terms. This positive term necessitates the negative terms from the other layer to help effectively cancel it out.

In the following, we describe the details of the meta-algorithm. We will provide a brief explanation of the design of corrections in Remark~\ref{remark:collaboration} and offer a more comprehensive elaboration on the general framework of collaborative online ensemble in Section~\ref{sec:framework-collaborative-OE}.

\paragraph{Meta-algorithm.} We still employ Optimistic Hedge as the meta-algorithm, but additionally require  innovative designs in feedback loss and optimism. Specifically, instead of simply using the linearized surrogate loss $\ell_{t,i}\triangleq\inner{\nabla f_t(\x_t)}{\x_{t,i}}$ as the feedback loss like Sword (see the update rule in~\eqref{eq:Sword-loss}), we carefully construct the surrogate loss in the following way and send it to the meta-algorithm. 
\begin{itemize}
 \item The feedback loss $\ellb_t \in \R^N$ is constructed as follows: for each $i \in [N]$, $\ell_{1,i} = \inner{\nabla f_1(\x_1)}{\x_{1,i}}$ and for $t \geq 2$, it composes the linearized surrogate loss $\inner{\nabla f_t(\x_t)}{\x_{t,i}}$ with a \emph{decision-deviation correction term}, namely,
    \begin{equation}
    \label{eq:meta-surrogate-loss}
    \ell_{t,i} = \inner{\nabla f_t(\x_t)}{\x_{t,i}} + \lambda \norm{\x_{t,i} - \x_{t-1,i}}_2^2.
\end{equation}
 \item The optimism $\m_t \in \R^N$ is similarly configured as follows: $\m_{1} = \mathbf{0}$ and for $t \geq 2$ and $i \in [N]$, the optimism also admits a \emph{decision-deviation correction term}, namely,
    \begin{equation}
        \label{eq:meta-optimism}
        m_{t,i} = \inner{M_t}{\x_{t,i}} + \lambda \norm{\x_{t,i} - \x_{t-1,i}}_2^2 = \inner{\nabla f_{t-1}(\x_{t-1})}{\x_{t,i}} + \lambda \norm{\x_{t,i} - \x_{t-1,i}}_2^2.
    \end{equation}
\end{itemize}
Both feedback loss and optimism admit an additional correction term $\lambda \norm{\x_{t,i} - \x_{t-1,i}}_2^2$ that measures the stability of the local decisions returned by the base-learner, where $\lambda > 0$ is the correction coefficient to be determined later. We will explain soon in Remark~\ref{remark:collaboration} on the crucial role and design motivation of this correction. Overall, the meta-algorithm of Sword++ updates the weight $\p_{t+1} \in \R^N $ as follows: for any $i \in [N]$,
\begin{equation}
    \label{eq:variation-Hedge-meta}
    p_{t+1,i} \propto \exp\left(-\epsilon \Big(\sum_{s=1}^{t} \ell_{s,i} + m_{t+1,i} \Big)\right),
\end{equation} 
where $\epsilon > 0$ is a (for simplicity) fixed learning rate. Notably, the update only requires the gradient information of $\nabla f_t(\x_t)$ and thus is feasible for the one-gradient feedback model.

\begin{myRemark}[design of correction term]
\label{remark:collaboration}
We emphasize that the correction term $\lambda \norm{\x_{t,i} - \x_{t-1,i}}_2^2$, appearing in the construction of both feedback loss and optimism, is crucial for the design and is the most challenging part in this method. We briefly explain the motivation. As mentioned earlier in~\eqref{eq:sword++-challenge}, the use of linearized surrogate loss $g_t(\x)$ will introduce an additional term ${\sum_{t=2}^T}\norm{\x_t-\x_{t-1}}_2^2$, which \emph{cannot} be directly canceled by the negative term of any base-regret, namely, $-\sum_{t=2}^T\norm{\x_{t,i}-\x_{t-1,i}}_2^2$. To address the difficulty, we scrutinize the positive term and find that actually it can be further expanded as:
\begin{align*}
\norm{\x_t-\x_{t-1}}_2^2 ={}&\left\lVert{\sum_{i=1}^Np_{t,i}\x_{t,i}-\sum_{i=1}^Np_{t-1,i}\x_{t-1,i}}\right\rVert_2^2\notag\\
\leq {}&2\left\lVert{\sum_{i=1}^Np_{t,i}\x_{t,i}-\sum_{i=1}^Np_{t,i}\x_{t-1,i}}\right\rVert_2^2 + 2\left\lVert{\sum_{i=1}^Np_{t,i}\x_{t-1,i}-\sum_{i=1}^Np_{t-1,i}\x_{t-1,i}}\right\rVert_2^2\notag\\
\leq {}& 2\left(\sum_{i=1}^N p_{t,i} \norm{\x_{t,i}-\x_{t-1,i}}_2\right)^2 + 2 \left(\sum_{i=1}^N  \abs{p_{t,i}-p_{t-1,i}}  \norm{\x_{t-1,i}}_2\right)^2\notag\\
\leq {}&2\sum_{i=1}^Np_{t,i}\norm{\x_{t,i}-\x_{t-1,i}}_2^2 + 2D^2\norm{\p_t-\p_{t-1}}_1^2,
\end{align*}
which concludes that
\begin{equation}
  \label{eq:Swordpp-swicthing-cost}
  \sum_{t=2}^T \norm{\x_t - \x_{t-1}}_2^2 \leq 2\sum_{t=2}^T\sum_{i=1}^N p_{t,i}\norm{\x_{t,i}-\x_{t-1,i}}_2^2 + 2D^2\sum_{t=2}^T\norm{\p_t-\p_{t-1}}_1^2.
\end{equation}
The right hand side of~\eqref{eq:Swordpp-swicthing-cost} is a weighted combination of stability of base-learners  (hence called \emph{mixed stability}), and the second one is the stability of the meta-algorithm's weights (hence called \emph{meta stability}). We also similarly define $\sum_{t=2}^T \norm{\x_{t,i} - \x_{t-1,i}}_2^2$ as the \emph{base stability} (of the base learner $i \in [N]$). Clearly, the meta stability can be readily canceled by the negative term of meta-regret. However, it is challenging to address the first positive term, namely, the mixed stability. To overcome the difficulty, we \emph{algorithmically} add the decision-variation correction term in the feedback loss and optimism of the meta-algorithm, as well as leveraging the negative term of base-regret. The underlying intuition is \emph{to penalize base-learners with large decision variations, so as to ensure a small enough variation of final decisions}. As such, we have facilitated the collaborations between the base and meta levels --- the overall positive term ($\sum_{t=2}^T \norm{\x_t - \x_{t-1}}_2^2$) is jointly cancelled out by the negative term of meta-regret ($-\sum_{t=2}^T\norm{\p_t-\p_{t-1}}_1^2$) and the one due to the injected corrections ($-\sum_{t=2}^T\sum_{i=1}^N p_{t,i}\norm{\x_{t,i}-\x_{t-1,i}}_2^2$); and meanwhile, the injected corrections will introduce a new positive term ($\sum_{t=2}^T \norm{\x_{t,i}-\x_{t-1,i}}_2^2$), which can be further tackled by the negative term of base-regret ($-\sum_{t=2}^T \norm{\x_{t,i}-\x_{t-1,i}}_2^2$). A notable characteristic is that the positive terms of meta/base/mixed stability cannot be cancelled solely by the negative terms within their respective layer. Instead, they necessitate additional negative terms, either from regret analysis or algorithmic corrections, to help effectively cancel out. Only through such collaborations within the two-layer online ensembles can the proposed Sword++ algorithm attain the desired gradient-variation dynamic regret, utilizing only one gradient per iteration. A general presentation of this collaborative online ensemble will be provided in Section~\ref{sec:framework-collaborative-OE}.
\endenv \end{myRemark}

We summarize the procedures of Sword++ in Algorithm~\ref{alg:Swordpp-meta} (meta-algorithm) and Algorithm~\ref{alg:Swordpp-base} (base-algorithm). Though multiple base-learners are performed with different step sizes to tackle the uncertainty of non-stationary environments, Sword++ requires the gradient information of $\nabla f_t(\x_t)$ only at round $t$ and then broadcasts it to all the base-learners for local update. Therefore, Sword++ is feasible for the one-gradient feedback model. Moreover, the algorithm provably achieves the same gradient-variation dynamic regret as Sword up to constants, shown in Theorem~\ref{thm:variation-one-gradient}, whose proof is presented in Section~\ref{sec:proof-Swordpp-dynamic-regret}.
\begin{myThm}
\label{thm:variation-one-gradient}
Under Assumptions~\ref{assumption:bounded-gradient},~\ref{assumption:bounded-domain}, and~\ref{assumption:smoothness}, set the pool of candidate step sizes $\H$ as 
\begin{equation}
  \label{eq:step-size-pool-one-gradient}
  \H = \left\{\eta_i = \min\bigg\{\frac{1}{8L}, \sqrt{\frac{D^2}{8G^2T}}\cdot 2^{i-1}\bigg\} \mid i\in [N]\right\},
\end{equation}
where $N = \lceil 2^{-1} \log_2(G^2T/(8D^2L^2))\rceil + 1$ is the number of candidate step sizes; further set the correction coefficient as $\lambda = 2L$ and  the learning rate of the meta-algorithm as $\epsilon = \min\big\{1/(8D^2L), \sqrt{(\ln N)/(D^2(\norm{\nabla f_1(\x_1)}_2^2 + \bar{V}_T})) \big\}$.
Then, Sword++ satisfies
\begin{align*}
   \sum_{t=1}^T f_t(\x_t) - \sum_{t=1}^T f_t(\u_t) \leq \O\left(\sqrt{(1+P_T+V_T)(1+P_T)}\right)
\end{align*}
for any comparator sequence $\u_1,\ldots,\u_T \in \X$. In above, $\bar{V}_T = \sum_{t=2}^T\norm{\nabla f_t(\x_t) - \nabla f_{t-1}(\x_{t-1})}_2^2$ is the variant of gradient variation $V_T$.
\end{myThm}

\begin{figure}[!t]
\begin{minipage}{0.49\textwidth}
\begin{algorithm}[H]
   \caption{\textsf{Sword++}: meta-algorithm}
   \label{alg:Swordpp-meta}
\begin{algorithmic}[1]
  \REQUIRE{step size pool $\H$; learning rate $\epsilon$}
  \STATE{Initialization: $\forall i\in [N], p_{0,i} = 1/N$}
    \FOR{$t=1$ {\bfseries to} $T$}
      \STATE Receive $\x_{t+1,i}$ from base-learner $\B_i$
      \STATE Update weight $p_{t+1,i}$ by~\eqref{eq:meta-surrogate-loss}--\eqref{eq:variation-Hedge-meta}
      \STATE Predict $\x_{t+1} = \sum_{i=1}^{N} p_{t+1,i} \x_{t+1,i}$
    \ENDFOR
\end{algorithmic}
\end{algorithm}
\end{minipage}
\hspace{2mm}
\begin{minipage}{0.49\textwidth}
\begin{algorithm}[H]
   \caption{\textsf{Sword++}: base-algorithm}
   \label{alg:Swordpp-base}
\begin{algorithmic}[1]
  \REQUIRE{step size $\eta_i \in \H$}
  \STATE{Let $\hat{\x}_{1,i}, \x_{1}$ be any point in $\mathcal{X}$}
    \FOR{$t=1$ {\bfseries to} $T$}
      \STATE $\xh_{t+1,i} = \Pi_{\X}[\xh_{t,i} - \eta_i \nabla f_t(\x_t)]$ 
      \STATE $\x_{t+1,i} = \Pi_{\X}[\xh_{t+1,i} - \eta_i  \nabla f_t(\x_t)]$
      \STATE Send $\x_{t+1,i}$ to the meta-algorithm
    \ENDFOR
\end{algorithmic}
\end{algorithm}
\end{minipage}
\end{figure}

Note that the learning rate tuning of the meta-algorithm requires the knowledge of $\bar{V}_T$. Yet, this unpleasant dependence can be removed by performing the self-confident tuning over $\bar{V}_T$ by monitoring the internal estimate $\bar{V}_t = \sum_{s=2}^{t} \norm{\nabla f_s(\x_s) - \nabla f_{s-1}(\x_{s-1})}_2^2$. Importantly, this adaptive learning rate tuning can be easily realized under the one-gradient feedback model, namely, only $\nabla f_t(\x_t)$ available at round $t$. To avoid clutter, we here stick to a fixed learning rate instead of a time-varying one, which is more convenient to demonstrate the collaboration between meta and base layers in the regret analysis (also see Remark~\ref{remark:collaboration-proof} in the proof of Theorem~\ref{thm:variation-one-gradient}). We also defer an adaptive learning rate version to Appendix~\ref{appendix:adaptive-LR}.

Up to now, we have shown that it is possible to design online methods to achieve stronger guarantees than static methods under the challenging one-gradient feedback online learning, and meanwhile suffer no computational overhead in terms of the gradient query complexity. 
\section{A General Framework: Collaborative Online Ensemble}
\label{sec:framework-collaborative-OE}
In this section, we formally introduce the proposed \emph{collaborative online ensemble} framework, a general algorithmic template designed to achieve (problem-dependent) dynamic regret guarantees. This framework is particularly crucial for attaining gradient-variation bounds. As will be demonstrated shortly, our proposed Sword (in Section~\ref{sec:solution1}) and Sword++ (in Section~\ref{sec:solution2}) can both be considered as specific instantiations.

\subsection{Algorithmic Template}
\label{sec:online-ensemble-template}
We focus on the standard OCO setup as specified in Section~\ref{sec:problem}. At iteration $t \in [T]$, the player first chooses the decision $\x_t \in \X$, then the environments reveal the loss function $f_t: \X \mapsto \R$. Subsequently, the player suffers the loss $f_t(\x_t)$ and observes certain gradient information of $\nabla f_t(\cdot)$ according to the feedback model.

The overall algorithmic template implements a meta-base two-layer online ensemble. There are three crucial ingredients in collaborative online ensemble: (i) the surrogate loss, (ii) the surrogate optimism, and (iii) the correction terms. Additionally, the negative terms, hidden in the analysis, play a significant role within the framework. To better present the algorithmic template, we introduce the following notations:
\begin{itemize}
\item for the base-algorithm, let $g_t^{\mathtt{base}}: \X \mapsto \R$ be the base surrogate loss and $h_t^{\mathtt{base}}: \X \mapsto \R$ be the base surrogate optimism;
\item for the meta-algorithm, let $g_t^{\mathtt{meta}}: \X \mapsto \R$ be the meta surrogate loss and $h_t^{\mathtt{meta}}: \X \mapsto \R$ be the meta surrogate optimism, and let $\cb_t \in \R^N$ be the correction term.
\end{itemize}

The base-algorithm updates the decisions $\{\x_{t,i}\}_{i=1}^N$ by Optimistic OGD over the base surrogate loss and optimism, that is,
\begin{align}
\label{eq:general-framework-base}
  \x_{t,i} = \Pi_{\X}\left[\xh_{t,i} - \eta_{i} \nabla h_t^{\mathtt{base}}(\x_{t-1,i})\right],~~ \xh_{t+1,i} = \Pi_{\X}\left[\xh_{t,i}-\eta_{i} \nabla g_t^{\mathtt{base}}(\x_{t,i})\right],
\end{align}
where $\eta_i>0$ is a fixed step size specified by the step size pool $\H = \{\eta_1,\ldots,\eta_N\}$. Subsequently, the player makes the final decision at this round by $\x_t = \sum_{i=1}^N p_{t,i}\x_{t,i}$. 

The meta-algorithm will then update the weight $\p_{t+1} \in \Delta_N$ by Optimistic Hedge over the feedback loss $\ellb_{t} \in \R^N$ and optimism $\m_t \in \R^N$, 
\begin{equation}
    \label{eq:general-framework-meta}
    p_{t+1,i} \propto \exp\left(-\epsilon \Big(\sum_{s=1}^{t} \ell_{s,i} + m_{t+1,i} \Big)\right),
\end{equation}
where $\epsilon > 0$ is (for simplicity) chosen as a fixed learning rate of the meta-algorithm and the feedback loss $\ell_t \in \R^N$ and optimism $\m_t \in \R^N$ are defined as
\begin{equation}
    \label{eq:general-framework-meta-loss-optimism}
    \ell_{t,i} = g_t^{\mathtt{meta}}(\x_{t,i}) + \lambda c_{t,i}, \textnormal{  and  } m_{t,i} = h_t^{\mathtt{meta}}(\x_{t,i}) + \lambda c_{t,i},
\end{equation}
where $\lambda \geq 0$ is the coefficient of the correction terms and $c_{t,i}$ is the $i$-th entry of $\cb_t$.

\begin{myRemark}
The meta-base updates in~\eqref{eq:general-framework-base} and~\eqref{eq:general-framework-meta} are quite versatile, as there are many options for constructing the surrogate (meta/base) loss, optimism, and the correction term. We remind that a feasible construction must adhere to the feedback model --- in the multi-gradient feedback model, the entire gradient function $\nabla f_t(\cdot)$ is available, while in the one-gradient feedback model, only the gradient $\nabla f_t(\x_t)$ is available to the player. In Section~\ref{sec:framework-instantiations}, we will present several concrete instantiations of the general algorithmic template, including the proposed Sword and Sword++ in the earlier subsections.
\endenv \end{myRemark}

\subsection{Instantiations}
\label{sec:framework-instantiations}
In this part, we present three instantiations of the general algorithmic template, including Sword, Sword++, and another important instantiation, which we refer to as \textsf{Sword.optimism}. For clarity, we provide a summary of these instantiations in Table~\ref{table:instantiations-summary}.

\begin{table}[!t]
\caption{Summary of three instantiations of the collaborative online ensemble framework, including Sword, Sword++, and Sword.optimism.}
\vspace{2mm}
\centering
\label{table:instantiations-summary}
\renewcommand*{\arraystretch}{1.25}
\resizebox{\textwidth}{!}{
\begin{tabular}{l|ccccc}
\hline

\hline
\multicolumn{1}{c|}{\textbf{Algorithm}} & $g_t^{\mathtt{base}}(\x)$      & $h_t^{\mathtt{base}}(\x)$              & $g_t^{\mathtt{meta}}(\x)$      & $h_t^{\mathtt{meta}}(\x)$                & $\cb_t$                                      \\ \hline
\textsf{Sword}              & $f_t(\x)$                      & $f_{t-1}(\x)$                          & $\inner{\nabla f_t(\x_t)}{\x}$ & $\inner{\nabla f_{t-1}(\bar{\x}_t)}{\x}$ & $\cb_t=\mathbf{0}$                                 \\
\textsf{Sword++}            & $\inner{\nabla f_t(\x_t)}{\x}$ & $\inner{\nabla f_{t-1}(\x_{t-1})}{\x}$ & $\inner{\nabla f_t(\x_t)}{\x}$ & $\inner{\nabla f_{t-1}(\x_{t-1})}{\x}$   & $c_{t,i} = \norm{\x_{t,i} - \x_{t-1,i}}_2^2$ \\
\textsf{Sword.optimism}     & $\inner{\nabla f_t(\x_t)}{\x}$ & $\inner{M_t}{\x}$                      & $\inner{\nabla f_t(\x_t)}{\x}$ & $\inner{M_t}{\x}$                        & $c_{t,i} = \norm{\x_{t,i} - \x_{t-1,i}}_2^2$ \\ \hline

\hline
\end{tabular}
}
\end{table}

\paragraph{Recovering Sword.} We instantiate the algorithmic template as follows: setting
\begin{itemize}
\item  base surrogate loss as $g_t^{\mathtt{base}}(\x) = f_t(\x)$ and base optimism as $h_t^{\mathtt{base}}(\x) = f_{t-1}(\x)$;
\item  meta surrogate loss as $g_t^{\mathtt{meta}}(\x) = \inner{\nabla f_t(\x_t)}{\x}$ and  meta optimism as $h_t^{\mathtt{meta}}(\x) = \inner{\nabla f_{t-1}(\bar{\x}_t)}{\x}$, as well as  correction term as $\cb_t = \mathbf{0}$.
\end{itemize}
Then, the template updates in the following way: the base-algorithm updates by
\begin{align*}
  \x_{t,i} = \Pi_{\X}\left[\xh_{t,i} - \eta_{i} \nabla f_{t-1}(\x_{t-1,i})\right],~~ \xh_{t+1,i} = \Pi_{\X}\left[\xh_{t,i}-\eta_{i}\nabla f_{t}(\x_{t,i})\right],
\end{align*}
and the meta-algorithm updates by
\begin{equation*}
    p_{t+1,i} \propto \exp\left(-\epsilon \Big(\sum_{s=1}^{t} \inner{\nabla f_s(\x_s)}{\x_{s,i}} + \inner{\nabla f_t(\bar{\x}_{t+1})}{\x_{t,i}} \Big)\right).
\end{equation*}
The update procedures precisely recover Sword as presented in Algorithms~\ref{alg:Sword-meta} and~\ref{alg:Sword-base}. Note that in Sword, there are no correction terms, since the gradient-variation dynamic regret bound is attained by guaranteeing gradient-variation meta-regret for the meta-algorithm and gradient-variation base-regret for the base-algorithm, respectively.

\paragraph{Recovering Sword$++$.} We instantiate the algorithmic template as follows: setting
\begin{itemize}
\item  base surrogate loss as $g_t^{\mathtt{base}}(\x) = \inner{\nabla f_t(\x_t)}{\x}$ and base optimism as $h_t^{\mathtt{base}}(\x) = \inner{\nabla f_{t-1}(\x_{t-1})}{\x}$;
\item meta surrogate loss as $g_t^{\mathtt{meta}}(\x) = \inner{\nabla f_t(\x_t)}{\x}$ and meta optimism as $h_t^{\mathtt{meta}}(\x) = \inner{\nabla f_{t-1}(\x_{t-1})}{\x}$, as well as  correction term $\cb_t$ as $c_{t,i} = \norm{\x_{t,i} - \x_{t-1,i}}_2^2$ with $\x_{0,1} = \mathbf{0}$.
\end{itemize}
Then, the template updates in the following way: the base-algorithm updates by
\begin{align*}
  \x_{t,i} = \Pi_{\X}\left[\xh_{t,i} - \eta_{i} \nabla f_{t-1}(\x_{t-1})\right],~~ \xh_{t+1,i} = \Pi_{\X}\left[\xh_{t,i}-\eta_{i}\nabla f_{t}(\x_{t})\right],
\end{align*}
and the meta-algorithm updates by
\begin{align*}
    p_{t+1,i} \propto \exp\bigg(-\epsilon \Big(\sum_{s=1}^{t} \inner{\nabla f_s(\x_s)}{\x_{s,i}} + \lambda \sum_{s=1}^{t+1} \norm{\x_{s,i} - \x_{s-1,i}}_2^2  + \inner{\nabla f_t(\x_{t})}{\x_{t+1,i}} \Big)\bigg).
\end{align*}
The update procedures precisely correspond to Sword++. We emphasize once more that the algorithmic updates only necessitate querying the gradient $\nabla f_t(\x_t)$ at each round $t \in [T]$.

\paragraph{Another important instantiation.} We further present another instantiation of the template that can be of independent interest. The resulting algorithm can achieve an optimistic dynamic regret bound of order $\O(\sqrt{A_T(1+P_T)})$, where $A_T = \sum_{t=1}^T \norm{\nabla f_t(\x_t) - M_t}_2^2$ measures the quality of the optimistic vectors $\{M_t\}_{t=1}^T$. We instantiate the algorithmic template as follows: setting
\begin{itemize}
\item  base surrogate loss $g_t^{\mathtt{base}}(\x) = \inner{\nabla f_t(\x_t)}{\x}$ and base optimism $h_t^{\mathtt{base}}(\x) = \inner{M_t}{\x}$;
\item meta surrogate loss $g_t^{\mathtt{meta}}(\x) = \inner{\nabla f_t(\x_t)}{\x}$ and meta optimism $h_t^{\mathtt{meta}}(\x) = \inner{M_t}{\x}$, as well as  correction term $\cb_t$ as $c_{t,i} = \norm{\x_{t,i} - \x_{t-1,i}}_2^2$ with $\x_{0,1} = \mathbf{0}$.
\end{itemize}
Then, the template updates in the following way: the base-algorithm updates by
\begin{align}
\label{eq:general-base}
  \x_{t,i} = \Pi_{\X}\left[\xh_{t,i} - \eta_{i} M_t\right],~~ \xh_{t+1,i} = \Pi_{\X}\left[\xh_{t,i}-\eta_{i}\nabla f_{t}(\x_{t})\right],
\end{align}
and the meta-algorithm updates by
\begin{align}
\label{eq:general-meta}
    p_{t+1,i} \propto \exp\bigg(-\epsilon \Big(\sum_{s=1}^{t} \inner{\nabla f_s(\x_s)}{\x_{s,i}} + \lambda \sum_{s=1}^{t+1} \norm{\x_{s,i} - \x_{s-1,i}}_2^2  + \inner{M_{t+1}}{\x_{t+1,i}} \Big)\bigg).
\end{align}
We refer to the above meta-base updates,~\eqref{eq:general-base} and~\eqref{eq:general-meta}, as \textsf{Sword.optimism}. Its dynamic regret analysis detailed in Section~\ref{sec:theoretical-guarantee}. Notice that by setting the optimism as the last-round gradient, specifically, $M_t = \nabla f_{t-1}(\x_{t-1})$, Sword.optimism recovers Sword++ exactly.

\subsection{Theoretical Guarantee} 
\label{sec:theoretical-guarantee}
In this part, we present the dynamic regret analysis for Sword.optimism, which is arguably the most general instantiation of the collaborative online ensemble template. It is straightforward to extend Theorem~\ref{thm:general-online-ensemble} for the general template presented in Section~\ref{sec:online-ensemble-template}, specifically, the meta-base updates in~\eqref{eq:general-framework-base} and~\eqref{eq:general-framework-meta}. However, the various variables in the general template may somewhat obscure the core ideas. Therefore, we choose to showcase the dynamic regret analysis for Sword.optimism, as its analysis effectively captures the essence and its algorithm is also sufficiently general (for instance, it can specialize Sword++).

\begin{myThm}
\label{thm:general-online-ensemble}
Under Assumptions~\ref{assumption:bounded-gradient} and~\ref{assumption:bounded-domain}, set the pool of candidate step sizes $\H$ as 
\begin{equation}
  \label{eq:step-size-pool-one-gradient-general}
  \H = \left\{\eta_i = \min\bigg\{\bar{\eta}, \sqrt{\frac{D^2}{8G^2T}}\cdot 2^{i-1}\bigg\} \mid i\in [N]\right\},
\end{equation}
where $N = \lceil 2^{-1} \log_2((8G^2T\bar{\eta}^2)/D^2)\rceil + 1$ is the number of candidate step sizes; further set the learning rate of the meta-algorithm as 
\begin{equation}
  \label{eq:lr-meta-solution2}
  \epsilon = \min\left\{ \bar{\epsilon}, \sqrt{\frac{\ln N}{D^2 \sum_{t=1}^T\norm{\nabla f_t(\x_t) - M_t}_2^2}} \right\}.
\end{equation}
Then, \textsf{Sword.optimism} satisfies that for any comparator sequence $\u_1,\ldots,\u_T \in \X$,
\begin{equation}
\label{eq:general-dynamic-M_T}
\begin{split}
\sum_{t=1}^T f_t(\x_t) - \sum_{t=1}^T f_t(\u_t) {}& \leq 2\sqrt{D^2(\ln N)A_T }+ 2 \sqrt{(D^2+2DP_T)A_T}\\
& + \frac{2\ln N}{\bar{\epsilon}} + \frac{2(D^2+2DP_T)}{\bar{\eta}}  + \left(\lambda-\frac{1}{4\bar{\eta}}\right) S_{x,i} - \frac{1}{4\bar{\epsilon}} S_p - \lambda S_{\mathrm{mix}}.
\end{split}
\end{equation}
In above, $A_T = \sum_{t=1}^{T} \norm{\nabla f_t(\x_t)-M_t}_2^2$ is the adaptivity term, $P_T = \sum_{t=2}^{T} \norm{\u_{t-1} - \u_{t}}_2$ is the path length, $S_{x,i} = \sum_{t=2}^T\norm{\x_{t,i}-\x_{t-1,i}}_2^2$, $S_p = \sum_{t=2}^T\norm{\p_t-\p_{t-1}}_1^2$, and $S_{\mathrm{mix}}= \sum_{t=2}^T\sum_{i=1}^Np_{t,i}\norm{\x_{t,i}-\x_{t-1,i}}_2^2$ are base stability, meta stability, and mixed stability.
\end{myThm}

The proof of Theorem~\ref{thm:general-online-ensemble} is presented in Section~\ref{sec:proof-theorem-ensemble}. Notice that by setting the correction coefficient $\lambda = 0$ and setting clipped parameters $\bar{\eta}$ and $\bar{\epsilon}$ as appropriate constants, Theorem~\ref{thm:general-online-ensemble} directly implies an $\O(\sqrt{A_T(1+P_T)})$ dynamic regret for Sword.optimism. 

As aforementioned, when setting the optimism as $M_t = \nabla f_{t-1}(\x_{t-1})$, Sword.optimism recovers Sword++. Consequently, Theorem~\ref{thm:general-online-ensemble} serves as a preliminary analysis for Sword++ by substituting $M_t = \nabla f_{t-1}(\x_{t-1})$ in the upper bound~\eqref{eq:general-dynamic-M_T}. By further combining the analysis of~\eqref{eq:Swordpp-swicthing-cost} in Remark~\ref{remark:collaboration}, we can then prove the gradient-variation bound of Theorem~\ref{thm:variation-one-gradient}, see the detailed argument in Section~\ref{sec:proof-Swordpp-dynamic-regret}. The key element is to effectively cancel out the additional positive term using negative terms and correction terms jointly, which are strategically introduced due to the collaboration between the meta and base levels.
\section{Implication, Significance, and Lower Bound}
\label{sec:implications}
In this section, we present several additional results, including the implication to small-loss dynamic regret, the implication to the worst-case dynamic regret, the significance of problem-dependent bounds, and a lower bound justification.  

\subsection{Implication to Small-Loss Dynamic Regret}
In this part, we investigate another problem-dependent quantity --- the cumulative loss of comparators defined as $F_T = \sum_{t=1}^{T} f_t(\u_t)$. 

In the conference version, we propose the Sword algorithm (presented in Section~\ref{sec:solution1}) to achieve the gradient-variation dynamic regret, and then propose a variant to attain the small-loss bound, which employs OGD as the base-algorithm and uses the vanilla Hedge with linearized surrogate loss as the meta-algorithm (i.e., choosing the optimistic vector $M_t = \mathbf{0}$ for both meta- and base-algorithms). In the current paper, we demonstrate that the improved algorithm Sword++ designed in Section~\ref{sec:solution2} itself provably achieves the small-loss dynamic regret \emph{without} any algorithmic modification. In fact, we have the following theorem regarding the small-loss bound of Sword++, whose proof is in Section~\ref{sec:proof-small-loss-bobw}.
\begin{myThm}
\label{thm:small-loss-one-gradient}
Set the parameters the same as those in Theorem~\ref{thm:variation-one-gradient}. Under Assumptions~\ref{assumption:bounded-gradient},~\ref{assumption:bounded-domain},~\ref{assumption:smoothness}, and~\ref{assumption:non-negative}, Sword++ satisfies that
\begin{align*}
   \sum_{t=1}^T f_t(\x_t) - \sum_{t=1}^T f_t(\u_t) \leq \O\left(\sqrt{(1+P_T+F_T)(1+P_T)}\right),
\end{align*}
and hence achieves the best-of-both-worlds guarantee:
\begin{align*}
   \sum_{t=1}^T f_t(\x_t) - \sum_{t=1}^T f_t(\u_t) \leq \O\left(\sqrt{(1+P_T+ \min\{V_T, F_T\})(1+P_T)}\right).
\end{align*}
The bounds hold for any comparator sequence $\u_1,\ldots,\u_T \in \X$.
\end{myThm}
Comparing with Theorem~\ref{thm:variation-one-gradient}, one more assumption (Assumption~\ref{assumption:non-negative}) is required. As mentioned, this non-negativity assumption is a precondition for establishing the self-bounding property for smooth functions~\citep{NIPS'10:smooth,lecture-note:Ashok}, and thus is commonly used in the small-loss analysis of online learning and stochastic optimization~\citep{NIPS'10:smooth,NIPS'11:minibatch,ICML'13:logT-projection,ICML19:Zhang-Adaptive-Smooth,COLT19:zhanglj-FASA}.

\begin{myRemark}
Our conference version~\citep{NIPS'20:sword} achieves the best-of-both-worlds bound in a different way, in which we use a heterogeneous model selection method of learning an optimism~\citep{conf/colt/RakhlinS13} since different optimistic vectors are used for the small-loss and gradient-variation bounds. As such, three algorithms (Sword$_\text{var}$, Sword$_\text{small}$, and Sword$_\text{best}$) are designed to achieve the three different bounds (gradient-variation, small-loss, and best-of-both-worlds bounds) respectively. By contrast, Theorem~\ref{thm:small-loss-one-gradient} indicates that the Sword++ algorithm can achieve \emph{all} the three problem-dependent dynamic regret bounds without any modifications, owing to its one-gradient query complexity property. This also, to some extent, demonstrates the fundamental importance of achieving gradient-variation bounds --- sometimes this can directly imply a small-loss bound in the analysis.
\endenv 
\end{myRemark}

\begin{myRemark}
\label{remark:discussion-VT-lower-bound}
Comparing to the $\O(\sqrt{T(1+P_T)})$ minimax rate, Theorem~\ref{thm:small-loss-one-gradient} replaces the dependence on $T$ by the problem-dependent quantity $P_T + \min\{V_T,F_T\}$ and thus achieves dual adaptivity in terms of both gradient variation $V_T$ and the small-loss quantity $F_T$. Furthermore, one may wonder whether it is possible to replace $T$ by $\min\{V_T,F_T\}$ only. This requires a lower bound argument and we only have a partial answer. Specifically, we prove in Theorem~\ref{thm:lower-bound} that no algorithm can achieve an $\O(\sqrt{(1+F_T)(1+P_T)})$ bound in general. Nevertheless, we fail to provide a similar reasoning for the gradient-variation bound. Indeed, we have the following conjectures. For the multi-gradient feedback model, we are inclined to believe that the $\O(\sqrt{(1+P_T + V_T)(1+P_T)})$ rate may not be optimal and it might be possible to achieve $\O(\sqrt{(1+V_T)(1+P_T)})$. For the one-gradient feedback model, we conjecture that our obtained rate has already been optimal. We leave this problem-dependent lower bound as an important future work to investigate. 
\endenv 
\end{myRemark}

\subsection{Implication to Worst-Case Dynamic Regret}
In this part, we present the implication of the universal dynamic regret to the worst-case dynamic regret. As discussed in Section~\ref{sec:related-work-dynamic-regret}, for the worst-case dynamic regret, there are two kinds of regularities: path length $P_T^* = \sum_{t=2}^T \norm{\x^*_{t-1} - \x_t^*}_2$ and function variation $V_T^f = \sum_{t=2}^T\sup_{\x\in\X}\vert f_{t-1}(\x) - f_{t}(\x)\vert$. The following theorem provides a reduction to both.
\begin{myThm}
\label{thm:implication-worst-case-bound}
Let $A_T \in \R_{+}$ be a certain adaptivity term. Suppose there exists an algorithm $\mathcal{A}$ that ensures the following guarantee: for any comparator sequence $\u_1,\ldots\u_T \in\X$ with path length $P_T = \sum_{t=2}^T \norm{\u_t-\u_{t-1}}_2$, it holds that 
\begin{align}
\textnormal{D-Regret}_T(\u_1,\dots,\u_T) \leq \sqrt{A_T (D+P_T)},\label{eq:implication-UDR}
\end{align}
for some constant $D>0$. Then $\mathcal{A}$ enjoys the worst-case dynamic regret bound: 
\begin{align}
\textnormal{D-Regret}_T(\x^*_1,\dots,\x^*_T) \leq 3\sqrt{DA_T} + \min\left\{\sqrt{A_TP^*_T},5D^{1/3}T^{1/3}A_T^{1/3}(V_T^f)^{1/3}\right\}. \label{eq:implication-WDR}
\end{align}
\end{myThm}
Theorem~\ref{thm:implication-worst-case-bound} demonstrates that an $\O(\sqrt{A_T(1+P_T)})$ universal dynamic regret bound can \emph{directly} imply an $\O(\sqrt{A_T}+\min\{\sqrt{A_TP_T^*},(TA_TV_T^f)^{1/3}\})$ worst-case dynamic regret bound. A typical choice of this adaptivity term is $A_T = \sum_{t=1}^{T} \norm{\nabla f_t(\x_t)-M_t}_2^2$ that measures the quality of optimistic gradient vectors $\{M_t\}_{t=1}^T$. Then, the implication matches the best-known optimistic worst-case dynamic regret bound presented in~\citep{AISTATS'15:dynamic-optimistic,UAI'20:simple}, taking the best of the path-length and function-variation regularities. It is worth noting that~\citet{AISTATS'15:dynamic-optimistic} achieve this result through a carefully designed  doubling trick scheme, which will introduce a potentially non-convex inner optimization $\sup_{\x \in \X} \abs{f_t(\x) - f_{t-1}(\x)}$ at iteration $t \in[T]$. In contrast, our Theorem~\ref{thm:implication-worst-case-bound} demonstrates that when the algorithm achieves an $\O(\sqrt{A_T(1+P_T)})$ universal dynamic regret, it automatically obtains the desired worst-case dynamic regret bounds. Notably, our proposed Sword.optimism algorithm (see the last instantiation in Section~\ref{sec:framework-instantiations}) already satisfies this requirement using the collaborative online ensemble framework.

The proof of Theorem~\ref{thm:implication-worst-case-bound} can be found in Section~\ref{sec:implication-worst-case-bound}.
Given the universal dynamic regret bound~\eqref{eq:implication-UDR}, one can immediately derive an $\O(\sqrt{A_T}+\sqrt{A_TP_T^*})$ worst-case path-length bound by setting $\u_t = \x_t^*$ for any $t \in [T]$, but it is less straightforward to obtain the $\O(\sqrt{A_T}+T^{1/3}A_T^{1/3}(V_T^f)^{1/3})$ function-variation bound. To achieve so, we need to introduce a reference comparator sequence that exhibits piecewise-stationary behavior. The desired function-variation bound is then achievable by optimally tuning the stationary length of the sequence during the analysis. The idea was introduced in~\citet[Appendix~A.2]{UAI'20:simple}, but an explicit reduction was not provided. We offer a clear presentation of the results. 

Moreover, in Theorem~\ref{thm:implication-worst-case-bound}, we focus on the $\O(\sqrt{A_T(1+P_T)})$ universal dynamic regret bound, which incorporates the general adaptivity term $A_T$. Using a similar analysis, we can also convert the gradient-variation/small-loss universal dynamic regret bounds, attained by Sword and Sword++, into the worst-case dynamic regret bounds. Details are omitted here.

\subsection{Significance of Problem-Dependent Bounds}
\label{sec:example}
In this part, we justify the significance of our problem-dependent dynamic regret bounds. We present two concrete problem instances to demonstrate that it is possible to achieve a \emph{constant} dynamic regret bound instead of the minimax rate $\O(\sqrt{T(1+P_T)})$.

We consider the quadratic loss function of the form $f_t(x) = \frac{1}{2}(a_t \cdot x-b_t)^2$, where $a_t \neq 0$ and $x \in \X \triangleq [-1,1]$. Clearly, the online function $f_t: \R \mapsto \R$ is convex and smooth. Denote by $T$ the time horizon. The coefficients $a_t$ and $b_t$ will be specified below in each instance.
\begin{myInstance}[{$V_T \ll F_T$}]
Let the time horizon $T = 2K+1$ be odd with $K > 2$. We set the coefficients $a_t = 0.5 - \frac{t-1}{T}$ and $b_t =1 $ for all $t \in [T]$. 
\end{myInstance}
We set the comparator $u_t$ to be the minimizer of $f_t$, i.e, $u_t = x_t^* = \argmin_{x \in \X} f_t(x)$. Clearly, $u_t = 1$ for $t \in [K+1]$, and $u_t = -1$ for $t = K+2,\ldots,T$. A direct calculation shows
\begin{align*}
V_T ={}& \sum_{t=2}^T\sup_{x\in\X}\vert(a_{t-1}^2-a_t^2)x-(a_{t-1}-a_t)\vert^2 = \sum_{t=2}^T\sup_{x\in\X}\left\vert\left(\frac{T-2t+3}{T^2}\right)\cdot x-\frac{1}{T}\right\vert^2\\
={}&\sum_{t=2}^{K+2}\left(\frac{2T-(2t-3)}{T^2}\right)^2+\sum_{t=K+3}^T\left(\frac{2t-3}{T^2}\right)^2 \leq \sum_{t=2}^T \left(\frac{2}{T}\right)^2 = \O(1).
\end{align*}
\begin{align*}
  F_T = {} & \sum_{t=1}^{T} \frac{1}{2}(a_t u_t -b_t)^2 = \sum_{t=1}^{K+1} \frac{1}{2}\left(0.5-\frac{t-1}{T} - 1\right)^2 + \sum_{t=K+2}^{T} \frac{1}{2}\left(-0.5+\frac{t-1}{T} - 1\right)^2 = \Theta(T).
\end{align*}
We can observe that the gradient variation $V_T =\O(1)$ is significantly smaller than the small-loss quantity $F_T = \Theta(T)$ in this problem instance; and meanwhile, the path length is $P_T = \O(1)$. Then, the minimax dynamic regret bound is $\O(\sqrt{T(1+P_T)}) = \O(\sqrt{T})$; the small-loss bound is $\O(\sqrt{(1+P_T+F_T)(1+P_T)}) = \O(\sqrt{T})$; and the gradient-variation bound is $\O(\sqrt{(1+P_T+V_T)(1+P_T)}) = \O(1)$. As a result, by exploiting the problem's structure, Sword++ can enjoy a \emph{constant} dynamic regret against $u_1,\ldots,u_T$  in this scenario, significantly improving upon the problem-independent bound of order $\O(\sqrt{T})$.

\begin{myInstance}[{$F_T \ll V_T$}]
Let the time horizon $T = 2K$ be even. During the first half iterations, $(a_t,b_t)$ is set as $(1,1)$ on odd rounds and $(0.5,0.5)$ on even rounds. During the remaining iterations, $(a_t,b_t)$ is set as $(1,-1)$ on odd rounds and $(0.5,-0.5)$ on even rounds.
\end{myInstance}
We set the comparator $u_t$ to be the minimizer of $f_t$, i.e, $u_t = x_t^* = \argmin_{x \in \X} f_t(x)$. Clearly, $u_t = 1$ for $t \in [K]$, and $u_t = -1$ for $t = K+1,\ldots,T$. A direct calculation shows
\begin{align*}
    V_T =\sum_{t=2}^{T} \sup_{x\in \X} \abs{(a_{t-1}^2 - a_t^2)x - (a_{t-1}b_{t-1} - a_tb_t)}^2 = \Theta(T),\qquad F_T = 0.
\end{align*}
We can see that the small-loss quantity $F_T = 0$ is considerably smaller than the gradient variation $V_T = \Theta(T)$ in this scenario; and meanwhile, the path length is $P_T = \O(1)$. Then, the minimax dynamic regret bound is $\O(\sqrt{T(1+P_T)}) = \O(\sqrt{T})$; the gradient-variation bound is $\O(\sqrt{(1+P_T+V_T)(1+P_T)}) = \O(\sqrt{T})$; and the small-loss bound is $\O(\sqrt{(1+P_T+F_T)(1+P_T)}) = \O(1)$. As a result, by exploiting the problem's structure, Sword++ can enjoy a \emph{constant} dynamic regret against $u_1,\ldots,u_T$ in this scenario, significantly improving upon the problem-independent bound of order $\O(\sqrt{T})$. 

\subsection{A Lower Bound}
\label{sec:lower-bound}
We here present a lower bound for dynamic regret of convex and smooth functions. 

\begin{myThm}
\label{thm:lower-bound}
For any online algorithm $\mathcal{A}$, there always exists a sequence of convex and smooth functions $f_1,\ldots,f_T$ and a sequence of comparators $\u_1,\ldots,\u_T$, such that, for any constant $c>0$,
\begin{equation}
    \label{eq:lower-bound}
    \sum_{t=1}^{T} f_t(\x_t) - \sum_{t=1}^T f_t(\u_t) >c \sqrt{(1+F_T)(1+P_T)}.
\end{equation}
when the time horizon $T$ is sufficiently large.
\end{myThm}

For the static regret bound, the worst-case minimax rate $\O(\sqrt{T})$ can be improved to $\O({\sqrt{F_T}})$ or $\O({\sqrt{V_T}})$ by substituting the dependence on $T$ with problem-dependent quantities. A natural question for universal dynamic regret is whether it is possible to also attain an $\O(\sqrt{(1 + \min\{V_T, F_T\})(1 + P_T)})$ bound that improves the minimax rate $\O(\sqrt{T(1 + P_T)})$. Theorem~\ref{thm:lower-bound} shows that no algorithm can achieve the $\O(\sqrt{(1 + F_T)(1 + P_T)})$ universal dynamic regret bound. We provide the proof in Section~\ref{sec:proof-lower-bound}, where the probabilistic method is applied to show the contradiction. In the constructed problem instance, the small-loss quantity is always $F_T  = 0$, and there exist a certain online function sequence $\{f_t\}_{t=1}^T$ such that the dynamic regret is lower bound by $\Omega(T)$. Therefore, the $\O(\sqrt{(1 + F_T)(1 + P_T)})$ upper bound would violate this lower bound, rendering it unfeasible. Nevertheless, as the gradient variation $V_T = \sum_{t=2}^T \sup_{\x \in \X} \norm{\nabla f_t(\x) - \nabla f_{t-1}(\x)}_2^2$ is larger than $0$ in this instance, we cannot rule out the possibility of the $\O(\sqrt{(1+V_T)(1+P_T)})$ upper bound.
\section{Proofs}
\label{sec:appendix-analysis}
This section presents the proofs of main results, including Theorem~\ref{thm:dynamic-regret-OMD-generic} of Section~\ref{sec:problem-setup-OMD}, Theorems~\ref{thm:variation-meta-regret}--\ref{thm:variation-one-gradient} of Section~\ref{sec:gradient-variation}, Theorem~\ref{thm:general-online-ensemble} of Section~\ref{sec:framework-collaborative-OE}, and Theorems~\ref{thm:small-loss-one-gradient}--\ref{thm:lower-bound} of Section~\ref{sec:implications}.

\subsection{{Proof of Theorem~\ref{thm:dynamic-regret-OMD-generic}}} 
\begin{proof}
The instantaneous dynamic regret can be upper bounded and decomposed as
\begin{align*}
    f_t(\x_t) - f_t(\u_t) \leq \underbrace{\inner{\nabla f_t(\x_t) - M_t}{\x_t - \xh_{t+1}}}_{\term{a}} + \underbrace{\inner{M_t}{\x_t - \xh_{t+1}}}_{\term{b}} + \underbrace{\inner{\nabla f_t(\x_t)}{\xh_{t+1} - \u_t}}_{\term{c}}.
\end{align*}
In the following, we use the stability lemma (Lemma~\ref{lemma:stability-OMD}) to bound term (a) and appeal to the Bregman proximal inequality (Lemma~\ref{lemma:bregman-divergence}) to bound term (b) and term (c).

We first investigate term (a). Intuitively, the prediction $\x_t$ should be close the $\xh_{t+1}$ when the optimistic vector $M_t$ is close to the gradient of the next iteration $\nabla f_t(\x_t)$. The intuition is formalized in the stability lemma~\citep[Propostion 7]{COLT'12:variation-Yang}, as restated in Lemma~\ref{lemma:stability-OMD} of Appendix~\ref{sec:appendix-technical-lemmas}, which implies $\norm{\x_t - \xh_{t+1}} \leq \eta_t \norm{\nabla f_t(\x_t) - M_t}_*$ and consequently,
\begin{equation*}
    \begin{split}   
    \term{a}  \leq \norm{\nabla f_t(\x_t) - M_t}_* \norm{\x_t - \xh_{t+1}} \leq \eta_t \norm{\nabla f_t(\x_t) - M_t}_*^2.
    \end{split}
\end{equation*}

We now analyze term (b) and term (c). By the Bregman proximal inequality (Lemma~\ref{lemma:bregman-divergence}) and the \textsc{Optimistic OMD} update step $\x_{t} =\argmin_{\x \in \X} \{\eta_t \inner{M_t}{\x} + \Div{\x}{\xh_{t}}\}$, we have 
\begin{equation*}
    \term{b} = \inner{M_t}{\x_t - \xh_{t+1}} \leq \frac{1}{\eta_t} \Big( \Div{\xh_{t+1}}{\xh_t} - \Div{\xh_{t+1}}{\x_t} - \Div{\x_{t}}{\xh_t}\Big).
\end{equation*}
Similarly, the update $\xh_{t+1} = \argmin_{\x \in \X} \{\eta_{t}\inner{\nabla f_{t}(\x_{t})}{\x} + \Div{\x}{\xh_{t}}\}$ implies
\begin{equation*}
    \term{c} = \inner{\nabla f_t(\x_t)}{\xh_{t+1} - \u_t} \leq \frac{1}{\eta_t} \Big( \Div{\u_t}{\xh_t} - \Div{\u_t}{\xh_{t+1}} - \Div{\xh_{t+1}}{\xh_t}\Big).
\end{equation*}
Combining the three upper bounds completes the proof.
\end{proof}

\subsection{Proof of Theorem~\ref{thm:variation-meta-regret}}
\label{sec:proof-Sword-meta}
\begin{proof}
Substituting the definitions of feedback loss and optimism into Lemma~\ref{lemma:OptimisticHedge} yields
\begin{equation*}
\sum_{t=1}^T\inner{\nabla f_t(\x_t)}{\x_t-\x_{t,i}}\leq \epsilon D^2 \norm{\nabla f_t(\x_t) - \nabla f_{t-1}(\xb_t)}_2^2 + \frac{\ln N}{\epsilon} - \frac{1}{4\epsilon}\sum_{t=2}^{T} \norm{\p_{t} - \p_{t-1}}_1^2.
\end{equation*}
Together with the derivations in~\eqref{eq:Sword-swicthing-cost} and~\eqref{eq:Sword-adaptivity-expand}, this implies
\begin{equation*}
\sum_{t=1}^T\inner{\nabla f_t(\x_t)}{\x_t-\x_{t,i}}\leq  2\epsilon D^2(G^2 + V_T) + \frac{\ln N}{\epsilon} + \left(2D^4L^2\epsilon- \frac{1}{4\epsilon}\right)\sum_{t=2}^{T} \norm{\p_{t} - \p_{t-1}}_1^2.
\end{equation*}
Setting the learning rate as $\epsilon = \min\{1/(4D^2L),\sqrt{(\ln N)/(2D^2(G^2 + V_T))}\}$, by Lemma~\ref{lemma:inequality} we obtain an $2D\sqrt{2 (G^2 + V_T)\ln N} + 8D^2L\ln N$ upper bound, which ends the proof as $f_t(\x_t) - f_t(\x_{t,i}) \leq \inner{\nabla f_t(\x_t)}{\x_t-\x_{t,i}}$ holds due to the convexity of online functions. 
\end{proof}

\subsection{Proof of Theorem~\ref{thm:dynamic-var}}
\label{sec:proof-Sword-dynamic-regret}
\begin{proof} As stated in~\eqref{eq:meta-base-decompose}, dynamic regret can be decomposed into the meta-regret and base-regret, and the decomposition holds for any base-learner's index $i \in [N]$. 

\paragraph{Upper bound of meta-regret.} Theorem~\ref{thm:variation-meta-regret} shows that for any $i \in [N]$,
\begin{equation}
\label{eq:Sword-meta-regret-final}
  \meta = \sum_{t=1}^T f_t(\x_t) - \sum_{t=1}^Tf_t(\x_{t,i}) \leq 2D\sqrt{2(4G^2 + V_T)\ln N} + 8D^2L\ln N.
\end{equation}

\paragraph{Upper bound of base-regret.} Lemma~\ref{lemma:OEGD-variation} indicates that for any index $i \in [N]$,
\begin{equation}
\label{eq:proof-thm5-base-D}
\base = \sum_{t=1}^{T} f_t(\x_{t,i}) - \sum_{t=1}^{T} f_t(\u_t) \leq \eta_i (G^2 + 2V_T) + \frac{1}{2\eta_i}(D^2 + 2DP_T),
\end{equation}
where $\eta_i \in \H$ is the step size associated with the $i$-th base-learner. Recall in Lemma~\ref{lemma:OEGD-variation}, we require the step size $\eta_i \leq 1/(4L)$ to leverage the negative term in the regret analysis. Denote by $\eta^* = \sqrt{(D^2 + 2DP_T)/(G^2 + 2V_T)}$ the optimal step size without considering the constraint and by $\eta^\dag = \min\{{1}/{(4L)},\eta^*\}$ the clipped one. Notice that we have $\eta_1 = \sqrt{D^2/(8G^2T)}$, $\eta_N = 1/(4L)$, and $\eta_1 \leq \eta^\dag \leq \eta_N$, due to path length $P_T \in [0,DT]$ and gradient variation $V_T \leq 4G^2(T-1)$ by Assumption~\ref{assumption:bounded-gradient} and Assumption~\ref{assumption:bounded-domain}. More importantly, owing to the construction of the step size pool $\H$ in~\eqref{eq:step-size-pool-variation}, we can assure that there exists an index $i^* \in [N]$ such that $\eta_{i^*} \leq \eta^\dag \leq \eta_{i^*+1} = 2\eta_{i^*}$. As a result, we pick $i = i^*$ in~\eqref{eq:proof-thm5-base-D} and get  
\begin{align}
  \base \leq {} & \eta_{i^*} (G^2+ 2V_T) + \frac{D^2+2DP_T}{2\eta_{i^*}}  \nonumber \\
  \leq {} & \eta^\dag (G^2 + 2V_T) + \frac{D^2 + 2DP_T}{\eta^\dag} \label{eq:upper-step-1-variation}\\
  \leq {} & 2 \sqrt{(G^2+2V_T)(D^2 + 2DP_T)} + 8L(D^2 + 2DP_T) \label{eq:upper-step-2-variation}\\
  \leq {} & \O\Big( \sqrt{(1 + P_T + V_T)(1 + P_T)} \Big). \label{eq:base-regret-variation}  
\end{align}
In above,~\eqref{eq:upper-step-2-variation} holds because $\eta^\dag$ is either $\eta^*$ or $1/(4L)$ and 
\begin{itemize}
  \item when $\eta^\dag = \eta^*$, the right hand side of~\eqref{eq:upper-step-1-variation} $ = 2 \sqrt{(G^2 + 2V_T)(D^2 + 2DP_T)}$;
  \item when $\eta^\dag = 1/(4L)$, we have $\eta^* = \sqrt{(D^2 + 2DP_T)/(G^2 + 2V_T)}\geq \frac{1}{4L}$, which implies that $\frac{1}{4L}(G^2+2V_T)\leq 4L (D^2 + 2DP_T)$. Under such a case, the right hand side of~\eqref{eq:upper-step-1-variation} $ = 4L(D^2 + 2DP_T) + \frac{1}{4L}(G^2 + 2V_T) \leq 8L(D^2 + 2DP_T)$.
\end{itemize} 
Combining two upper bounds yields~\eqref{eq:upper-step-2-variation} and further obtains~\eqref{eq:base-regret-variation}.

\paragraph{Upper bound of overall dynamic regret.}
Note that the meta-base regret decomposition~\eqref{eq:meta-base-decompose} and meta-regret upper bound~\eqref{eq:Sword-meta-regret-final} hold for any index $i \in [N]$. Hence, we can choose the index as $i^*$ as specified above and further combine the base-regret upper bound~\eqref{eq:proof-thm5-base-D} to achieve the final desired result. Hence, we complete the proof of Theorem~\ref{thm:dynamic-var}.
\end{proof}

\subsection{Proof of Theorem~\ref{thm:variation-one-gradient}}
\label{sec:proof-Swordpp-dynamic-regret}
\begin{proof}
Since Sword++ is essentially an instantiation of the collaborative online ensemble framework, we prove its dynamic regret building upon the general result of Theorem~\ref{thm:general-online-ensemble}.

We substitute $M_t = \nabla f_{t-1}(\x_{t-1})$ into~\eqref{eq:general-dynamic-M_T} of Theorem~\ref{thm:general-online-ensemble} and notice that
\begin{align}
A_T\leq{}& G^2+ 2\sum_{t=2}^T \norm{\nabla f_t(\x_t) - \nabla f_{t-1}(\x_{t})}_2^2 + 2\sum_{t=2}^T \norm{\nabla f_{t-1}(\x_t) - \nabla f_{t-1}(\x_{t-1})}_2^2\notag\\
\leq {}&G^2 + 2\sup_{\x\in\X}\sum_{t=2}^T \norm{\nabla f_t(\x) - \nabla f_{t-1}(\x)}_2^2 + 2L^2\sum_{t=2}^T \norm{\x_t - \x_{t-1}}_2^2\notag\\
\leq{}& G^2 + 2V_T + 4L^2 S_{\mathrm{mix}} + 4D^2L^2 S_p.\label{eq:proof-fixvar-1}
\end{align}
As a result, the first term of~\eqref{eq:general-dynamic-M_T} of Theorem~\ref{thm:general-online-ensemble} can be further bounded by
\begin{align}
{}&2\sqrt{D^2(\ln N)A_T}\notag\\
\leq{}&2\sqrt{D^2(\ln N)\left(G^2+2V_T + 4L^2 S_{\mathrm{mix}} + 4D^2L^2 S_p \right)}\notag\\
\leq{}&2\sqrt{D^2(\ln N)\left(G^2+2V_T\right)} + 2\sqrt{D^2(\ln N) (4L^2 S_{\mathrm{mix}} + 4D^2L^2 S_p )}\notag\\
\leq{}&2\sqrt{D^2(\ln N)\left(G^2+2V_T\right)} + \frac{2\ln N}{\bar{\epsilon}} + 8\bar{\epsilon}D^2L^2S_{\mathrm{mix}} + 8\bar{\epsilon}D^4L^2S_p\label{eq:proof-fixvar-meta},
\end{align}
where the last inequality is a consequence of the AM-GM inequality. Using a similar argument, we can bound the second term of~\eqref{eq:general-dynamic-M_T} of Theorem~\ref{thm:general-online-ensemble} by
\begin{align}
&2 \sqrt{(D^2+2DP_T) A_T}\notag\\
\leq{}&2 \sqrt{(D^2+2DP_T)(G^2+2V_T + 4L^2S_{\mathrm{mix}} + 4D^2L^2S_p)}\notag\\
\leq{}&2 \sqrt{(D^2+2DP_T)(G^2+2V_T)} + \frac{2D^2+4DP_T}{\bar{\eta}} + 8\bar{\eta}L^2S_{\mathrm{mix}} + 8\bar{\eta}D^2L^2S_p\label{eq:proof-fixvar-expert}.
\end{align}

Plugging~\eqref{eq:proof-fixvar-meta} and~\eqref{eq:proof-fixvar-expert} into~\eqref{eq:general-dynamic-M_T}, we get the following dynamic regret bound,
\begin{align}
{} & \sum_{t=1}^T f_t(\x_t) - \sum_{t=1}^T f_t(\u_t)\notag\\
\leq{}&2\sqrt{\ln N\left(G^2D^2+2D^2V_T \right)}+ 2 \sqrt{(D^2+2DP_T)(G^2+2V_T)} + \frac{4\ln N}{\bar{\epsilon}} + \frac{4(D^2+2DP_T)}{\bar{\eta}} \notag \\
&\quad  + \left(\lambda-\frac{1}{4\bar{\eta}}\right) S_{x,i} + \bigg( 8\bar{\eta}D^2L^2+ 8\bar{\epsilon}D^4L^2 - \frac{1}{4\bar{\epsilon}}\bigg) S_p + \bigg( 8\bar{\eta}L^2+ 8\bar{\epsilon}D^2L^2 - \lambda\bigg) S_{\mathrm{mix}} \label{eq:exhibit-collaboration}.
\end{align} 
We complete the proof by dropping the last three non-positive terms, which is ensured by the parameter configurations $\lambda = 2L$, $\bar{\eta} = 1/(8L)$ and $\bar{\epsilon} = 1/ (8D^2L)$. We finally mention that the $\ln N = \O(\log \log T)$ term is treated as a constant throughout the paper. Actually, this term can be improved to $\log \log P_T$ by imposing a non-uniform prior over base-learners~\citep[Proof of Theorem 3]{NIPS'18:Zhang-Ader}. Details are omitted here.
\end{proof}

\begin{myRemark}[Collaboration in Regret Analysis]
\label{remark:collaboration-proof}
The derivation uses a fixed learning rate for the meta-algorithm, which not only simplifies the proof but also more effectively illustrates the collaboration of meta-base two layers in the analysis. The analysis in~\eqref{eq:exhibit-collaboration} highlights the crucial role of \emph{collaboration} between meta and base layers. The positive terms --- base stability $S_{x,i}$, meta stability $S_p$, and  mixed stability $S_{\mathrm{mix}}$ --- \emph{cannot} be cancelled solely by negative terms within their respective layer. Instead, they necessitate additional negative terms, either from regret analysis or algorithmic corrections, to help cancel out.
\endenv \end{myRemark}

\subsection{Proof of Theorem~\ref{thm:general-online-ensemble}}
\label{sec:proof-theorem-ensemble}
\begin{proof}
The proof shares the same spirit with that of Theorem~\ref{thm:dynamic-var}, where we decompose the overall dynamic regret into the meta-regret and base-regret. The difference is that we now use a linearized surrogate loss function to substitute the original loss function. Indeed, 
\begin{equation}
\label{eq:dynamic-regret-surrogate-decompose-general}
\begin{split}
\sum_{t=1}^T f_t(\x_t) - \sum_{t=1}^T f_t(\u_t)  \leq  \underbrace{\sum_{t=1}^T \inner{\nabla f_t(\x_t)}{\x_t - \x_{t,i}}}_{\meta} - \underbrace{\sum_{t=1}^T \inner{\nabla f_t(\x_t)}{\x_{t,i} - \u_t}}_{\base}.
\end{split}
\end{equation}
Notably, the above meta-base regret decomposition holds for any base-learner's index $i \in [N]$. In the following, we upper bound these two terms respectively. 

\paragraph{Upper bound of meta-regret.} According to the definitions of the feedback loss $\ellb_t$ and the optimism $\m_{t}$, see the definition below~\eqref{eq:general-framework-meta}, we can rewrite the meta-regret as
\begin{align}
\meta = {} & \sum_{t=1}^{T} \inner{\p_t}{\ellb_t} - \sum_{t=1}^{T} \ell_{t,i} - \lambda\sum_{t=1}^T\sum_{i=1}^Np_{t,i}\norm{\x_{t,i}-\x_{t-1,i}}_2^2 + \lambda\sum_{t=1}^T\norm{\x_{t,i}-\x_{t-1,i}}_2^2.\notag
\end{align}
We use Lemma~\ref{lemma:OptimisticHedge} and the setting of step size $\epsilon = \min\{ \bar{\epsilon}, \sqrt{(\ln N)/(D^2 A_T)}\}$ to get 
\begin{align}
\sum_{t=1}^{T} \inner{\p_t}{\ellb_t} - \sum_{t=1}^{T} \ell_{t,i} \leq{}&\epsilon D^2\sum_{t=1}^{T} \norm{\nabla f_t(\x_t)-M_t}_{2}^2 + \frac{\ln N}{\epsilon} - \frac{1}{4\epsilon} \sum_{t=2}^{T}  \norm{\p_{t} - \p_{t-1}}_1^2\notag\\
\leq {} & 2\sqrt{D^2 (\ln N)A_T} + \frac{2\ln N}{\bar{\epsilon}} - \frac{1}{4\bar{\epsilon}}\sum_{t=2}^{T}  \norm{\p_{t} - \p_{t-1}}_1^2.\notag
\end{align}
Combining above two inequalities, we obtain
\begin{align}
\meta\leq{}& 2\sqrt{D^2 (\ln N)A_T } + \frac{2\ln N}{\bar{\epsilon}}- \frac{1}{4\bar{\epsilon}}\sum_{t=2}^{T}  \norm{\p_{t}- \p_{t-1}}_1^2  \notag\\
 {}& - \lambda\sum_{t=1}^T\sum_{i=1}^Np_{t,i}\norm{\x_{t,i}-\x_{t-1,i}}_2^2 + \lambda\sum_{t=1}^T\norm{\x_{t,i}-\x_{t-1,i}}_2^2.\label{eq:proof-thmfix-meta}
\end{align}

\paragraph{Upper bound of base-regret.} By Lemma~\ref{lemma:OEGD-variation}, we obtain the base-regret for any $i\in[N]$,
\begin{align}
\base \leq \eta_{i} A_T+ \frac{D^2 + 2DP_T}{2\eta_{i}} - \frac{1}{4\bar{\eta}}\sum_{t=2}^{T} \norm{\x_{t,i} - \x_{t-1,i}}_2^2\label{eq:proof-thmfix-base}.
\end{align}

\paragraph{Upper bound of overall dynamic regret.} Combining the meta-regret~\eqref{eq:proof-thmfix-meta} and the base-regret~\eqref{eq:proof-thmfix-base} with the decomposition~\eqref{eq:dynamic-regret-surrogate-decompose-general}, for any $i\in[N]$, we arrive at
\begin{align}
{} & \sum_{t=1}^T f_t(\x_t) - \sum_{t=1}^T f_t(\u_t)\nonumber \\
{}&\leq2\sqrt{D^2(\ln N)A_T }+ \eta_{i}A_T + \frac{D^2 + 2DP_T}{2\eta_{i}}  + \frac{2\ln N}{\bar{\epsilon}} \notag\\
&+ \left(\lambda-\frac{1}{4\bar{\eta}}\right)\sum_{t=2}^T\norm{\x_{t,i}-\x_{t-1,i}}_2^2 - \frac{1}{4\bar{\epsilon}} \sum_{t=2}^{T}  \norm{\p_{t} - \p_{t-1}}_1^2 - \lambda\sum_{t=1}^T\sum_{i=1}^Np_{t,i}\norm{\x_{t,i}-\x_{t-1,i}}_2^2.\label{eq:proof-thmfix-overall-A}
\end{align}
Here, we remain to choose the best base-learner to make the term $\eta_i A_T + \frac{D^2+2DP_T}{2\eta_i}$ tightest possible. Note that the optimal step size is $\eta^* = \sqrt{(D^2+2DP_T)/A_T}$, but nevertheless, the step size we should identify is $\eta^\dag = \min\{\eta^*,\bar{\eta}\}$ due to the threshold in the step size pool~\eqref{eq:step-size-pool-one-gradient-general}. It can be verified that candidate step sizes range from $\eta_{1} = \sqrt{\frac{D^2}{8G^2T}}$ to $\eta_{N} = \bar{\eta}$.
\begin{itemize}
    \item when $\eta^\dag = \eta^*$, there must be an index $i^*$ satisfying $\eta_{i^*} \leq \eta^* \leq \eta_{i^*+1} = 2\eta_{i^*}$. We choose $i = i^*$ and obtain $ \eta_{i^*} A_T + \frac{D^2 + 2D P_T}{2\eta_{i^*}} \leq \eta^* A_T + \frac{D^2 + 2D P_T}{\eta^*} = 2 \sqrt{(D^2+2DP_T)A_T}$;
    \item when $\eta^\dag = \bar{\eta}$, we will choose the compared index as $i = N$ and obtain that  $\eta_N A_T + \frac{D^2 + 2D P_T}{2\eta_N} = \bar{\eta} A_T + \frac{D^2 + 2D P_T}{2\bar{\eta}} \leq (2D^2+4DP_T)/\bar{\eta}$.
\end{itemize}
As a result, taking both cases into account completes the proof.
\end{proof}
\subsection{Proof of Theorem~\ref{thm:small-loss-one-gradient}}
\label{sec:proof-small-loss-bobw}
\begin{proof}
The proof shares the same spirit as that of Theorem~\ref{thm:variation-one-gradient}, whereas we upper bound the adaptivity term in a different way to achieve the small-loss bound. Specifically, we  convert the adaptivity term to the cumulative loss of decisions defined by $F_T^X = \sum_{t=1}^T f_t(\x_t)$.
\begin{align}
  A_T \leq{}& \norm{\nabla f_1(\x_1)}_2^2 + 2\sum_{t=2}^T\norm{\nabla f_t(\x_t)}_2^2 + 2\sum_{t=2}^T\norm{\nabla f_{t-1}(\x_{t-1})}_2^2\notag\\
\leq {}&  8L\sum_{t=1}^Tf_t(\x_t) + 8L\sum_{t=2}^T f_{t-1}(\x_{t-1}) \leq 16L \sum_{t=1}^T f_t(\x_t) = 16LF_T^X \notag,
\end{align}
where the second inequality comes from the self-bounding property of smooth and non-negative functions as shown in Lemma~\ref{lem:smooth}. Then, a direct application of Theorem~\ref{thm:general-online-ensemble} with the parameter configurations $\lambda =2L$, $\bar{\eta} = 1/(8L)$ and $\bar{\epsilon} = 1/(8D^2L)$ indicates 
\begin{align}
\sum_{t=1}^T f_t(\x_t) - \sum_{t=1}^T f_t(\u_t)\leq{}&2\sqrt{16LD^2\ln NF_T^X }+ 2 \sqrt{16L(D^2+2DP_T)F_T^X}\notag\\
&\quad + {16D^2L\ln N} + {16L(D^2+2DP_T)}\notag.
\end{align}

According to the definition of $F_T$ and $F_T^X$, the above inequality implies that
\begin{align}
F_T^X - F_T \leq{}& 2\sqrt{16L(D^2\ln N +D^2+2DP_T)F_T^X} + 16L(D^2\ln N + D^2 +2D P_T)\notag\\
\leq{}& 2\sqrt{16L(D^2\ln N + D^2 +2D P_T)(F_T+ 16L(D^2\ln N + D^2 +2D P_T))}\notag\\
&+80L(D^2\ln N + D^2 +2D P_T)\notag\\
={}& \O(\sqrt{(1+P_T+F_T)(1+P_T)}) +\O(1+P_T)\notag\\
={}& \O(\sqrt{(1+P_T+F_T)(1+P_T)})\label{eq:oco-small-loss-conversion},
\end{align}
where the second inequality is by the converting trick in Lemma~\ref{lemma:inquality-shai-cor}. This ends the proof.
\end{proof}

\subsection{Proof of Theorem~\ref{thm:implication-worst-case-bound}}
\label{sec:implication-worst-case-bound}
\begin{proof}
By the universal dynamic regret bound $\mbox{D-Regret}_T(\u_1,\dots,\u_T)\leq \sqrt{A_T(D+P_T)}$ and choosing $\u_t = \x_t^*$, we directly obtain the path-length worst-case dynamic regret bound:
\begin{equation}
\mbox{D-Regret}_T(\x^*_1,\dots,\x^*_T)\leq\sqrt{A_T(D+P^*_T)}\label{eq:worst-casepath-length}.
\end{equation}

In the following, we focus on the function-variation type bound. This is achieved following the argument of~\citet{UAI'20:simple}, we introduce a \emph{virtual} piece-wise stationary comparator sequence that only changes every $\Delta\in[1,T]$ iterations. Specifically, denoting by $\I_m = [(m-1)\Delta+1,\min\{m\Delta,T\}]\subseteq[1,T]$ the $m$-th interval, we define the comparator over the interval $\I_m$ as $\x_{\I_m}^* \in \argmin_{\x\in\X} \sum_{t\in\I_m} f_t(\x)$. There are in total $M = \lceil T/\Delta\rceil$ intervals. Then, we can decompose the worst-case dynamic regret as
\begin{align*}
\mbox{D-Regret}_T(\x^*_1,\dots,\x^*_T) =  \underbrace{\sum_{t=1}^T f_t(\x_t) - \sum_{m=1}^M \sum_{t\in \I_m} f_t(\x_{\I_m}^*)}_{\term{a}} + \underbrace{\sum_{m=1}^M \sum_{t\in \I_m} f_t(\x_{\I_m}^*) - \sum_{t=1}^T f_t(\x_t^*)}_{\term{b}}.
\end{align*}
For term~(a), since the piece-wise stationary comparator sequence only changes $M-1$ times, its path length is at most $D(M-1)$. Thus, the universal dynamic regret~\eqref{eq:implication-UDR} ensures
\begin{align*}
\term{a} \leq \sqrt{A_T \left(D+D(M-1)\right)} \leq \sqrt{DA_T\left(1+\frac{T}{\Delta}\right)} \leq\sqrt{DA_T} + \sqrt{\frac{DTA_T}{\Delta}}.
\end{align*}
Moreover, the argument in~\citet[Proposition 2]{OR'15:dynamic-function-VT} shows that $\term{b} \leq 2\Delta V^f_T$. Combining the upper bounds for term~(a) and term~(b), we obtain
\begin{align*}
\mbox{D-Regret}_T(\x_1^*,\dots,\x_T^*) \leq \sqrt{DA_T} + \sqrt{\frac{DTA_T}{\Delta}} + 2\Delta V^f_T.
\end{align*}
The optimal interval length is $\Delta_* \triangleq (DTA_T)^{1/3}(V^f_T)^{-2/3}$, which will lead to an $\O(\sqrt{A_T} + A_T^{\frac{1}{3}}T^{\frac{1}{3}}(V_T^f)^{\frac{1}{3}})$ worst-case dynamic regret. However, a caveat is that the interval length $\Delta\in[T]$ should be a positive integer, so we use the clipped version $\Delta_\dag \triangleq \min\left\{\left\lceil \Delta_*\right\rceil,T\right\}$. We show that~\eqref{eq:implication-WDR} is achievable with $\Delta_\dag$ by considering the following three cases.
\begin{itemize}
\item \textbf{Case 1} ($1 \leq \Delta_*\leq T$): in such a case, $\Delta_\dag = \lceil \Delta_*\rceil$ and thus $\Delta_*\leq\Delta_\dag\leq 2\Delta_*$,
\begin{equation*}
\mbox{D-Regret}_T(\x_1^*,\dots,\x_T^*) \leq \sqrt{DA_T} + \sqrt{\frac{DTA_T}{\Delta_*}} + 4\Delta_*V_T^f \leq \sqrt{D A_T} + 5D^{\frac{1}{3}}A_T^{\frac{1}{3}}T^{\frac{1}{3}}(V_T^f)^{\frac{1}{3}}. 
\end{equation*}

\item \textbf{Case 2} ($\Delta_*>T$): in such a case, $\Delta_\dag = T$ and $\sqrt{DA_T}\geq TV^f_T$. Then, we have
\begin{equation*}
\mbox{D-Regret}_T(\x_1^*,\dots,\x_T^*) \leq \sqrt{DA_T} + \sqrt{{DA_T}} + 2TV_T^f \leq 3\sqrt{DA_T}.
\end{equation*}

\item \textbf{Case 3} ($\Delta_*\leq 1$): in such a case, $\Delta_\dag = 1$ and $\sqrt{DA_TT}\leq V_T^f$. Since $P^*_T\leq DT$, we have $\sqrt{A_TP^*_T}\leq \sqrt{DA_TT}\leq D^{\frac{1}{3}}A_T^{\frac{1}{3}}T^{\frac{1}{3}}(V_T^f)^{\frac{1}{3}}$, indicating that the path-length bound~\eqref{eq:worst-casepath-length} is tighter than the desired result~\eqref{eq:implication-WDR}.
\end{itemize}
The proof is completed by combining above three cases and the path-length bound~\eqref{eq:worst-casepath-length}.
\end{proof}

\subsection{Proof of Theorem~\ref{thm:lower-bound}}
\label{sec:proof-lower-bound}
\begin{proof}
The theorem is proved by the probabilistic method, following the proof of~\citet[Theorem 5]{NIPS'17:zhang-dynamic-sc-smooth}. For iterations $t = 1,\ldots,T$, we randomly sample a convex and smooth function $f_t: \R^d \mapsto \R$ from the distribution $\mathcal{P}$.

Specifically, we construct the function as $f_t(\x) = \norm{\x - \sigma \boldsymbol{\epsilon}_t}_2^2$, where $\sigma > 0$ and $\boldsymbol{\epsilon}_t \in \R^d$ is a random vector with components sampled independently from the Rademacher distribution, i.e., $\boldsymbol{\epsilon}_t(i) =  1$ or $-1$ with equal probability of $50\%$. We further set the comparator $\u_t = \x_t^* =\argmin_{\x \in \X} f_t(\x) = \sigma \boldsymbol{\epsilon}_t$. Denote by $\x_t$ the decision returned by any deterministic online algorithm $\mathcal{A}$. Then the expected dynamic regret is defined as $\E[\mbox{D-Regret}_T] =\E\left[\sum_{t=1}^{T} f_t(\x_t) - \sum_{t=1}^{T} f_t(\u_t)\right]$ with expectation taken over the randomness of the online function $f_t$. Then, we show that $\E[\mbox{D-Regret}_T] \geq \sqrt{T/2}\cdot\E[\sqrt{(1+F_T)(1+P_T)}]$. 

On one hand, noticing that $\E[\sigma\inner{\x_t}{\epsilon_t}]= 0$ and $\E[\sigma^2\norm{\boldsymbol{\epsilon}_t}_2^2]\geq d\sigma^2$ for any $t\geq 1$, we have
\begin{align*}
    \E[\mbox{D-Regret}_T] = \sum_{t=1}^{T}\E[\norm{\x_t - \sigma\boldsymbol{\epsilon}_t}_2^2] = {} & \sum_{t=1}^{T}\E[\norm{\x_t}_2^2 + 2\sigma\inner{\x_t}{\boldsymbol{\epsilon}_t} + \sigma^2 \norm{\boldsymbol{\epsilon}_t}_2^2] \geq dT\sigma^2,
\end{align*}
On the other hand, let $\bm{\delta}_t(i) = \boldsymbol{\epsilon}_t(i) - \boldsymbol{\epsilon}_{t-1}(i)$. We have 
\begin{align}
    \E[P_T(\u_1,\ldots,\u_T)] = \sigma \sum_{t=2}^{T} \E\left[ \sqrt{\sum_{i=1}^{d} \boldsymbol{\delta}^2_t(i)} \right]\leq \sigma \sum_{t=2}^{T} \sqrt{\sum_{i=1}^{d} \E\left[ \boldsymbol{\delta}^2_t(i) \right]}  \leq \sqrt{2d} T \sigma,\label{eq:lower-bound-1}
\end{align}
where the first inequality is due to the Jensen's inequality and the second inequality is by the fact that $\E\left[ \boldsymbol{\delta}^2_t(i) \right] = 2$ for any $t\in[T]$ and $i\in[d]$.
The above equation leads to
\begin{align*}
\E\bigg[\sqrt{(1+F_T)(1+P_T)}\bigg] = \E[\sqrt{1+P_T}] \leq \sqrt{1+\E[P_T]} \leq (2d)^{\frac{1}{4}}\sqrt{T\sigma}.
\end{align*}
By choosing $\sigma = 1$, we can ensure that $\E[\mbox{D-Regret}]\geq \sqrt{T/2}\cdot\E\big[\sqrt{(1+F_T)(1+P_T)}\big]$. We note that the choice of $\sigma$ might lead to a violation of the bounded domain assumption, which can be easily fixed by rescaling. Then, the probabilistic argument implies that for any algorithm $\mathcal{A}$ there exists a sequence of online functions $f_1,\ldots,f_T$ and comparators $\{\u_t = \x_t^*\}_{t=1}^T$ such that $\mbox{D-Regret}_T \geq \sqrt{T/2}\cdot\sqrt{(1+F_T)(1+P_T)}$. This ends the proof.
\end{proof}
\section{Experiments}
\label{sec:experiemnts}
This section provides empirical studies to validate the effectiveness of our algorithms. 

\paragraph{Settings.} We simulate the online  environments as follows. The player \emph{sequentially} receives the feature of an instance and is then required to make the prediction. We focus on the online regression problem, where at each round an instance $(\psib_t ,y_t)$ is received with $\psib_t \in \Psib \subseteq \R^d$ being the feature and $y_t \in \Y \subseteq \R$ being the corresponding label. At each round, the player first receives the feature $\psib_t$ and is required to make the prediction by $\hat{y}_t = \psib_t^\T \x_t$ based on the learned model $\x_t \in \X \subseteq \R^d$; then, the ground-truth label $y_t \in \R$ is revealed and the player suffers a loss of $\ell(y_t,\hat{y}_t)$, where in the simulation we choose the Huber loss defined as
\begin{align*}
  \ell(y, \hat{y}) = 
  \begin{cases}
    \frac{1}{2}(y - \hat{y})^2, & \text{ for } \abs{y - \hat{y}} \leq \delta,\\
    \delta(\abs{y - \hat{y}} - \frac{1}{2} \delta), & \text{ otherwise}.    
  \end{cases}
\end{align*}
As a result, the online function can be regarded as a composition of the loss function and the data item, that is, $f_t: \X \mapsto \R$ with $f_t(\x) = \ell(y_t,\psib_t^\T \x)$. It can be verified that the online functions are convex, and satisfy the condition of non-negativity and smoothness. 

\paragraph{Datasets.} We compare the performance on both synthetic and real-world datasets. First, the synthetic data are generated as follows: at each round, the feature $\x_t \in \R^d$ is randomly generated from a ball with a radius of $\Gamma$, i.e., $\mathfrak{B} = \{\psib \in \R^d \mid \norm{\psib}_2 \leq \Gamma\}$; the associated label is set as $y_t = \psib_t^\T \x_t^* + \epsilon_t$, where $\epsilon_t$ is the random noise drawn from $[0,0.1]$ and $\x_t^* \in \R^d$ is the underlying model. The underlying model $\x_t^*$ is randomly sampled from a ball with a radius of $D/2$ (recall that $D$ is the diameter of the feasible domain throughout the paper), and it is forced to be stationary within a stage and will be changed every $S$ rounds to simulate non-stationary environments with abrupt changes. In our simulation, we set $\Gamma = 1$, $D = 2$, $d=5$, $T = 50000$, $S = 1000$, and $\delta = 2$. Next, we employ a real-world dataset called Sulfur recovery unit (SRU)~\citep{journals/csur/GamaZBPB14,TKDE'21:DFOP}, which is a regression dataset with slowly evolving distribution changes. There are in total 10,081 data samples representing the records of gas diffusion, where the feature consists of five different chemical and physical indexes and the label is the concentration of $\text{SO}_2$. 

\paragraph{Contenders.} We compare the performance of the following algorithms: (i) OGD~\citep{ICML'03:zinkvich}, online gradient descent, which is an OCO algorithm designed for static regret minimization; (ii) Ader~\citep{NIPS'18:Zhang-Ader}, an OCO algorithm designed for optimizing dynamic regret yet with only problem-independent guarantee; (iii) \textsf{Sword}, the algorithm proposed in Section~\ref{sec:solution1}, which achieves problem-dependent dynamic regret guarantees requiring multiple gradients per iteration; and (iv) \textsf{Sword++}, the algorithm proposed in Section~\ref{sec:solution2}, which achieves the same dynamic regret with only one gradient query per round. The implementations of all algorithms are based on \textsf{PyNOL} package~\citep{PyNOL}.

\begin{figure}[!t]
    \centering
    \subfigure[synthetic data]{ \label{fig:para-p}
      \includegraphics[clip, trim=0.6cm 0.3cm 1.5cm 1.2cm,height=0.34\textwidth]{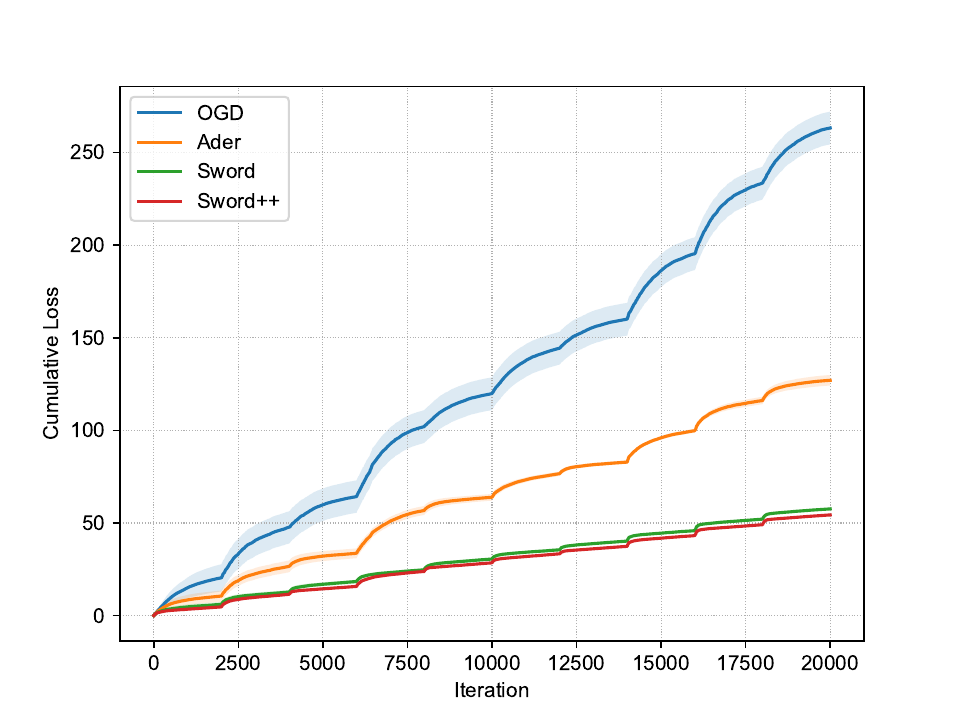}} \hspace{3mm}
    \subfigure[real-world data]{ \label{fig:para-p}
      \includegraphics[clip, trim=0.6cm 0.3cm 1.5cm 1.2cm,height=0.34\textwidth]{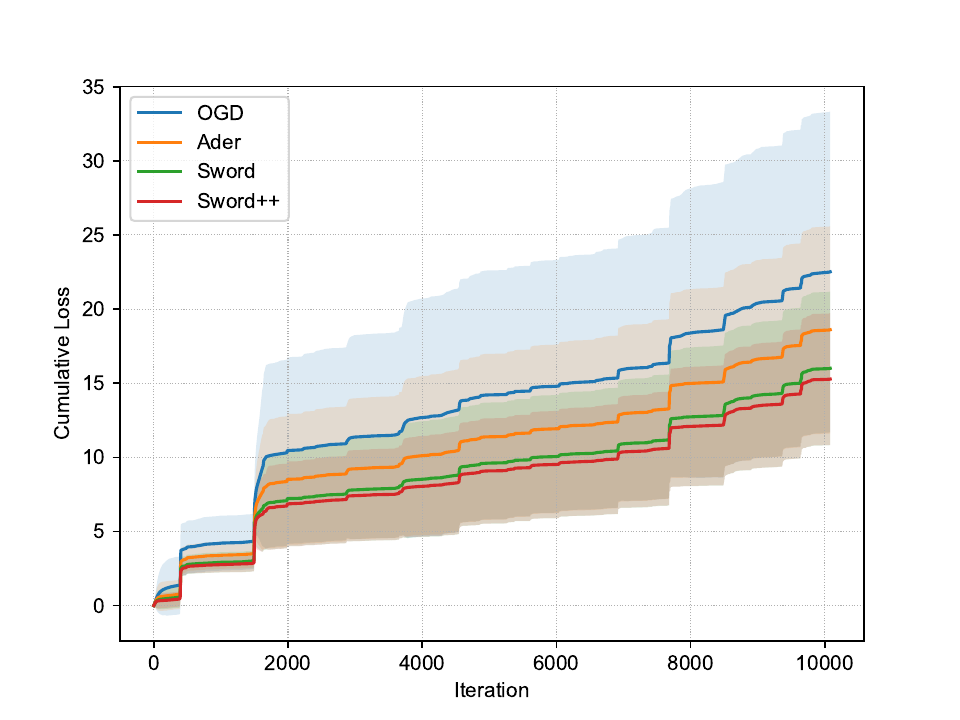}}
    \vspace{-4mm}
    \caption{Performance comparisons of all algorithms in terms of cumulative loss.}
  \label{fig:comparison-loss}\vspace{-7mm}
\end{figure}

\begin{figure}[!t]
    \centering
    \subfigure[synthetic data]{ \label{fig:para-p}
      \includegraphics[clip, trim=0.6cm 0.3cm 1.5cm 1.2cm,height=0.34\textwidth]{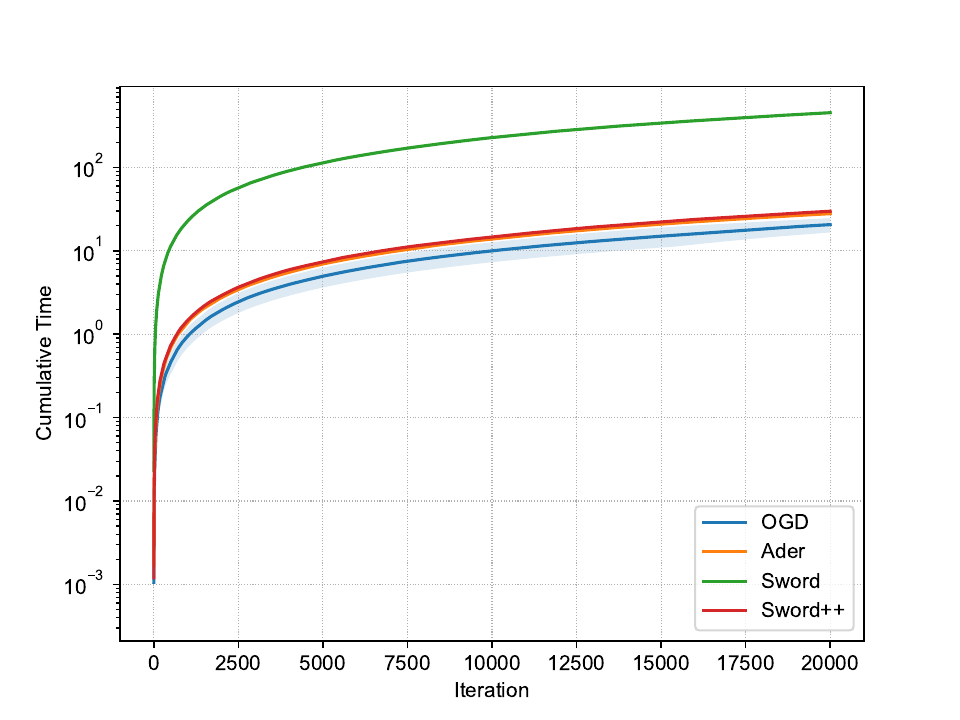}}\hspace{3mm}
    \subfigure[real-world data]{ \label{fig:para-p}
      \includegraphics[clip, trim=0.6cm 0.3cm 1.5cm 1.2cm,height=0.34\textwidth]{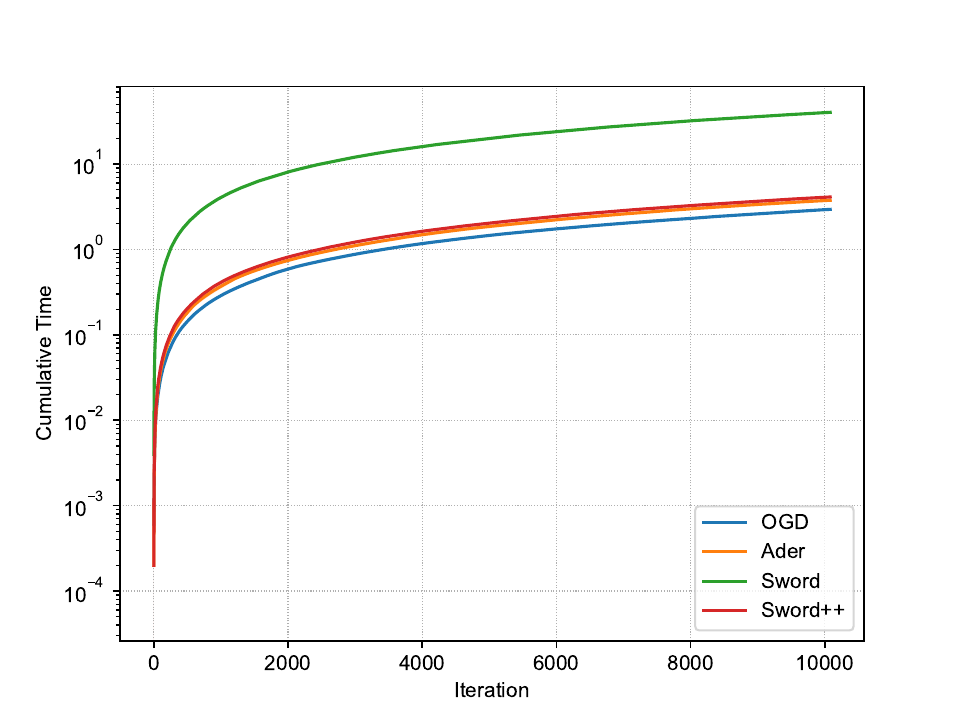}}
    \vspace{-4mm}
    \caption{Performance comparisons of all algorithms in terms of running time (in seconds).}
  \label{fig:comparison-time}\vspace{-2mm}
\end{figure}

\paragraph{Results.} We repeat the experiments five times and report the mean and the standard deviation in Figure~\ref{fig:comparison-loss} and Figure~\ref{fig:comparison-time}. In Figure~\ref{fig:comparison-loss}, we examine the performance in terms of cumulative loss. First, we can observe that OGD incurs a large cumulative loss over the horizon and is not able to effectively learn from the non-stationary environments. By contrast, both Ader and our approach (Sword, Sword++) achieve a satisfactory performance in the presence of distribution changes. Moreover, Sword and Sword++ exploit the adaptivity of the problem instance and thus achieve more encouraging empirical behavior than Ader, which demonstrates the empirical effectiveness. Figure~\ref{fig:comparison-time} reports the running time comparison, where the $y$-axis uses a logarithmic scale for a better presentation. We can observe that OGD is the most computationally efficient; besides, Ader and Sword++ are also comparable. By contrast, Sword requires significantly more running time. The result accords to our theory well, in that the gradient computation is the most time-consuming in our simulations. Theoretically, both Sword++ and Ader (with linearized surrogate loss) only require one gradient query per iteration, which shares the same gradient query complexity with OGD. On the contrary, Sword needs to query $N = \O(\log T)$ gradients at each round and is thus much more computationally inefficient. In summary, the empirical results validate the advantages of our proposed algorithms, notably showing that Sword++ behaves well and is computationally light.
\section{Conclusion}
\label{sec:conclusion}
In this paper, we exploit the easiness of problem instances to enhance the universal dynamic regret. We propose two novel online ensemble algorithms, \textsf{Sword} and \textsf{Sword++}, for convex and smooth online learning. Both algorithms achieve  a best-of-both-worlds dynamic regret of order $\O(\sqrt{(1+P_T + \min\{V_T, F_T\})(1+P_T)})$, where $V_T$ measures the gradient variation and $F_T$ is the cumulative loss of comparators. These quantities are at most $\O(T)$ yet can be very small when the problem is easy, hence reflecting the difficulty of problem instance. Consequently, our bounds can outperform the $\O(\sqrt{T(1+P_T)})$ minimax dynamic regret~\citep{NIPS'18:Zhang-Ader} by exploiting smoothness. Our results are accomplished by several crucial technical ingredients. We adopt optimistic online mirror descent as a unified building block for both base and meta algorithms, and carefully exploit the negative terms in the regret analysis. Moreover, in the design of Sword++, we introduce the framework of \emph{collaborative online ensemble}. This framework emphasizes the importance of jointly using negative terms in the regret analysis and integrating additional correction terms in the algorithmic design, which facilitates effective collaboration between the meta and base layers. By incorporating these elements, we can finally achieve favorable problem-dependent dynamic regret guarantees under the one-gradient feedback model.

All of attained dynamic regret bounds are universal in the sense that they hold against \emph{any} feasible comparator sequence, making the algorithms adaptive to non-stationary environments. An important future work is to investigate the optimality of our attained problem-dependent dynamic regret bounds. We now only have very preliminary understandings for small-loss dynamic regret (see the lower bound in Theorem~\ref{thm:lower-bound}  and conjectures regarding the gradient-variation dynamic regret in Remark~\ref{remark:discussion-VT-lower-bound}). However, a complete understanding requires refined problem-dependent lower bounds.

\section*{Acknowledgment}
This research was supported by NSFC (61921006, 62361146852), JiangsuSF (BK20220776), National Postdoctoral Program for Innovative Talent, and China Postdoctoral Science Foundation (2023M731597). Peng Zhao was supported in part by the Xiaomi Foundation. The authors would like to thank Mengxiao Zhang and Ashok Cutkosky for helpful discussions. We are also grateful for anonymous reviewers and the action editor for their invaluable comments, in particular, we sincerely thank Reviewer~\#2 of JMLR for carefully reviewing the paper and providing many constructive suggestions.

\appendix

\section{Proofs of Lemma~\ref{lemma:OEGD-variation} and Lemma~\ref{lemma:OptimisticHedge}}
\label{sec:appendix-proof-OMD}
The versatility of \textsc{Optimistic OMD} makes it very general to derive many existing results in a unified view. We elucidate two implications of Theorem~\ref{thm:dynamic-regret-OMD-generic}, which serve as proofs of Lemma~\ref{lemma:OEGD-variation} (dynamic regret of OEGD) and Lemma~\ref{lemma:OptimisticHedge} (static regret of Optimistic Hedge).\\

\begin{proof}[of Lemma~\ref{lemma:OEGD-variation}]
We first show a general result for Optimistic OMD with  an arbitrary optimistic vector $M_t\in\R^d$ and the regularizer $\psi(\x) = \frac{1}{2}\norm{\x}_2^2$, and then prove Lemma~\ref{lemma:OEGD-variation} by simply substituting $M_t = \nabla f_{t-1}(\x_{t-1})$ into it.

It is well-known that $\psi(\x) = \frac{1}{2}\norm{\x}_2^2$ is 1-strongly convex with respect to the Euclidean norm $\|\cdot\|_2$ and $\Div{\x}{\y} = \frac{1}{2}\norm{\x - \y}_2^2$. Thus, Theorem~\ref{thm:dynamic-regret-OMD-generic} implies
\begin{align*}
  \sum_{t=1}^{T} f_t(\x_t) - \sum_{t=1}^{T} f_t(\u_t) \leq \eta \sum_{t=1}^{T} \norm{\nabla f_t(\x_t) - M_t}_2^2 {} & +  \frac{1}{2\eta}\sum_{t=1}^{T} \Big( \norm{\u_t - \xh_t}_2^2 - \norm{\u_t - \xh_{t+1}}_2^2\Big) \\
{} & -  \frac{1}{2\eta} \sum_{t=1}^{T} \Big( \norm{\xh_{t+1} - \x_t}_2^2 + \norm{\xh_t - \x_t}_2^2\Big).
\end{align*}
Notice that the second term can be upper bounded as follows.
\begin{align*}
  {}& \sum_{t=1}^{T} \Big( \norm{\u_t - \xh_t}_2^2 - \norm{\u_t - \xh_{t+1}}_2^2\Big) \leq \norm{\u_1 - \xh_1}_2^2 + \sum_{t=2}^{T} \Big( \norm{\u_t - \xh_t}_2^2 - \norm{\u_{t-1} - \xh_t}_2^2\Big) \\
  \leq {} & \norm{\u_1 - \xh_1}_2^2 + \sum_{t=2}^{T} \norm{\u_t - \u_{t-1}}_2 \norm{\u_t - \xh_t + \u_{t-1} - \xh_t}_2 \leq D^2 + 2D\sum_{t=2}^{T} \norm{\u_t - \u_{t-1}}_2.
\end{align*}
We further evaluate the last term using the inequality of $a^2 + b^2 \geq (a+b)^2/2$ and obtain
\begin{align*}
\sum_{t=1}^{T} \Big( \norm{\xh_{t+1} - \x_t}_2^2 + \norm{\xh_t - \x_t}_2^2\Big) \geq \sum_{t=2}^{T} \Big( \norm{\xh_{t} - \x_{t-1}}_2^2 + \norm{\xh_t - \x_t}_2^2\Big) \geq \frac{1}{2}\sum_{t=2}^{T} \norm{\x_t - \x_{t-1}}_2^2.
\end{align*}
Hence, combining all the three inequalities, we get the following result:
\begin{align*}
\sum_{t=1}^T f_t(\x_t) - \sum_{t=1}^T f_t(\u_t)\leq \eta \sum_{t=1}^T\norm{\nabla f_t(\x_t) - M_t}_2^2  + \frac{1}{\eta} (D^2 + 2DP_T) - \frac{1}{4\eta}\sum_{t=2}^T\norm{\x_t-\x_{t-1}}_2^2.
\end{align*}

When choosing the optimism as the last-round gradient $M_{t} = \nabla f_{t-1}(\x_{t-1})$, the adaptivity term $\sum_{t=1}^{T} \norm{\nabla f_t(\x_t) - M_t}_2^2$ can be upper bounded in the following way:
\begin{align*}
  \sum_{t=1}^{T} \norm{\nabla f_t(\x_t) - M_t}_2^2 \leq {} &  G^2 + 2 \sum_{t=2}^{T} \left( \norm{\nabla f_t(\x_t) - \nabla f_{t-1}(\x_{t})}_2^2 + \norm{\nabla f_{t-1}(\x_t) - \nabla f_{t-1}(\x_{t-1})}_2^2 \right)\\
 \leq {} &  G^2 + 2 \sum_{t=2}^{T} \sup_{\x \in \X}\norm{\nabla f_t(\x) - \nabla f_{t-1}(\x)}_2^2 + 2 L^2 \sum_{t=2}^{T} \norm{\x_t -\x_{t-1}}_2^2,
\end{align*}
where the last step exploits $L$-smoothness of online functions. Therefore,
\begin{align*}
  \sum_{t=1}^{T} f_t(\x_t) - \sum_{t=1}^{T} f_t(\u_t) \leq {} & \eta (G^2 + 2V_T) + \frac{1}{2\eta}(D^2 + 2DP_T) + \left(2\eta L^2 - \frac{1}{4\eta}\right)\sum_{t=2}^{T} \norm{\x_{t} - \x_{t-1}}_2^2\\
  \leq {} & \eta (G^2 + 2V_T) + \frac{1}{2\eta}(D^2 + 2DP_T),
\end{align*}
where $\eta \leq 1/(4L)$ ensures the last term to be non-positive. This ends the proof.
\end{proof}

Next, by choosing the regularizer $\Rcal(\p) = \sum_{i=1}^{N} p_i \ln p_i$, the online function $f_t(\p) = \inner{\p}{\ellb_t}$ and optimism $M_t = \m_t$, Optimistic OMD recovers Optimistic Hedge~\citep{conf/colt/RakhlinS13}. Here, with a slight abuse of notations, we now use $\p \in \Delta_N$ to denote the variable in prediction with expert advice. Theorem~\ref{thm:dynamic-regret-OMD-generic} implies the static regret for Optimistic Hedge algorithm by choosing comparators as a fixed one in the simplex.
~\\
\begin{proof}[of Lemma~\ref{lemma:OptimisticHedge}] It is well-known that the negative entropy $\psi(\p) = \sum_{i=1}^{N} p_i \ln p_i$ is 1-strongly convex with respect to $\|\cdot\|_1$ and $\Div{\p}{\q} = \sum_{i=1}^{N} p_i \ln(p_i/q_i)$. By choosing comparators as $\mathbf{e}_i$, Theorem~\ref{thm:dynamic-regret-OMD-generic} indicates that the regret $\sum_{t=1}^{T} \left(\inner{\p_t}{\ellb_t} - \ell_{t,i}\right)$ is bounded by
\begin{align*}
  \epsilon \sum_{t=1}^{T} \norm{\ellb_t - \m_t}_\infty^2 +  \frac{1}{\epsilon}\sum_{t=1}^{T} \Big( \Div{\e_i}{\hat{\p}_t} - \Div{\e_i}{\hat{\p}_{t+1}}\Big) -  \frac{1}{\epsilon}\sum_{t=1}^{T} \Big( \Div{\hat{\p}_{t+1}}{\p_t} + \Div{\p_t}{\hat{\p}_{t}} \Big).
\end{align*}
The second term on the right hand side exhibits a telescoping structure, and thus
\[
  \frac{1}{\epsilon}\sum_{t=1}^{T} \Big( \Div{\e_i}{\hat{\p}_t} - \Div{\e_i}{\hat{\p}_{t+1}}\Big) \leq \frac{1}{\epsilon} \Div{\e_i}{\hat{\p}_1} = \frac{1}{\epsilon} \ln(1/p_{1,i}).
\]
By Pinsker's inequality $\Div{\p}{\q} = \mbox{KL}(\p,\q) \geq \frac{1}{2} \norm{\p - \q}_1^2$, we have 
\[
  \sum_{t=1}^{T} \Big( \Div{\hat{\p}_{t+1}}{\p_t} + \Div{\p_t}{\hat{\p}_{t}} \Big) \geq \frac{1}{2} \sum_{t=1}^{T} \left(\norm{\hat{\p}_{t+1} - \p_t}_1^2 + \norm{\p_t - \hat{\p}_{t}}_1^2\right) \geq \frac{1}{4} \sum_{t=2}^{T} \norm{\p_t - \p_{t-1}}_1^2,
\]
where the last inequality is got by regrouping the sum and applying triangle inequality. Combining all above inequalities ends the proof.
\end{proof}

The negative term in the regret bound~\eqref{eq:regret-optimistic-Hedge} of Lemma~\ref{lemma:OptimisticHedge} is very essential, which is quite useful in a variety of problems requiring adaptive bounds. Our analysis is based on the unified view of Optimistic OMD (Theorem~\ref{thm:dynamic-regret-OMD-generic}), and is much simpler than the original proof of~\citet{NIPS'15:fast-rate-game} using the mathematical induction  from the lens of FTRL.
\section{Adaptive Learning Rate Version}
\label{appendix:adaptive-LR}
In the main text, our proposed algorithms, Sword and Sword++, deliberately employ a \emph{fixed} learning rate for the meta-algorithm.This choice is made to simplify the presentation and the regret analysis, thereby helping reader to better understand the key idea of facilitating collaborations in meta and base levels, as highlighted in Remark~\ref{remark:collaboration-proof} of the analysis.

Nevertheless, the meta-algorithm learning rate tuning requires the knowledge of gradient variation $V_T = \sum_{t=2}^T \sup_{\x \in \X} \norm{\nabla f_t(\x) - \nabla f_{t-1}(\x)}_2^2$ (for Sword) or its variant $\bar{V}_T = \sum_{t=2}^T \norm{\nabla f_t(\x_t) - \nabla f_{t-1}(\x_{t-1})}_2^2$ (for Sword++), which is not desired. This section demonstrates an adaptive version using the self-confident tuning framework~\citep{JCSS'02:Auer-self-confident} such that the meta-algorithm does not require such information ahead of time.\footnote{For simplicity, we only present the adaptive version for Sword++, and the one for Sword can be similarly obtained (which is actually simpler). Moreover, an important note is that our adaptive version also only requires one gradient per iteration, hence still feasible for the one-gradient feedback model.} 

We first extend the collaborative online ensemble framework in Section~\ref{sec:framework-collaborative-OE} to an adaptive version, and subsequently use it to prove the gradient-variation and small-loss dynamic regret bounds for the adaptive version of Sword++.

\subsection{Adaptive Collaborative Online Ensemble}
In this part, we provide the adaptive learning rate version of the unified framework presented in Section~\ref{sec:framework-collaborative-OE}. Comparing with the fixed learning rate version, the only difference is that we run the optimistic Hedge with a time-varying learning rate for the meta-algorithm, 
\begin{equation}
    \label{eq:variation-Hedge-meta-adaptive}
    p_{t+1,i} \propto \exp\left(-\epsilon_t \Big(\sum_{s=1}^{t} \ell_{s,i} + m_{t+1,i} \Big)\right),
\end{equation}
where the loss vector $\ellb_t$ and $\m_t$ share the same configurations as~\eqref{eq:general-framework-meta}. For any $i$-th base-algorithm, we use the same update rule as the fixed learning rate version  
\begin{align}
\label{eq:general-framework-base-adaptive}
 \x_{t,i} = \Pi_{\X}\left[\xh_{t,i} - \eta_{i} M_t\right],~~ \xh_{t+1,i} = \Pi_{\X}\left[\xh_{t,i}-\eta_{i}\nabla f_{t}(\x_{t})\right],
\end{align}
Then, we can generate the prediction by $\x_t = \sum_{i=1}^N p_{t,i}\x_{t,i}$ and have the following guarantee. 
\begin{myThm}
\label{thm:general-online-ensemble-adaptive}
Under the same assumptions and parameter configurations as Theorem~\ref{thm:general-online-ensemble} and setting the learning rate of the meta-algorithm as 
\begin{equation}
  \label{eq:lr-meta-solution2-adaptve}
  \epsilon_t = \min\left\{ \bar{\epsilon}, \sqrt{\frac{\ln N}{D^2 \sum_{s=1}^t\norm{\nabla f_s(\x_s) - M_s}_2^2}} \right\},
\end{equation}
we have the following dynamic regret bound for the decisions specified by~\eqref{eq:variation-Hedge-meta-adaptive} and~\eqref{eq:general-framework-base-adaptive} against any comparators $\u_1,\ldots,\u_T \in \X$,
\begin{align}
\sum_{t=1}^T f_t(\x_t) - \sum_{t=1}^T f_t(\u_t) {}& \leq 4\sqrt{D^2\ln NA_T }+ 2 \sqrt{(D^2+2DP_T)A_T}+ \frac{\ln N}{\bar{\epsilon}} + 2\bar{\epsilon} D^2\tilde{G}^2\notag\\
& + \frac{2(D^2+2DP_T)}{\bar{\eta}}  + \left(\lambda-\frac{1}{4\bar{\eta}}\right) S_{x,i} - \frac{1}{4\bar{\epsilon}} S_p - \lambda S_{\mathrm{mix}}. \label{eq:result-adaptiev-collaboration}
\end{align}
In above, $\tilde{G} = \max_{t\in[T]} \norm{\nabla f_t(\x)-M_t}_2$, $A_T = \sum_{t=1}^{T} \norm{\nabla f_t(\x_t)-M_t}_2^2$, $P_T = \sum_{t=2}^{T} \norm{\u_{t-1} - \u_{t}}_2$, $S_{x,i} = \sum_{t=2}^T\norm{\x_{t,i}-\x_{t-1,i}}_2^2$, $S_p = \sum_{t=2}^T\norm{\p_t-\p_{t-1}}_1^2$, and $S_{\mathrm{mix}}= \sum_{t=2}^T\sum_{i=1}^Np_{t,i}\norm{\x_{t,i}-\x_{t-1,i}}_2^2$ are base stability, meta stability, and mixed stability.
\end{myThm}

\begin{myRemark}[Optimistic Hedge with Time-varying Learning Rates]
\label{remark:optimisticHedge-LR}
We remark that, in the fixed learning rate case, one can show that the Optimistic Hedge~\eqref{eq:variation-Hedge-meta} is identical to Optimistic OMD with the negative-entropy regularizer. However, the adaptive learning rate version~\eqref{eq:variation-Hedge-meta-adaptive} can only be interpreted as an FTRL algorithm. Thus, it is hard to directly apply Theorem~\ref{thm:dynamic-regret-OMD-generic} to obtain the meta-regret. We choose a FTRL-type meta-algorithm instead of an OMD-type algorithm, in that OMD with  time-varying learning rates would suffer linear regret in the worst case when using the negative-entropy regularizer. While this can be remedied by the dual stabilization~\citep{ICML'20:OMD-Stabilization}, we just use FTRL for simplicity.  \endenv
\end{myRemark}

\begin{proof}[of Theorem~\ref{thm:general-online-ensemble-adaptive}]
The proof is almost the same to that of Theorem~\ref{thm:general-online-ensemble}. The main difference is that we use a counterpart of Lemma~\ref{lemma:OptimisticHedge} to bound the meta-regret for the adaptive learning rate version~\eqref{eq:variation-Hedge-meta-adaptive}. Specifically, since~\eqref{eq:variation-Hedge-meta-adaptive} is identical to Optimistic FTRL $\p_{t+1} = \argmin_{\p\in\Delta_N}  \inner{\p}{\sum_{s=1}^t\ellb_s + \m_{t+1}} + \psi_{t+1}(\p)$ with regularizer $\psi_{t+1}(\p) = \frac{1}{\epsilon_t}(\sum_{i=1}^N p_i\ln p_i + \ln N)$,\footnote{Here, we add an additional constant $\ln N$ in the regularizer, which will not effect the solution of the optimization problem and meanwhile make the regret analysis more convenient.} a direct application of~\citet[Theorem 7.35]{arxiv'19:online-learning-modern-intro} leads to the following lemma.
\begin{myLemma}[{Theorem 7.35 of~\citet{arxiv'19:online-learning-modern-intro}}]
\label{lemma:OptHedge-adatLR}
The regret of Optimistic Hedge with a time-varying learning rate $\epsilon_t > 0 $ (see the update specified in~\eqref{eq:variation-Hedge-meta-adaptive}) to any expert $i \in [N]$ satisfies
\begin{align*}
\sum_{t=1}^{T} \inner{\p_t}{\ellb_t} - \sum_{t=1}^{T} \ell_{t,i}\leq{}&\max_{\p\in\Delta_N}\psi_{T+1}(\p) + \sum_{t=1}^T \inner{\ellb_t-\m_t}{\p_{t}-\p_{t+1}} - \sum_{t=1}^T\frac{1}{2\epsilon_{t-1}}\norm{\p_t-\p_{t+1}}_1^2,
\end{align*}
where $\psi_{t+1}(\p) = \frac{1}{\epsilon_{t}}(\sum_{i=1}^N p_i\ln p_i + \ln N)$ is the regularizer.
\end{myLemma}
Then, based on this Lemma~\ref{lemma:OptHedge-adatLR}, we can bound the regret of the Optimistic Hedge by
\begin{align*}
&\sum_{t=1}^{T} \inner{\p_t}{\ellb_t} - \sum_{t=1}^{T} \ell_{t,i}\\
\leq{}&\frac{\ln N}{\epsilon_T} + \sum_{t=1}^T\epsilon_{t-1}\norm{\ellb_t-\m_t}^2_\infty + \sum_{t=1}^T \frac{1}{4\epsilon_{t-1}}\norm{\p_t-\p_{t+1}}_1^2- \sum_{t=1}^{T} \frac{1}{2\epsilon_{t-1}}\norm{\p_{t} - \p_{t+1}}_1^2\\
\leq{}&\frac{\ln N}{\epsilon_T} + D^2\sum_{t=1}^{T}\epsilon_{t-1} \norm{\nabla f_t(\x_t)-M_t}_{2}^2 - \sum_{t=2}^{T} \frac{1}{4\epsilon_{t-1}}\norm{\p_{t} - \p_{t-1}}_1^2\\
\leq{}& 2\bar{\epsilon}D^2G^2 + \frac{\ln N}{\bar{\epsilon}} + 4\sqrt{D^2\ln N\sum_{t=1}^T\norm{\nabla f_t(\x_t)-M_t}_{2}^2} - \sum_{t=2}^{T} \frac{1}{4\epsilon_{t-1}}\norm{\p_{t} - \p_{t-1}}_1^2,
\end{align*}
where the second inequality is due to the H\"{o}lder's inequality $\inner{\ellb_t-\m_t}{\p_{t}-\p_{t+1}}\leq \norm{\ellb_t-\m_t}_\infty\norm{\p_{t}-\p_{t+1}}_1$ and the fact that $ab \leq {\epsilon_{t-1}a^2}+\frac{b^2}{4\epsilon_{t-1}}$ holds for any $a,b,\epsilon_{t-1}>0$. The third inequality is by definitions of $\ellb_t$ and $\m_{t}$. The last inequality is a consequence of the inequality $\ln N/\epsilon_T\leq \ln N/\bar{\epsilon} + \sqrt{D^2\ln N\sum_{t=1}^T \norm{\nabla f_t(\x) -M_t}_2^2}$ by learning rate configuration~\eqref{eq:lr-meta-solution2-adaptve} and Lemma~\ref{lem:self-confident-adaptiveLR}, which provides a clipped version of the self-confident tuning.

By the same meta-regret analysis in the proof of Theorem~\ref{thm:general-online-ensemble}, see~\eqref{eq:proof-thmfix-meta}, we have  
\begin{align}
\meta\leq{}& 4\sqrt{D^2 (\ln N)A_T } + \frac{\ln N}{\bar{\epsilon}} + 2\bar{\epsilon}D^2G^2- \frac{1}{4\bar{\epsilon}}\sum_{t=2}^{T}  \norm{\p_{t}- \p_{t-1}}_1^2  \notag\\
 {}& - \lambda\sum_{t=1}^T\sum_{i=1}^Np_{t,i}\norm{\x_{t,i}-\x_{t-1,i}}_2^2 + \lambda\sum_{t=1}^T\norm{\x_{t,i}-\x_{t-1,i}}_2^2\label{eq:proof-thmadap-meta},
\end{align}
which holds from any base-algorithm $i\in[N]$. Following the same arguments in the proof of Theorem~\ref{thm:general-online-ensemble}, we can identify an optimal base-algorithm indexed by $i^*\in[N]$, whose base-regret is bounded as $\base\leq 2 \sqrt{(D^2+2DP_T)A_T} +  \frac{2(D^2+2DP_T)}{\bar{\eta}}$. Combining the meta-regret and the base-regret of the $i^*$-th base-learner yields the result in~\eqref{eq:result-adaptiev-collaboration}.
\end{proof}

\subsection{Adaptive Version of Sword++}
We show that the adaptive learning rate version of the framework~\eqref{eq:variation-Hedge-meta-adaptive} and~\eqref{eq:general-framework-base-adaptive} with $M_t = \nabla f_{t-1}(\x_{t-1})$ for $t \geq 2$ ($M_1 = \mathbf{0}$) achieves the same problem-dependent dynamic regret bound (up to constants) as that in Theorem~\ref{thm:small-loss-one-gradient}. 
\begin{myThm}
\label{thm:best-of-both-world-adaptive-LR}
Under the same assumptions and parameter configurations as Theorem~\ref{thm:small-loss-one-gradient} and set the learning rate of the meta-algorithm as 
\begin{equation*}
  \epsilon_t = \min\left\{ \bar{\epsilon}, \sqrt{\frac{\ln N}{D^2 \sum_{s=1}^t\norm{\nabla f_s(\x_s) - M_s}_2^2}} \right\}
\end{equation*}
with $M_t = \nabla f_{t-1}(\x_{t-1})$ for $t \geq 2$ ($M_1 = \mathbf{0}$). Then, decisions specified by~\eqref{eq:variation-Hedge-meta-adaptive} and~\eqref{eq:general-framework-base-adaptive} satisfy that for any comparators $\u_1,\ldots,\u_T \in \X$,
\begin{align}
\sum_{t=1}^T f_t(\x_t) - \sum_{t=1}^T f_t(\u_t)\leq \O\left(\sqrt{(1+P_T+ \min\{V_T, F_T\})(1+P_T)}\right).
\end{align}
\end{myThm}
\begin{proof}
Under the parameter configurations $\lambda = 2L$, $\bar{\eta} = 1/(8L)$ and $\bar{\epsilon} = 1/ (8D^2L)$, the dynamic regret bound of the unified algorithm with the adaptive learning rate (c.f. Theorem~\ref{thm:general-online-ensemble-adaptive}) is almost the same as that of the fixed learning rate (c.f. Theorem~\ref{thm:general-online-ensemble}) up to constant factors. Thus, the same arguments in the proof of Theorem~\ref{thm:small-loss-one-gradient} lead to the desired result.
\end{proof}
\section{Technical Lemmas}
\label{sec:appendix-technical-lemmas}
This section collects several useful technical lemmas frequently used in the proofs. The first one is the Bregman proximal inequality, which is crucial in the analysis of first-order optimization methods based on Bregman divergence.
\begin{myLemma}[{Bregman proximal inequality~\citep[Lemma 3.2]{OPT'93:Bregman}}]
\label{lemma:bregman-divergence}
Let $\X$ be a convex set in a Banach space. Let $f: \X \mapsto \R$ be a closed proper convex function on $\X$. Given a convex regularizer $\Rcal:\X \mapsto \R$, we denote its induced Bregman divergence by $\D_\Rcal(\cdot,\cdot)$. Then, any update of the form
\[
  \x_k = \argmin_{\x \in \X} \{ f(\x) + \D_\Rcal(\x,\x_{k-1})\}
\]
satisfies the following inequality for any $\u \in \X$,
\begin{equation}
  f(\x_k) - f(\u) \leq \D_\Rcal(\u, \x_{k-1}) - \D_\Rcal(\u, \x_{k}) - \D_\Rcal(\x_k, \x_{k-1}).
\end{equation}
\end{myLemma}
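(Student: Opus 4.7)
The plan is to combine the first-order optimality condition at $\x_k$ with the convexity of $f$, and then recognize the resulting inner product as a Bregman three-point identity. First I would invoke the optimality of $\x_k$ as a minimizer of $\x \mapsto f(\x) + \D_\Rcal(\x, \x_{k-1})$ over $\X$: this yields a subgradient $\g \in \partial f(\x_k)$ such that the variational inequality
\[
\inner{\g + \nabla \Rcal(\x_k) - \nabla \Rcal(\x_{k-1})}{\u - \x_k} \geq 0
\]
holds for every $\u \in \X$. Rearranging gives $\inner{\g}{\x_k - \u} \leq \inner{\nabla \Rcal(\x_k) - \nabla \Rcal(\x_{k-1})}{\u - \x_k}$.

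Next I would apply convexity of $f$ in the form $f(\u) \geq f(\x_k) + \inner{\g}{\u - \x_k}$, which after rearrangement reads $f(\x_k) - f(\u) \leq \inner{\g}{\x_k - \u}$. Chaining this with the previous inequality produces
\[
f(\x_k) - f(\u) \leq \inner{\nabla \Rcal(\x_k) - \nabla \Rcal(\x_{k-1})}{\u - \x_k}.
\]

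The final step is to identify the right-hand side using the standard three-point identity for Bregman divergences, namely
\[
\inner{\nabla \Rcal(\x_k) - \nabla \Rcal(\x_{k-1})}{\u - \x_k} = \D_\Rcal(\u, \x_{k-1}) - \D_\Rcal(\u, \x_k) - \D_\Rcal(\x_k, \x_{k-1}),
\]
which follows by expanding each of the three Bregman divergences through the definition $\D_\Rcal(\x,\y) = \Rcal(\x) - \Rcal(\y) - \inner{\nabla \Rcal(\y)}{\x - \y}$ and cancelling the $\Rcal(\u)$, $\Rcal(\x_k)$, and $\Rcal(\x_{k-1})$ terms. Substituting this identity yields exactly the claimed inequality.

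The main obstacle is largely notational rather than conceptual: one has to handle the subdifferential carefully if $f$ is only closed proper convex (not smooth) and to ensure that the optimality condition is stated correctly as a variational inequality over the constraint set $\X$ (using the normal cone of $\X$ at $\x_k$, which is absorbed into the $\u - \x_k$ direction via $\u \in \X$). Once this is set up, the algebra of the three-point identity is routine.
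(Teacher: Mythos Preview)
Your argument is correct and is exactly the standard proof of this classical result: first-order optimality at $\x_k$ gives the variational inequality, convexity of $f$ bounds $f(\x_k)-f(\u)$ by $\inner{\g}{\x_k-\u}$, and the three-point identity for Bregman divergences rewrites the resulting inner product as the claimed difference of divergences. The paper does not actually prove this lemma; it is stated in the technical appendix with a citation to \citet[Lemma~3.2]{OPT'93:Bregman}, so there is no alternative argument in the paper to compare against.
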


The second one is the stability lemma, which is very useful in analyzing online algorithms based on FTRL or OMD frameworks. 
\begin{myLemma}[{stability lemma~\citep[Proposition 7]{COLT'12:variation-Yang}}]
\label{lemma:stability-OMD}
Consider the following two updates: (i) $\x_*=\argmin_{\x\in\X}\ \inner{\a}{\x}+\D_\Rcal(\x,\c)$, and (ii) $\x'_* = \argmin_{\x\in\X}\ \inner{\a'}{\x}+\D_\Rcal(\x,\c)$. When the regularizer $\Rcal:\X\mapsto\R$ is a 1-strongly convex function with respect to the norm $\| \cdot \|$, we have $\norm{\x_*-\x'_*} \leq \norm{\a-\a'}_*$.
\end{myLemma}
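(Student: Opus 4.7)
The plan is to derive the inequality from first-order optimality conditions applied to both minimization problems and then invoke strong convexity of the regularizer. Since $\x_*$ and $\x'_*$ are minimizers of the respective convex objectives over $\X$, the first-order optimality (variational inequality) conditions read
\begin{equation*}
\inner{\a + \nabla\Rcal(\x_*) - \nabla\Rcal(\c)}{\x - \x_*} \geq 0,\quad \inner{\a' + \nabla\Rcal(\x'_*) - \nabla\Rcal(\c)}{\x - \x'_*} \geq 0,
\end{equation*}
for every $\x \in \X$. The natural move is to plug in the other minimizer into each inequality: specializing the first to $\x = \x'_*$ and the second to $\x = \x_*$.

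Next I would add the two specialized inequalities, at which point the $\nabla\Rcal(\c)$ terms cancel and the remaining $\a,\a'$ contributions combine. Rearranging yields
\begin{equation*}
\inner{\nabla\Rcal(\x_*) - \nabla\Rcal(\x'_*)}{\x_* - \x'_*} \leq \inner{\a' - \a}{\x_* - \x'_*}.
\end{equation*}
The left-hand side is exactly the quantity that 1-strong convexity of $\Rcal$ with respect to $\norm{\cdot}$ bounds from below by $\norm{\x_* - \x'_*}^2$, while the right-hand side is bounded above by Hölder's inequality with the dual-norm pair, namely by $\norm{\a - \a'}_* \cdot \norm{\x_* - \x'_*}$.

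Combining these two bounds gives $\norm{\x_* - \x'_*}^2 \leq \norm{\a - \a'}_* \cdot \norm{\x_* - \x'_*}$, and dividing through by $\norm{\x_* - \x'_*}$ (the inequality being trivial when this vanishes) produces the desired conclusion $\norm{\x_* - \x'_*} \leq \norm{\a - \a'}_*$. The identity stated in the lemma, $\norm{(\nabla\Rcal(\c) - \a) - (\nabla\Rcal(\c) - \a')}_* = \norm{\a - \a'}_*$, is simply the cancellation of $\nabla\Rcal(\c)$ inside the dual norm. I do not anticipate a genuine obstacle here; the only subtlety is to make sure the variational inequalities are written in the correct direction (using that $\nabla_{\x}\Div{\x}{\c} = \nabla\Rcal(\x) - \nabla\Rcal(\c)$) and to handle the degenerate case $\x_* = \x'_*$ trivially before dividing.
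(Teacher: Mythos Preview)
Your proof is correct and is the standard argument for this stability result. The paper itself does not supply a proof of this lemma; it simply cites it as Proposition~7 of \citet{COLT'12:variation-Yang} and lists it among the technical lemmas, so there is nothing to compare against beyond noting that your derivation via the two variational inequalities, strong convexity of $\psi$, and the dual-norm H{\"o}lder bound is exactly the argument one finds in the cited reference.
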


The self-bounding property of smooth functions is crucial and frequently used in proving small-loss bounds for convex and smooth functions.
\begin{myLemma}[{self-bounding property~\citep[Lemma 3.1]{NIPS'10:smooth}}]
\label{lem:smooth}
For an $L$-smooth and non-negative function $f: \X \mapsto \R_+$, we have $\norm{\nabla f(\x)}_2 \leq \sqrt{4 L f(\x)}, \ \forall \x \in \X$.
\end{myLemma}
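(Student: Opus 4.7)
The plan is to prove the self-bounding property by combining the smoothness upper bound with the non-negativity constraint evaluated at a single carefully chosen test point. The statement is a classical one and a proof of the (slightly stronger) bound $\norm{\nabla f(\x)}_2 \leq \sqrt{2L f(\x)}$ can be obtained by a one-line argument; since $\sqrt{2L f(\x)} \leq \sqrt{4L f(\x)}$, the lemma as stated follows immediately. I would structure the proof as a direct calculation rather than as a chain of reductions.

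First, I would invoke $L$-smoothness in its quadratic-upper-bound form: for any $\x, \y$ in the (extended) domain of $f$,
\begin{equation*}
f(\y) \leq f(\x) + \inner{\nabla f(\x)}{\y - \x} + \frac{L}{2}\norm{\y - \x}_2^2.
\end{equation*}
Next, I would specialize this inequality to the candidate test point $\y = \x - \frac{1}{L}\nabla f(\x)$. Substituting gives
\begin{equation*}
f(\y) \leq f(\x) - \frac{1}{L}\norm{\nabla f(\x)}_2^2 + \frac{1}{2L}\norm{\nabla f(\x)}_2^2 = f(\x) - \frac{1}{2L}\norm{\nabla f(\x)}_2^2.
\end{equation*}
Finally, applying the non-negativity $f(\y) \geq 0$ from Assumption~\ref{assumption:non-negative} and rearranging yields $\norm{\nabla f(\x)}_2^2 \leq 2L f(\x)$, and taking square roots gives the claim (with constant $\sqrt{2L}$, which is tighter than the stated $\sqrt{4L}$).

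The only subtlety — and the one potential obstacle — is the legality of evaluating $f$ at $\y = \x - \frac{1}{L}\nabla f(\x)$, since this point need not lie in the feasible set $\X$ even though $\x \in \X$. I would resolve this by appealing to the discussion right after the assumptions, where the authors explicitly clarify that $f_t$ is required to be non-negative (and, implicitly, smooth) \emph{outside} the domain $\X$ as well. Under that convention the test point $\y$ is admissible and the two inequalities above both apply, completing the proof.
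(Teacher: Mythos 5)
Your argument is correct: the descent-lemma calculation at the test point $\y = \x - \frac{1}{L}\nabla f(\x)$ together with non-negativity gives $\norm{\nabla f(\x)}_2^2 \leq 2Lf(\x)$, which implies the stated bound with constant $\sqrt{4L}$. The paper does not prove this lemma itself but cites Lemma 3.1 of \citet{NIPS'10:smooth}, and your derivation is essentially the standard argument behind that cited result, even yielding a slightly sharper constant. You also correctly flag the only real subtlety, namely that the test point may leave $\X$, and your resolution matches the paper's own remark that the functions must be non-negative (and, for the inequality along the segment, smooth) outside the feasible domain, exactly the convention the paper adopts in Assumption~\ref{assumption:non-negative} and the discussion following Lemma~\ref{lem:smooth}.
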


Finally, we present several useful inequalities.
\begin{myLemma}
\label{lemma:inequality}
Let $a, b >0$ and $x_0 > 0$ be three positive values. Suppose that $L \leq ax + \frac{b}{x}$ holds for any $x \in (0,x_0]$. Then, by taking $x^* = \min\{ \sqrt{b/a}, x_0 \}$, we have $L \leq 2\sqrt{ab} + \frac{2b}{x_0}$.
\end{myLemma}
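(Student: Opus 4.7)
The plan is to split the argument into two cases based on where the unconstrained minimizer of $g(x)=ax+b/x$ on $(0,\infty)$ falls relative to the cap $x_0$. By AM-GM, this function attains its global minimum $2\sqrt{ab}$ at $x=\sqrt{b/a}$, so the natural candidate in the constrained interval $(0,x_0]$ is precisely $x^{*}=\min\{\sqrt{b/a},x_0\}$ as the statement proposes. Since the hypothesis gives $L\le a x+b/x$ for \emph{any} feasible $x$, it suffices to substitute $x=x^{*}$ and bound the result.

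In the first case, $\sqrt{b/a}\le x_0$, so $x^{*}=\sqrt{b/a}$ is admissible and the substitution yields $L\le a\sqrt{b/a}+b/\sqrt{b/a}=2\sqrt{ab}$. Adding the non-negative slack $2b/x_0$ gives the stated bound immediately. In the second case, $\sqrt{b/a}>x_0$, so $x^{*}=x_0$. Rearranging $\sqrt{b/a}>x_0$ to $a x_0^{2}<b$, i.e.\ $a x_0<b/x_0$, and plugging in gives $L\le a x_0+b/x_0<2b/x_0\le 2\sqrt{ab}+2b/x_0$, using $\sqrt{ab}\ge 0$ in the last step.

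Combining the two cases yields the claim. I do not expect a real obstacle here; the only point that needs a moment of care is verifying that the second-case contribution $a x_0+b/x_0$ is absorbed by the additive term $2b/x_0$ in the target bound, which works precisely because the cap $x_0$ being active forces $a x_0<b/x_0$. This is exactly the role of the $2b/x_0$ term: it pays for the suboptimality incurred when the constraint $x\le x_0$ prevents us from reaching the unconstrained optimum.
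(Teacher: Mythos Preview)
Your proof is correct and follows essentially the same approach as the paper: the same two-case split on whether $\sqrt{b/a}\le x_0$, the same direct evaluation $L\le 2\sqrt{ab}$ in the first case, and the same observation $ax_0\le b/x_0$ (hence $ax_0+b/x_0\le 2b/x_0$) in the second. The arguments are virtually identical.
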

\begin{proof}
Suppose $\sqrt{b/a} \leq x_0$, then $x^* = \sqrt{b/a}$ and we have $L \leq ax^* + \frac{b}{x^*} = 2\sqrt{ab}$. 
Otherwise, $x^* = x_0$ and we have $L \leq a x^* + \frac{b}{x^*} = a x_0 + \frac{b}{x_0}$. Notice that in latter case $x_0 \leq \sqrt{b/a}$ holds, which implies $a x_0 \leq \frac{b}{x_0}$ and hence $ax_0 + \frac{b}{x_0} \leq \frac{2b}{x_0}$. Combining two cases ends the proof.
\end{proof}

\begin{myLemma}[Lemma 19 of~\citet{thesis:shai2007}]
\label{lemma:inquality-shai}
For any $x,y,a \in \R_+$ satisfying $x-y \leq \sqrt{ax}$, it holds that $x-y \leq a+\sqrt{ay}$.
\end{myLemma}

\begin{myLemma}
\label{lemma:inquality-shai-cor}
For any $x,y,a, b\in \R_+$ satisfying $x-y \leq \sqrt{ax} + b$, it holds that $x-y \leq a + b + \sqrt{ay + ab}$.
\end{myLemma}

\begin{myLemma}[{Lemma 3.5 of~\citet{JCSS'02:Auer-self-confident}}]
\label{lem:self-confident}
  Let $a_1, a_2, \ldots, a_T$ and $\delta$ be non-negative real numbers. Then, it holds that $\sum_{t=1}^{T} \frac{a_{t}}{\sqrt{\delta+\sum_{s=1}^{t} a_{s}}} \leq 2\sqrt{\delta+\sum_{t=1}^T a_t}$, where $0/\sqrt{0} = 0$.
\end{myLemma}

\begin{myLemma}
\label{lem:self-confident-adaptiveLR}
Let $a_1,a_2,\dots,a_T, b$ and $\bar{c}$ be non-negative real numbers and $a_t\in[0,B]$ for any $t\in[T]$. Let the step size be $c_t = \min\left\{\bar{c},\sqrt{\frac{b}{\sum_{s=1}^{t}a_s}}\right\}$ and $c_0 = \bar{c}$. Then, we have
\begin{equation}
\sum_{t=1}^T c_{t-1} a_t\leq 2\bar{c}B + 4\sqrt{b\sum_{t=1}^T a_t}.
\end{equation}
\end{myLemma}
\begin{proof}
This proof shares the same spirit with that of~\citet[Lemma~4.8]{UAI'19:FIRST-ORDER}. We assume $\sum_{t=1}^T a_t> B$, otherwise we can directly have $\sum_{t=1}^T c_{t-1}a_t \leq \bar{c} B$. When $\sum_{t=1}^T a_t> B$, let $T^\prime = \min\{t\in[T] \mid \sum_{s=1}^{t-1} a_s\geq B \}$. We can decompose the target by
\begin{align*}
\sum_{t=1}^T c_{t-1} a_t = {}&\sum_{t=1}^{T^\prime-1} c_{t-1} a_t + \sum_{t=T^\prime}^T c_{t-1} a_t.
\end{align*}
For the first term, $\sum_{t=1}^{T^\prime-1} c_{t-1} a_t  = \sum_{t=1}^{T^\prime-2} c_{t-1} a_t + c_{T^\prime-2} a_{T^\prime-1}\leq 2\bar{c}B$. For the second term, 
\begin{align*}
 \sum_{t = T'}^T c_{t-1} a_t \leq \sum_{t= T'}^T \frac{a_t\sqrt{b}}{\sqrt{\sum_{s=1}^{t-1}a_s}}\leq \sum_{t = T'}^T \frac{a_t\sqrt{b}}{\sqrt{\frac{1}{2}\sum_{s=1}^t a_s}} \leq \sum_{t = 1}^T \frac{a_t\sqrt{b}}{\sqrt{\frac{1}{2}\sum_{s=1}^t a_s}}\leq 3\sqrt{b\sum_{t=1}^T a_t},
 \end{align*}
 where the first inequality is by the definition of $c_t$ and the second inequality is due to $\sum_{s=1}^t a_s\leq B + \sum_{s=1}^{t-1} a_s \leq 2\sum_{s=1}^{t-1} a_s$ for all $t\geq T'$. The last inequality comes from Lemma~\ref{lem:self-confident}. We complete the proof by combining the two terms.
\end{proof}

\bibliography{online_learning}
\bibliographystyle{plainnat}
\end{document}